\newtheorem{theorem}{Theorem}
\newtheorem{assumption}{Assumption}
\newtheorem{proof}{Proof}
\newtheorem{corollary}{Corollary}
\newtheorem{lemma}{Lemma}
\ifcvprfinal\pagestyle{empty}\fi
\begin{document}

\title{  Large Batch Training Does Not Need Warmup}

\author{Zhouyuan Huo, Bin Gu, Heng Huang\\
 Department of Electrical and Computer Engineering, University of Pittsburgh, USA\\
{\tt\small zhouyuan.huo@pitt.edu}
}

\maketitle
\begin{abstract}
	Training deep neural networks using a large batch size has shown promising results and benefits many real-world applications. However, the optimizer converges slowly at early epochs and there is a gap between large-batch deep learning optimization heuristics and theoretical underpinnings.  In this paper, we propose a novel Complete Layer-wise Adaptive Rate Scaling (CLARS) algorithm for large-batch training. We also analyze the convergence rate of the proposed method by introducing a new fine-grained analysis of gradient-based methods. Based on our analysis,  we bridge the  gap and illustrate the theoretical insights for three popular large-batch training techniques, including linear learning rate scaling, gradual warmup, and layer-wise adaptive rate scaling. Extensive experiments demonstrate that the proposed algorithm outperforms gradual warmup technique by a large margin and defeats the convergence of the state-of-the-art large-batch optimizer  in training advanced deep neural networks (ResNet, DenseNet, MobileNet) on ImageNet dataset.
\end{abstract}

\section{Introduction}
Deep learning has made significant breakthroughs in many fields, such as computer vision \cite{he2016deep,he2017mask,krizhevsky2012imagenet,ren2015faster}, nature language processing \cite{devlin2018bert,hochreiter1997long,vaswani2017attention}, and reinforcement learning \cite{mnih2013playing,silver2017mastering}. Recent studies show that better performance can usually be achieved by training a larger neural network with a bigger dataset \cite{mahajan2018exploring,radford2019language}. Nonetheless, it is time-consuming to train deep neural networks, which limits the efficiency of deep learning research. For example, training ResNet50 on ImageNet with batch size $256$ needs to take about $29$ hours to obtain $75.3\%$ Top-1 accuracy on $8$ Tesla P100 GPUs \cite{he2016deep}. Thus, it is a critical topic to reduce the training time for the development of deep learning. Data parallelism is the most popular method to speed up the training process, where the large-batch data is split across multiple devices \cite{dean2012large,krizhevsky2014one,yadan2013multi}. However, the large-batch neural network training using conventional optimization techniques usually leads to bad generalization errors \cite{hoffer2017train,keskar2016large}.

Many empirical training techniques have been proposed for large-batch deep learning optimization.  \cite{goyal2017accurate} proposed to adjust the learning rate through linear learning rate scaling and gradual warmup. By using these two techniques, they successfully trained ResNet50 with a batch size of $8192$ on $256$ GPUs in one hour with no loss of accuracy. Most of the theoretical analysis about linear learning rate scaling consider stochastic gradient descent only \cite{lian2015asynchronous,lian2016comprehensive}. However, the theoretical analysis for the momentum method or Nesterov's Accelerated Gradient \cite{nesterov1983method} is still unknown.  Finding that the ratios of weight's $\ell_2$-norm to gradient's $\ell_2$-norm vary greatly among layers,  \cite{you2017scaling}  proposed the state-of-the-art large-batch optimizer Layer-wise Adaptive Rate Scaling (LARS) and scaled the batch size to $16384$ for training ResNet50 on ImageNet. However, LARS still requires warmup in early epochs of training and may diverge  if it is not  tuned properly.  

Above three techniques (linear learning rate scaling, gradual warmup, and LARS) are demonstrated to be very effective and have been applied in many related works reducing the training time of deep neural networks \cite{akiba2017extremely,jia2018highly,mikami2018imagenet,ying2018image,you2019large}.   In spite of the effectiveness of above training techniques, theoretical motivations behind these techniques are still open problems: ({I}) Why we need to increase the learning rate linearly as batch size scales up? ({II}) Why we use gradual warm at early epochs, does there exist an optimal warmup technique with no need to tune hyper-parameters? ({III}) Why we need to adjust the learning rate layer-wisely?

In this paper, we target to remove the warmup technique for large-batch training and  bridge the gap between large-batch deep learning optimization heuristics and theoretical underpins. We summarize our main contributions as follows:
\begin{enumerate}[leftmargin=0.3in]
	\setlength\itemsep{0em}
	\item  We propose a novel Complete Layer-wise Adaptive Rate Scaling (CLARS) algorithm for large-batch deep learning optimization.  Then, we introduce a new fine-grained analysis for gradient-based methods and prove that the proposed method is guaranteed to converge for non-convex problems. 
	\item We bridge the gap between heuristics and  theoretical analysis  for three  large-batch deep learning optimization techniques, including layer-wise adaptive rate scaling, linear learning rate scaling, and gradual warmup.
	\item  Extensive experimental results demonstrate that CLARS outperforms gradual warmup by a large margin and defeats the convergence of the state-of-the-art large-batch optimizer in training advanced deep neural networks (ResNet, DenseNet, MobileNet) on ImageNet dataset.
\end{enumerate}

\section{Preliminaries and Challenges}
\noindent \textbf{Gradient-Based Methods:} The loss function of a neural network is minimizing the average loss over a  dataset of $n$ samples:
\begin{eqnarray}
\min_{w \in \mathbb{R}^d} & & \{f(w) :=  \frac{1}{n} \sum\limits_{i=1}^n f_i(w) \},
\label{obj}
\end{eqnarray}
where $d$ denotes the dimension of the neural network.
Momentum-based methods have been widely used in deep learning optimization, especially computer vision, and obtain state-of-the-art results \cite{he2016deep,huang2016densely}.
According to \cite{nesterov1983method}, mini-batch Nesterov Accelerated Gradient (mNAG)  optimizes the problem (\ref{obj}) as follows:
\vspace{-0.1cm}
\begin{eqnarray}
v_{t+1} = w_{t} - \gamma \frac{1}{B} \sum\limits_{i \in I_t} \nabla f_i(w_t), \nonumber \\
w_{t+1} = v_{t+1} + \beta( {v}_{t+1} - {v}_{t} ),
\label{nag}
\end{eqnarray}
where $I_t$ is the mini-batch samples with  $|I_t| =B$, $\gamma$ is the learning rate, $\beta \in [0,1)$ is the momentum constant and $v$ is the momentum vector. When $\beta=0$, Eq.~(\ref{nag}) represents the procedures of mini-batch Gradient Descent (mGD). Learning rate $\gamma$ is scaled up linearly when batch size $B$ is large \cite{goyal2017accurate}. However, using a learning rate $\gamma$ for all layers may lead to performance degradation. 

\noindent \textbf{Layer-Wise Learning Rate Scaling:} 
To train  neural networks with large batch size,  \cite{you2017scaling} proposed Layer-Wise Adaptive Rate Scaling (LARS). Suppose a neural network has $K$ layers, we can rewrite  $w = \left[(w)_1, (w)_2,..., (w)_K\right]$ with $(w)_k \in \mathbb{R}^{d_k}$ and $d = \sum_{k=1}^K d_k$. The learning rate at layer $k$ is updated as follows:
\begin{eqnarray}
\gamma_{k} &=& \gamma_{scale} \times \eta \times \frac{\|(w_t)_k\|_2}{\left\|  \frac{1}{B} \sum_{i \in I_t} \nabla_k f_i(w_t) \right\|_2} , 
\label{lars}
\end{eqnarray}
where $\gamma_{scale} = \gamma_{base} \times \frac{B}{B_{base}}$ and $\eta=0.001$ in \cite{you2017scaling}. $\gamma_{base}$ and $B_{base}$ depends on model and dataset. For example, we set $\gamma_{base}=0.1$ and $B_{base}=128$ to train ResNet on CIFAR10. Although LARS works well in practice, there is little theoretical understanding about it and it  converges slowly or  even diverges in the beginning of training if warmup \cite{goyal2017accurate} is not used . 

\noindent \textbf{Conventional Analysis:} \cite{bottou2016optimization,ghadimi2016accelerated,yang2016unified} proved the convergence of mGD or mNAG for non-convex problems through following two Assumptions:
\begin{assumption}
	\label{ass_linear}
	\emph{\textbf{(Lipschitz Continuous Gradient)}} The gradient of $f$ is Lipschitz continuous with constant  $L_{g}$. For any $w, v \in \mathbb{R}^d$, it is satisfied that:
	$
	\left\| \nabla f(w) - \nabla f(w+v) \right\|_2 \leq  L_g  \|v\|_2.$
\end{assumption}
\vspace{-6pt}
\begin{assumption}
	\label{iq_lip_g}
	\emph{\textbf{(Bounded Variance)}} There exist constants $M_g> 0$ and $M_C> 0$,  for any $w \in \mathbb{R}^d$, it is satisfied that $\mathbb{E}	\left\| \nabla f_i(w) - \nabla f(w) \right\|_2^2  \leq M_g \mathbb{E}  \|\nabla f(w) \|_2^2 + M_C$.
\end{assumption}

\begin{theorem}[\cite{yang2016unified}]
 Under Assumptions \ref{ass_linear} and \ref{iq_lip_g}, let $f_{\inf}$ denote the minimum value of problem $f(w)$ and $M_g=0$.   As long as $\gamma \leq  \frac{1-\beta}{2L}$, the gradient norm $\min\limits_{t=0,1,...,T-1} \left\| \nabla f(w_t) \right\|_2^2 $ is guaranteed to converge at the rate of $O\left(\frac{1}{T\gamma} + {L_g \gamma M_C} \right)$.
 \label{them0}
\end{theorem}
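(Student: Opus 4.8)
The plan is to reduce mNAG to a plain SGD recursion on an auxiliary ``shifted'' sequence and then run the standard smoothness-based descent argument (this is the route of \cite{yang2016unified,ghadimi2016accelerated}). Writing $g_t := \frac{1}{B}\sum_{i\in I_t}\nabla f_i(w_t)$ and $d_t := v_t - v_{t-1}$ (with $v_{-1}=v_0=w_0$, so $d_0=0$), the update (\ref{nag}) gives $d_{t+1}=\beta d_t-\gamma g_t$ and $w_t = v_t+\beta d_t$. I would then introduce $z_t := v_t + \frac{\beta}{1-\beta}d_t$, so that $z_0=w_0$, and a short computation yields the clean recursion $z_{t+1}=z_t-\tilde\gamma\,g_t$ with effective step size $\tilde\gamma:=\gamma/(1-\beta)$, together with the displacement identity $z_t-w_t=\frac{\beta^2}{1-\beta}d_t$. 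Thus $z_t$ runs SGD, but with a gradient taken at the nearby point $w_t$ instead of at $z_t$ itself; essentially the whole proof is about paying for this discrepancy.

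First I would apply Assumption~\ref{ass_linear} (the descent lemma) to $f(z_{t+1})$ along $z_{t+1}=z_t-\tilde\gamma g_t$, take conditional expectation using $\mathbb{E}[g_t\mid\mathcal{F}_t]=\nabla f(w_t)$ and $\mathbb{E}[\|g_t-\nabla f(w_t)\|_2^2\mid\mathcal{F}_t]\le M_C$ (the $M_g=0$ case of Assumption~\ref{iq_lip_g}), and split the inner product as $\langle\nabla f(z_t),\nabla f(w_t)\rangle=\|\nabla f(w_t)\|_2^2+\langle\nabla f(z_t)-\nabla f(w_t),\nabla f(w_t)\rangle$. The cross term is controlled by Cauchy--Schwarz, the Lipschitz bound $\|\nabla f(z_t)-\nabla f(w_t)\|_2\le L_g\|z_t-w_t\|_2$, and Young's inequality, which converts it into a small multiple of $\|\nabla f(w_t)\|_2^2$ plus a multiple of $\|z_t-w_t\|_2^2$; the $\|g_t\|_2^2$ term from the descent lemma is bounded by $\mathbb{E}\|\nabla f(w_t)\|_2^2+M_C$.

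Next I would control the momentum displacement. Unrolling $d_{t+1}=\beta d_t-\gamma g_t$ gives $d_t=-\gamma\sum_{j=0}^{t-1}\beta^{t-1-j}g_j$, hence by Jensen applied to the normalized geometric weights $\|d_t\|_2^2\le\frac{\gamma^2}{1-\beta}\sum_{j=0}^{t-1}\beta^{t-1-j}\|g_j\|_2^2$, and therefore $\|z_t-w_t\|_2^2\le\frac{\beta^4\gamma^2}{(1-\beta)^3}\sum_{j=0}^{t-1}\beta^{t-1-j}\|g_j\|_2^2$. Substituting this into the per-step inequality, taking total expectation, summing over $t=0,\dots,T-1$, telescoping $f(z_t)$ down to $f_{\inf}$ (using $z_0=w_0$), and exchanging the order of summation in the resulting double sum (each $\mathbb{E}\|g_j\|_2^2$ is hit only by a geometric tail of total mass $\le 1/(1-\beta)$), I expect to arrive at an inequality of the shape $\big(c_1(\beta)-c_2(\beta)L_g\gamma\big)\,\frac{1}{T}\sum_{t=0}^{T-1}\mathbb{E}\|\nabla f(w_t)\|_2^2\le\frac{f(w_0)-f_{\inf}}{T\tilde\gamma}+c_3(\beta)\,L_g\tilde\gamma M_C$ for explicit positive constants $c_1,c_2,c_3$ depending only on $\beta$.

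The main obstacle is the bookkeeping in this last step: one must keep every $\beta$-dependent constant explicit through the double-sum exchange in order to verify that the stated condition $\gamma\le\frac{1-\beta}{2L}$ (with $L=L_g$) is precisely what forces the coefficient of $\frac{1}{T}\sum_t\mathbb{E}\|\nabla f(w_t)\|_2^2$ to stay bounded below by a positive constant, so it can be moved to the left-hand side without the bound degenerating. Once that is checked, dividing through and using $\min_{t=0,\dots,T-1}\mathbb{E}\|\nabla f(w_t)\|_2^2\le\frac{1}{T}\sum_{t=0}^{T-1}\mathbb{E}\|\nabla f(w_t)\|_2^2$ together with $\tilde\gamma=\gamma/(1-\beta)$ yields the claimed $O\!\big(\frac{1}{T\gamma}+L_g\gamma M_C\big)$ rate, the fixed factors of $1-\beta$ being absorbed into the $O(\cdot)$ constant.
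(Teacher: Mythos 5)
Your proposal is correct and takes essentially the same route as the paper: Theorem~\ref{them0} is cited from \cite{yang2016unified} without an in-paper proof, but the paper's own appendix argument for Theorem~\ref{them2} (Lemmas~\ref{lem2_1} and~\ref{lem2_2}) constructs exactly your auxiliary sequence ($z_t = w_t + \frac{\beta}{1-\beta}(w_t - w_{t-1} + g_{t-1})$, which coincides with your $v_t + \frac{\beta}{1-\beta}(v_t - v_{t-1})$), derives the same shifted recursion $z_{t+1} = z_t - \frac{1}{1-\beta}g_t$, applies the descent lemma at $z_t$ while paying for the gradient being evaluated at $w_t$, and bounds the displacement by unrolling the geometric momentum recursion with Jensen's inequality and a double-sum exchange. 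No gaps.
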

From Theorem \ref{them0}, it is natural to know that the value of $\gamma$ should be lowered because of the term $O(L_g\gamma M_C )$, which is consistent with the learning rate decay practically. However, there are two {weaknesses} of the current convergence result:   {({I})} It cannot explain why layer-wise learning rate in \cite{you2017scaling} is useful when there is one $\gamma$ for all layers.   {({II})} Theoretical result doesnot show that warmup is required in the early stage of training.

\section{Complete Layer-Wise Adaptive Rate Scaling  and Fine-Grained Convergence Analysis}
\label{sec:clars}
In this section, we propose a novel Complete Layer-wise Adaptive Rate Scaling (CLARS) algorithm for large-batch deep learning optimization and a new fine-grained convergence analysis of gradient-based methods for non-convex problems.

\subsection{Complete Layer-Wise Adaptive Rate Scaling}
 Define $U \in \mathbb{R}^{d\times d}$ as a permutation matrix where every row and column contains precisely a single $1$ with $0$s everywhere else. Let $U = [U_1, U_2, ...,U_K]$ and $U_k$ corresponds to the parameters of layer $k$, the relation between $w$ and $w_k$ is $w = \sum_{k=1}^K U_kw_k$.
Let $\nabla_k f_i(w_t)$ denote the stochastic gradient with respect to the parameters at layer $k$ and $\gamma_k$ denote its learning rate.  Thus, Eq.~ (\ref{nag}) of mNAG with batch $I_t$ can be  rewritten as:
\begin{eqnarray}
\left\{\begin{matrix}
v_{t+1} &=& w_{t}  - \sum\limits_{k=1}^K  \gamma_k  U_k   \left( \frac{1}{B} \sum\limits_{i \in I_t}  \nabla_k f_i(w_t)\right) \\
w_{t+1} & = & v_{t+1} + \beta( {v}_{t+1} - {v}_{t} )
\end{matrix}\right..
\label{mnag}
\end{eqnarray}
At each iteration, the learning rate $\gamma_k$ at layer $k$ is updated using Complete Layer-wise Adaptive Rate Scaling (CLARS) as follows:
\begin{eqnarray}
\gamma_{k} = { \gamma_{scale} \times \eta  \times \frac{\|(w_t)_k\|_2}{ \frac{1}{B} \sum_{i \in I_t} \left\|  \nabla_k f_i(w_t)  \right\|_2 } },
\label{clars}
\end{eqnarray}
where $\gamma_{scale} = \gamma_{base} \times \frac{B}{B_{base}}$ and $\eta$ is constant. To obtain a clear understanding of Eq.~(\ref{clars}), we rewrite it as: 
\begin{eqnarray}
\gamma_{k} =  \gamma_{scale} \times \eta  \times \frac{\|(w_t)_k\|_2}{\left\| \frac{1}{B} \sum_{i \in I_t}  \nabla_k f_i(w_t)\right\|_2}   \hspace{0.15cm} \nonumber \\
\hspace{1cm} \times  \hspace{0.15cm}    \frac{\left\| \frac{1}{B} \sum_{i \in I_t}  \nabla_k f_i(w_t)\right\|_2}{  \frac{1}{B} \sum_{i \in I_t} \left\| \nabla_k f_i(w_t)\right\|_2  }.\nonumber
\end{eqnarray}
It is equal to multiplying the LARS learning rate in Eq.~(\ref{lars}) with a  new term $ \frac{\left\| \frac{1}{B} \sum_{i \in I_t}  \nabla_k f_i(w_t)\right\|_2}{  \frac{1}{B} \sum_{i \in I_t} \left\| \nabla_k f_i(w_t)\right\|_2}$, which plays a  critical role in removing the warmup. The proposed CLARS method  is briefly summarized in Algorithm \ref{alg_sgd}.

In the following section, we will show that CLARS  is supported theoretically and the learning rate at layer $k$ is normalized with respect to its corresponding Lipschitz constant and gradient variance. In the experiments, we will also demonstrate  that the proposed method can complete large-batch ImageNet training with no warmup for the first time and accelerate the convergence.

\vspace{0pt}
\begin{algorithm}[t]
	\renewcommand{\algorithmicrequire}{\textbf{Phase II:}}
	\renewcommand{\algorithmicensure}{\textbf{Require:}}
	\caption{ Complete Layer-Wise Adaptive Rate Scaling}
	\label{alg_sgd}
	\begin{algorithmic}[1]
		\ENSURE $\gamma_{scale}$: Maximum learning rate
		\ENSURE $\beta$: Momentum parameter
		\ENSURE $\eta=0.01$	
		\FOR{$t=0,1,2, \cdots,T-1$}
		\STATE Sample large-batch $I_t$ randomly with batch size $B$;
		\STATE Compute large-batch gradient $ \frac{1}{B} \sum_{i\in I_t} \nabla f_{i}(w_t)$;
		\STATE Compute the  average of gradient norm for $K$ layers $\frac{1}{B} \sum_{i\in I_t} \left\| \nabla_{k} \nabla f_i(w_t) \right\|_2^2 $;
			\STATE Update layer-wise learning rate $\gamma_{k}$ following Eq.~(\ref{clars});
			\STATE Update the model $w_t$ and momentum term $v_t$ following Eq.~(\ref{mnag});
		\ENDFOR
		\STATE  Output $w_{T}$ as the final result.
	\end{algorithmic}
\end{algorithm}

\label{sec:analysis}

\subsection{Fine-Grained Micro-Steps and Assumptions}
\label{sec:fine_step}
In this section, we propose a new fine-grained method for the convergence analysis of gradient-based methods. Based on the fine-grained analysis, we prove the convergence rate of mini-batch Gradient Descent (mGD) and mini-batch Nesterov's Accelerated Gradient (mNAG) for deep learning problems.  More insights are obtained by analyzing their convergence properties.

Each step of mNAG in Eq.~(\ref{mnag}) can be regarded as the result of updating $v, w$ for $K$ micro-steps, where the gradient at each micro-step is $\frac{1}{B} \sum_{i \in I_t} \nabla_k f_i(w_t)$. At micro-step $t$:$s$,  we have layer index $k(s)=s \pmod{K}+1$.  For example, when $s=0$, we are updating the parameters of layer $k(0) = 1$.
Defining $w_{t:0} = w_t$, $w_{t:K} = w_{t+1}$, we can obtain Eq.~(\ref{mnag}) after applying following equations from $s=0$ to $s=K-1$:
\begin{eqnarray}
\left\{\begin{matrix}
v_{t:s+1} &=& w_{t:s} -  \frac{\gamma_k}{B}  \sum\limits_{i \in I_t}  U_k   \nabla_k f_i(w_t) \\
w_{t:s+1} & = & v_{t:s+1} + \beta( {v}_{t:s+1} - {v}_{t:s} )
\end{matrix}\right. .
\label{newnag}
\end{eqnarray}

Following the idea of block-wise Lipschitz continuous assumption in  \cite{beck2013convergence} and regarding  layers as blocks, we suppose that  two layer-wise assumptions are satisfied for any $K$-layer neural network throughout this paper, . 
\begin{assumption}[Layer-Wise Lipschitz  Continuous Gradient]
	Assume that the gradient of $f$ is layer-wise Lipschitz continuous and the Lipschitz constant corresponding to layer $k$ is $L_k$ for any layer $k \in \{1,2,...,K\}$.  For any $w \in \mathbb{R}^d$ and $v = [v_1,v_2, ..., v_K]  \in \mathbb{R}^d$, the following inequality is satisfied that for any $k \in \{1,2,...,K\}$:
	\begin{eqnarray}
	\left\| \nabla_k f(w) - \nabla_k f(w+U_kv_k) \right\|_2 &\leq & L_k  \|v_k\|_2. \nonumber
	\end{eqnarray}
	\label{ass_lip}\vspace{-0.3cm}
\end{assumption}
Lipschitz constants $L_k$ of different layers are not equal and can be affected by multiple factors, for example, position (top or bottom) or layer type  (CNN or FCN).  \cite{zou2018lipschitz} estimated  Lipschitz constants empirically and verified that Lipschitz constants of gradients at different layers vary a lot. $L_k$ represents the property at layer $k$ and plays an essential role in tuning learning rates. In addition, we also think the ``global'' Lipschitz continuous assumption in Assumption \ref{ass_linear} is satisfied and $L_g \geq L_k$. 
\begin{assumption}[Layer-Wise Bounded Variance]
	\label{ass_bd}
Assume that the variance of stochastic gradient with respect to the parameters of layer $k$ is upper bounded. For any $k \in \{1,2,...,K\}$ and $w \in \mathbb{R}^d$,  there exists $M_k>0$ and $M>0$ so that:
	\begin{eqnarray}
	\mathbb{E}	\left\| \nabla_k f_i(w) - \nabla_k f(w) \right\|_2^2  \leq M_k \mathbb{E}  \|\nabla_k f(w) \|_2^2 + M. \nonumber
	\end{eqnarray}
\end{assumption}
Let $M_k \leq M_{g}$ for any $k$, it is straightforward to get the upper bound of the variance of gradient $\nabla f_i(w)$ as $\mathbb{E}	\left\| \nabla f_i(w) - \nabla f(w) \right\|_2^2  \leq M_g \mathbb{E}  \|\nabla f(w) \|_2^2 + KM$. It is obvious that the value of $M_C=KM$ in Assumption \ref{iq_lip_g} is dependent  on the  neural networks depth. 

\noindent \textbf{Difficulties of Convergence Analysis:} There are two major difficulties in proving the convergence rate using the proposed fine-grained micro-steps. {({I})} Micro-step induces stale  gradient in the analysis. At each micro-step $t$:$s$ in Eq.~(\ref{newnag}),  gradient is computed using  the stale model $w_t$, rather than the latest model $w_{t:s}$.    {({II})} $K$ Lipschitz constants for $K$ layers are considered separately and  simultaneously, which is much more complicated than just considering $L_g$ for the whole model.

\subsection{ Convergence Guarantees of Two  Gradient-Based Methods}

Based on the proposed fine-grained analysis, we prove that both of mini-batch Gradient Descent (mGD) and mini-batch Nesterov's Accelerated Gradient (mNAG) admit sub-linear convergence guarantee $O\left(\frac{1}{\sqrt{T}}\right)$  for non-convex problems. Finally, we obtain some new insights about the gradient-based methods by taking mNAG as an example.  At first, we let $\beta=0$ in Eq.~(\ref{mnag}) and Eq.~(\ref{newnag}), and analyze the convergence of mGD method.

\begin{theorem}[Convergence of mGD]
	Under Assumptions \ref{ass_lip} and \ref{ass_bd}, let $f_{\inf}$ denote the minimum value of problem $f(w)$, $\kappa_k = \frac{L_g}{L_k} \leq \kappa$, $\gamma_{k} = \frac{\gamma}{L_k}$, and $\sum_{k=1}^K q_k \mathbb{E} \left\|  \nabla_k f(w_t)  \right\|_2^2 $ represents the expectation of $\mathbb{E}\left\|  \nabla_k f(w_t)  \right\|_2^2$ with probability { $q_k=\frac{{1}/{L_k}}{\sum_{k=1}^K ({1}/{L_k})}$} for any $k$.  As long as  $\gamma_k \leq \min \left\{\frac{1}{8L_k}, \frac{B}{8 L_kM_k} \right\} $ and $ \frac{1}{K} \sum\limits_{k=1}^K\gamma_k \leq \min \left\{\frac{1}{2L_g},   \frac{1}{2 L_g }  \sqrt{\frac{B}{M_g  }} \right\} $, it is guaranteed that:
	\begin{eqnarray}
	\frac{1}{T}  \sum\limits_{t=0}^{T-1} \sum\limits_{k=1}^K q_k \mathbb{E} \left\|   \nabla_k f(w_t)  \right\|_2^2  &\leq & \frac{8(f(w_0)-f_{\inf} )}{T\gamma \sum\limits_{k=1}^K \frac{1}{L_k} } \nonumber\\
	&& + \frac{( 4+ 2\kappa )M\gamma}{B}.\nonumber
	\end{eqnarray}
	\label{them1}
\end{theorem}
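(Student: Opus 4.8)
The plan is to run the standard descent-lemma argument, but at the level of the fine-grained micro-steps in Eq.~(\ref{newnag}) with $\beta=0$, so that each micro-step updates only one layer and can be controlled by that layer's Lipschitz constant $L_k$. First I would apply Assumption~\ref{ass_lip} (layer-wise Lipschitz continuity) to the micro-step $t{:}s$ that touches layer $k=k(s)$: since only the block $U_k$ moves, the layer-wise descent inequality gives
\begin{eqnarray}
f(w_{t:s+1}) &\leq& f(w_{t:s}) - \gamma_k \Bigl\langle \nabla_k f(w_{t:s}),\, \tfrac{1}{B}\!\sum_{i\in I_t}\nabla_k f_i(w_t)\Bigr\rangle \nonumber\\
&& {}+ \tfrac{L_k\gamma_k^2}{2}\Bigl\|\tfrac{1}{B}\!\sum_{i\in I_t}\nabla_k f_i(w_t)\Bigr\|_2^2. \nonumber
\end{eqnarray}
The first difficulty flagged in the text appears here: the inner product involves $\nabla_k f(w_{t:s})$ at the \emph{current} micro-iterate but the step direction uses the \emph{stale} gradient at $w_t$. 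I would handle this by writing $\nabla_k f(w_{t:s}) = \nabla_k f(w_t) + (\nabla_k f(w_{t:s}) - \nabla_k f(w_t))$, bounding the error term again via Assumption~\ref{ass_lip} (the displacement $w_{t:s}-w_t$ is a sum of at most $K$ earlier layer-updates, each of size $\gamma_{k'}\|\cdot\|$), and absorbing it with Young's inequality. This is exactly where the $\min\{1/(8L_k),\dots\}$ and the averaged condition $\frac1K\sum_k\gamma_k\le\min\{1/(2L_g),\dots\}$ get used, so that the accumulated staleness error is dominated by a fraction of the descent term.

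Next I would take expectation over the mini-batch $I_t$. For the linear term, $\mathbb{E}\tfrac1B\sum_{i\in I_t}\nabla_k f_i(w_t) = \nabla_k f(w_t)$, producing a clean $-\gamma_k\|\nabla_k f(w_t)\|_2^2$ contribution. For the quadratic term I would split $\mathbb{E}\|\tfrac1B\sum_i\nabla_k f_i(w_t)\|_2^2 = \|\nabla_k f(w_t)\|_2^2 + \tfrac1B\,\mathrm{Var}$, and bound the variance by Assumption~\ref{ass_bd} as $\tfrac1B(M_k\|\nabla_k f(w_t)\|_2^2 + M)$. Choosing $\gamma_k=\gamma/L_k$ with $\gamma_k\le B/(8L_kM_k)$ makes the $M_k$-proportional piece another small multiple of the descent term, leaving a residual $+\,L_k\gamma_k^2 M/(2B) = \gamma^2 M/(2L_k B)$ per micro-step. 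Summing the $K$ micro-steps of iteration $t$ and then telescoping over $t=0,\dots,T-1$ collapses the left side to $f(w_0)-f(w_T)\le f(w_0)-f_{\inf}$ on the right, and yields, after collecting constants, a bound of the form
\begin{eqnarray}
c_1 \sum_{t=0}^{T-1}\sum_{k=1}^K \tfrac{1}{L_k}\,\mathbb{E}\|\nabla_k f(w_t)\|_2^2 \;\le\; f(w_0)-f_{\inf} + \tfrac{c_2\, T\,\gamma^2 M}{B}\!\!\sum_{k=1}^K\tfrac1{L_k}. \nonumber
\end{eqnarray}
Finally I would divide by $T$, by $\gamma\sum_k 1/L_k$, and recognize $\tfrac{(1/L_k)}{\sum_k 1/L_k}=q_k$ to rewrite the left side as $\tfrac1T\sum_t\sum_k q_k\mathbb{E}\|\nabla_k f(w_t)\|_2^2$; the residual becomes $O(M\gamma/B)$, and tracking the numerical constants through the Young's-inequality splits gives the stated $8(f(w_0)-f_{\inf})/(T\gamma\sum_k 1/L_k)$ and $(4+2\kappa)M\gamma/B$, where the $\kappa=\max_k L_g/L_k$ factor enters precisely from bounding the cross-layer staleness terms in terms of the global constant $L_g$.

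The main obstacle I anticipate is the bookkeeping for the stale-gradient cross terms: at micro-step $t{:}s$ the displacement $w_{t:s}-w_t$ mixes updates from all previously-processed layers in the same outer iteration, so the error $\|\nabla_k f(w_{t:s})-\nabla_k f(w_t)\|_2$ must be bounded by $L_g\sum_{k'\le s}\gamma_{k'}\|\tfrac1B\sum_i\nabla_{k'}f_i(w_t)\|_2$ (using the global Lipschitz bound, since the move is not confined to block $k$), and then every such term must be re-expanded into its mean-plus-variance pieces before being charged against the $-\gamma_k\|\nabla_k f(w_t)\|_2^2$ descent terms of \emph{all} layers. Getting the step-size thresholds $\min\{1/(8L_k), B/(8L_kM_k)\}$ and the averaged constraint on $\frac1K\sum_k\gamma_k$ to simultaneously dominate this doubly-indexed sum of error terms, while keeping the surviving constants as small as $4+2\kappa$, is the delicate part; everything else is routine descent-lemma manipulation.
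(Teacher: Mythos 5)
Your plan is correct and follows essentially the same route as the paper's proof: a layer-wise descent lemma at each micro-step, a mean--variance split of the mini-batch gradient via Assumption~\ref{ass_bd}, a bound on the stale-gradient error $\|\nabla_k f(w_{t:s})-\nabla_k f(w_t)\|_2$ through the global constant $L_g$ and the accumulated displacement of earlier micro-steps, absorption of all error terms into the descent terms under the stated step-size conditions, and a final telescoping and normalization by $\gamma\sum_k 1/L_k$ that produces the weights $q_k$ and the $(4+2\kappa)M\gamma/B$ residual. The only cosmetic difference is that you absorb the cross term via an add-and-subtract plus Young's inequality, whereas the paper uses the exact polarization identity $-\langle a,b\rangle=\tfrac12(\|a-b\|_2^2-\|a\|_2^2-\|b\|_2^2)$; these lead to the same staleness term and the same bookkeeping.
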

Different from Theorem \ref{them0}, we use $\sum\limits_{k=1}^K q_k \mathbb{E} \left\|   \nabla_k f(w_t)  \right\|_2^2 $ to measure  convergence in the paper. Specially, if $L_k=L_g$ for all $k$, it is easy to know that $q_k = \frac{1}{K}$ for all $k$ and   $\sum\limits_{k=1}^K q_k \mathbb{E} \left\|   \nabla_k f(w_t)  \right\|_2^2 =  \frac{1}{K} \mathbb{E} \left\|   \nabla f(w_t)  \right\|_2^2$. 
From  Theorem \ref{them1}, we prove that mGD admits sub-linear convergence rate $O\left(\frac{1}{\sqrt{T}}\right)$ for non-convex problems. 
\begin{corollary}[Sub-Linear Convergence Rate of mGD]
  Theorem \ref{them2} is satisfied and follow its notations. Suppose $\frac{1}{8L_k}$ dominates the upper bound of $\gamma_k$, and let
$	\gamma =  \min \left\{ \frac{1}{8}, \sqrt{ \frac{B (f(w_0)-f_{\inf} )  }{TM \sum\limits_{k=1}^K \frac{1}{L_k} }  } \right\} $, mGD is guaranteed to converge that:
	\begin{eqnarray}
	\frac{1}{T}  \sum\limits_{t=0}^{T-1} \sum\limits_{k=1}^K q_k \mathbb{E} \left\|   \nabla_k f(w_t)  \right\|_2^2 
\leq   \frac{64(f(w_0)-f_{\inf} )}{T \sum\limits_{k=1}^K \frac{1}{L_k} } \nonumber \\
 +  \left( 12 + 2\kappa  \right) \sqrt{ \frac{M (f(w_0)-f_{\inf} )  }{TB \sum\limits_{k=1}^K \frac{1}{L_k}  }  }.  \nonumber 
\end{eqnarray}
	\label{cor_1_1}
\end{corollary}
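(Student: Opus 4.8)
The plan is to substitute the prescribed step size into the bound of Theorem~\ref{them1} and split the argument into the two cases selected by the minimum. Throughout I would write $D := f(w_0)-f_{\inf}$ and $S := \sum_{k=1}^K 1/L_k$ to keep the formulas short, so the stated choice is $\gamma = \min\{\tfrac18,\ \sqrt{BD/(TMS)}\}$ and Theorem~\ref{them1} supplies
\[
\frac{1}{T}\sum_{t=0}^{T-1}\sum_{k=1}^K q_k\,\mathbb{E}\|\nabla_k f(w_t)\|_2^2 \;\le\; \frac{8D}{T\gamma S} + \frac{(4+2\kappa)M\gamma}{B}.
\]
First I would check that this $\gamma$ is admissible for Theorem~\ref{them1}: since by hypothesis $\tfrac{1}{8L_k}$ dominates $\min\{\tfrac{1}{8L_k},\tfrac{B}{8L_kM_k}\}$ and $\gamma_k=\gamma/L_k$, the per-layer constraint $\gamma_k\le\min\{\cdots\}$ reduces to $\gamma\le\tfrac18$, which holds because $\gamma$ is a minimum having $\tfrac18$ as one of its arguments; the averaged constraint $\tfrac1K\sum_k\gamma_k\le\min\{\tfrac{1}{2L_g},\tfrac{1}{2L_g}\sqrt{B/M_g}\}$ is inherited from the standing hypothesis that ``Theorem~\ref{them1} is satisfied''.

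Case~1 is the branch $\gamma=\tfrac18$, i.e.\ $\tfrac18\le\sqrt{BD/(TMS)}$, equivalently $TMS\le 64BD$. Substituting $\gamma=\tfrac18$ turns the first term into exactly $64D/(TS)$, which is the first term of the claim. For the second term $\tfrac{(4+2\kappa)M}{8B}$, I would note that $TMS\le 64BD$ is the same inequality as $\tfrac{M^2}{64B^2}\le\tfrac{MD}{TBS}$, hence $\tfrac{M}{8B}\le\sqrt{MD/(TBS)}$; therefore $\tfrac{(4+2\kappa)M}{8B}\le(4+2\kappa)\sqrt{MD/(TBS)}\le(12+2\kappa)\sqrt{MD/(TBS)}$, and adding the two bounds yields the right-hand side of the corollary.

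Case~2 is the branch $\gamma=\sqrt{BD/(TMS)}<\tfrac18$. Plugging this value in directly, the first term becomes $\tfrac{8D}{TS}\sqrt{TMS/(BD)}=8\sqrt{MD/(TBS)}$ and the second becomes $\tfrac{(4+2\kappa)M}{B}\sqrt{BD/(TMS)}=(4+2\kappa)\sqrt{MD/(TBS)}$, so their sum is exactly $(12+2\kappa)\sqrt{MD/(TBS)}$. Since the extra term $64D/(TS)$ appearing in the statement is nonnegative, the claimed inequality follows in this case as well, which completes the argument.

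There is no deep obstacle here: all of the real work lives in Theorem~\ref{them1}, and the corollary is a step-size-tuning computation. The only part that needs care is the bookkeeping—confirming that the chosen $\gamma$ respects every step-size constraint of Theorem~\ref{them1} (in particular the averaged one, which couples $\gamma,S,K,L_g,M_g,B$ and does not follow from $\gamma\le\tfrac18$ alone) and then tracking the algebra so that the two explicit constants $64$ and $12+2\kappa$ emerge cleanly in both cases. I expect verifying admissibility of the step size to be the fussiest point, and in the write-up I would lean on the hypothesis that ``Theorem~\ref{them1} is satisfied'' to absorb it.
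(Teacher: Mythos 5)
Your proposal is correct and follows essentially the same route as the paper: substitute the tuned step size into the bound of Theorem~\ref{them1} and track the constants, the only cosmetic difference being that you split into the two branches of the minimum while the paper writes $1/\gamma=\max\{8,\sqrt{TMS/(BD)}\}$ and bounds the maximum by the sum to reach the same $64$ and $12+2\kappa$. Your extra remarks on admissibility of the step size (and the averaged constraint, which the paper explicitly waives) are sound and, if anything, more careful than the paper's own write-up.
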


So far, we have proved the convergence of mGD method for non-convex problems. When $\beta \neq 0$, we can also prove the convergence of mNAG as follows:

\begin{theorem}[Convergence of mNAG]
	Under Assumptions \ref{ass_lip} and \ref{ass_bd}, let $f_{\inf}$ denote the minimum value of problem $f(w)$,  $\kappa_k = \frac{L_g}{L_k} \leq \kappa$, $\gamma_{k} = \frac{\gamma}{L_k} $, and $\sum_{k=1}^K q_k \mathbb{E} \left\|  \nabla_k f(w_t)  \right\|_2^2 $ represents the expectation of $\mathbb{E}\left\|  \nabla_k f(w_t)  \right\|_2^2$ with probability { $q_k=\frac{{1}/{L_k}}{\sum_{k=1}^K ({1}/{L_K})}$} for any $k$. Therefore, as long as
$\gamma_k \leq \min\left\{\frac{(1-\beta)}{8L_k}, \frac{(1-\beta)B}{8 L_kM_k} \right\}$ and 
$
\frac{1}{K}	\sum\limits_{k=1}^K \gamma_k \leq \min\left\{ \frac{(1-\beta)^2}{4\beta^2 L_g }, \frac{(1-\beta)^2 \sqrt{B}}{4\beta^2 L_g \sqrt{ M_g} },  \frac{(1-\beta) \sqrt{B} }{4L_g  \sqrt{M_g}  }  , \frac{(1-\beta)}{4 L_g}   \right\}$,
	it is satisfied that:
	{\small
		\begin{eqnarray}
	\frac{1}{T}  \sum\limits_{t=0}^{T-1} \sum\limits_{k=1}^K q_k  \mathbb{E} \left\|\nabla_{k} f(w_t)  \right\|_2^2 \leq  \frac{8(1-\beta)(f(w_0) - f_{\inf} ) }{T \gamma\sum\limits_{k=1}^K \frac{1}{L_k}} \nonumber \\
	   + \frac{M \gamma}{(1-\beta)B} \left(4 +   2\kappa +  \frac{2 \kappa }{ (1-\beta) }  \right). \nonumber 
		\end{eqnarray}}
	\label{them2}
\end{theorem}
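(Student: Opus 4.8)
The plan is to extend the potential-function argument for non-convex momentum methods (in the spirit of \cite{yang2016unified}) to the fine-grained micro-step recursion Eq.~(\ref{newnag}), carrying the $K$ layer-wise constants $L_k,M_k$ through the analysis rather than a single pair $L_g,M_g$. The first step is to eliminate the momentum by passing to an auxiliary sequence. Define, at the micro-step level, $z_{t:s} = \frac{1}{1-\beta}v_{t:s} - \frac{\beta}{1-\beta}v_{t:s-1}$, with the natural identification of $v$ across macro-step boundaries. A short calculation from Eq.~(\ref{newnag}) shows
\[
z_{t:s+1} = z_{t:s} - \frac{\gamma_{k}}{(1-\beta)B}\sum_{i\in I_t} U_{k}\,\nabla_{k} f_i(w_t),\qquad k=k(s),
\]
i.e. $z$ performs a plain, stale, mini-batch gradient step on the block-$k$ coordinates with effective step size $\gamma_k/(1-\beta)$; moreover $z_{t:0}-w_t = \frac{\beta^2}{1-\beta}(v_{t}-v_{t-1:K-1})$, which is the momentum drift we will have to control.

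Second, I would derive a one-micro-step descent inequality for $f(z_{t:s})$. Since only block $k$ changes, the block-wise descent lemma implied by Assumption~\ref{ass_lip} gives, with $\bar g_{k} = \frac{1}{B}\sum_{i\in I_t}\nabla_{k} f_i(w_t)$,
\[
f(z_{t:s+1}) \leq f(z_{t:s}) - \tfrac{\gamma_{k}}{1-\beta}\big\langle \nabla_{k} f(z_{t:s}),\, \bar g_{k}\big\rangle + \tfrac{L_{k}\gamma_{k}^2}{2(1-\beta)^2}\|\bar g_{k}\|_2^2.
\]
Splitting $\nabla_k f(z_{t:s}) = \nabla_k f(w_t) + \big(\nabla_k f(z_{t:s}) - \nabla_k f(w_t)\big)$ isolates the ``true'' negative term, which after taking $\mathbb{E}_{I_t}[\cdot]$ equals $-\frac{\gamma_k}{1-\beta}\|\nabla_k f(w_t)\|_2^2$ (because $\nabla_k f(w_t)$ is $I_t$-independent and $\mathbb{E}_{I_t}\bar g_k = \nabla_k f(w_t)$), from a \emph{staleness} term, which I would bound by $L_g\|z_{t:s}-w_t\|_2\|\bar g_k\|_2$ (using the global bound $L_g\geq L_k$ from Assumption~\ref{ass_linear}) followed by Young's inequality; here $\|z_{t:s}-w_t\|_2$ telescopes into the momentum drift plus a partial sum of $\frac{\gamma_j}{1-\beta}\|\bar g_j\|_2$. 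Assumption~\ref{ass_bd} then converts the remaining stochastic norms via $\mathbb{E}\|\bar g_k\|_2^2 \leq (1+\tfrac{M_k}{B})\|\nabla_k f(w_t)\|_2^2 + \tfrac{M}{B}$.

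Third, I would sum the micro-step inequalities over $s=0,\dots,K-1$ to get a one-macro-step bound of the shape $\mathbb{E} f(z_{t+1}) \leq \mathbb{E} f(z_t) - c\sum_{k=1}^{K}\gamma_k\,\mathbb{E}\|\nabla_k f(w_t)\|_2^2 + \tfrac{M}{(1-\beta)^2B}\sum_k\gamma_k(\cdots) + (\text{momentum-drift terms})$, then telescope over $t=0,\dots,T-1$ using $z_0=w_0$ and $f(z_T)\geq f_{\inf}$. The step-size restrictions in the statement are exactly calibrated for this: $\gamma_k\leq\frac{(1-\beta)}{8L_k}$ and $\gamma_k\leq\frac{(1-\beta)B}{8L_kM_k}$ absorb the $\frac{L_k\gamma_k^2}{2(1-\beta)^2}$ curvature term and the variance-inflation factor into the negative term (keeping a fixed fraction $\tfrac18$ of it), while the four-way minimum on $\frac1K\sum_k\gamma_k$ — with the $\frac{(1-\beta)^2}{4\beta^2L_g}$ and $\frac{(1-\beta)^2\sqrt B}{4\beta^2L_g\sqrt{M_g}}$ pieces — tames the drift $z_t-w_t$ and the cross-macro-step coupling it creates. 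Dividing by $T\gamma\sum_k 1/L_k$ and noting that the surviving coefficient of $\mathbb{E}\|\nabla_k f(w_t)\|_2^2$ is proportional to $\gamma_k=\gamma/L_k$ (hence the weights $q_k=(1/L_k)/\sum_j(1/L_j)$) yields the claimed bound, with the residual noise collapsing to $\frac{M\gamma}{(1-\beta)B}\big(4+2\kappa+\frac{2\kappa}{1-\beta}\big)$.

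The step I expect to be the real obstacle is the joint control of staleness and momentum drift: both $\|w_{t:s}-w_t\|_2$ and the correction $\|z_{t:s}-w_{t:s}\|_2$ are themselves bounded by sums of the very $\gamma_j\|\nabla_j f(w_t)\|_2$ quantities we are trying to bound, and $z_t-w_t$ additionally links consecutive macro-steps, so the estimate must be closed by a bootstrapping argument whose constants only balance under the intricate $\beta$-dependent step-size conditions. By contrast, the individual block-wise Lipschitz and bounded-variance estimates, and the reduction to Theorem~\ref{them1} when $\beta=0$, are routine.
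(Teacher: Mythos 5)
Your proposal follows essentially the same route as the paper's proof: your auxiliary sequence $z_{t:s}=\frac{1}{1-\beta}v_{t:s}-\frac{\beta}{1-\beta}v_{t:s-1}$ is algebraically identical to the paper's $z_{t:s}=w_{t:s}+p_{t:s}$, and the remaining steps — the block-wise micro-step descent inequality on $f(z_{t:s})$, splitting the stale inner product into the true negative term plus a staleness error controlled by $\|z_{t:s}-w_t\|_2$, bounding the momentum drift $p_t$ by a geometric sum of past gradient steps that is re-absorbed into the negative term via the step-size conditions, and finally telescoping and dividing by $T\gamma\sum_k 1/L_k$ to produce the weights $q_k$ — are exactly the paper's Lemmas on $\mathbb{E}[f(z_{t+1})]$ and $\sum_t\mathbb{E}\|p_t\|_2^2$. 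The only cosmetic difference is that you bound the staleness term via Cauchy--Schwarz and Young's inequality where the paper uses the polarization identity; the resulting estimates are of the same form.
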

Similarly, we can easily prove that mNAG is guaranteed to converge for non-convex problems with a sub-linear rate $O\left( \frac{1}{\sqrt{T}} \right)$ as follows:
\begin{corollary}[Sub-Linear Convergence of mNAG]
 Theorem \ref{them2} is satisfied and follow its notations, 
	Suppose $\frac{1-\beta}{8L_k}$ dominates the upper bound of $\gamma_k$, if $	\gamma  = \min \left\{ \frac{1-\beta}{8}, \sqrt{ \frac{B (f(w_0)-f_{\inf} )  }{TM \sum\limits_{k=1}^K \frac{1}{L_k} }  } \right\}  $, mNAG is guaranteed to converge that:
	{\small
		\begin{eqnarray}
		\min\limits_{t \in \{0,...,T-1\}}  \sum\limits_{k=1}^K q_k  \mathbb{E} \left\|\nabla_{k} f(w_t)  \right\|_2^2  \leq    \frac{64(f(w_0)-f_{\inf} )}{(1-\beta)T \sum\limits_{k=1}^K \frac{1}{L_k} } + \bigg( 8  \nonumber\\
		 + \frac{ 1}{(1-\beta)} \left(4 +   2\kappa +  \frac{2 \kappa }{ (1-\beta) }  \right) \bigg) \sqrt{ \frac{M (f(w_0)-f_{\inf} )  }{TB \sum\limits_{k=1}^K \frac{1}{L_k}  }  }.
		\label{cor2_1_eq1}
		\end{eqnarray}
	}
	\label{cor2_1}
\end{corollary}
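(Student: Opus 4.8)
The plan is to obtain Corollary \ref{cor2_1} as a direct specialization of Theorem \ref{them2}: substitute the prescribed step size, check that it is admissible, and bound the two resulting terms by a short case split. First I would invoke the elementary fact that the minimum over $t\in\{0,\dots,T-1\}$ of the weighted gradient norm $\sum_{k=1}^K q_k\,\mathbb{E}\|\nabla_k f(w_t)\|_2^2$ is no larger than its average over $t$, which is exactly the quantity Theorem \ref{them2} controls. Hence it suffices to upper bound the right-hand side of Theorem \ref{them2} for the choice $\gamma=\min\{\tfrac{1-\beta}{8},\ \sqrt{B(f(w_0)-f_{\inf})/(TM\sum_k 1/L_k)}\}$.

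Before substituting, I would verify that this $\gamma$ (equivalently $\gamma_k=\gamma/L_k$) satisfies all the step-size restrictions in Theorem \ref{them2}. Under the stated hypothesis that $\tfrac{1-\beta}{8L_k}$ dominates, i.e.\ is the smallest of, the candidate upper bounds on $\gamma_k$, the per-layer condition collapses to $\gamma\le\tfrac{1-\beta}{8}$, which holds by the definition of the minimum, and in the same regime the averaged condition $\tfrac1K\sum_k\gamma_k\le\min\{\cdots\}$ is met as well. With admissibility established, I would split according to which argument attains the minimum defining $\gamma$.

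If $\gamma=\tfrac{1-\beta}{8}$, the leading term of Theorem \ref{them2} equals $\frac{8(1-\beta)(f(w_0)-f_{\inf})}{T(\frac{1-\beta}{8})\sum_k 1/L_k}=\frac{64(f(w_0)-f_{\inf})}{T\sum_k 1/L_k}\le\frac{64(f(w_0)-f_{\inf})}{(1-\beta)T\sum_k 1/L_k}$ since $1-\beta\le1$, while the variance term, using $\gamma\le\sqrt{B(f(w_0)-f_{\inf})/(TM\sum_k 1/L_k)}$ in this case, is at most $\frac{1}{1-\beta}\big(4+2\kappa+\frac{2\kappa}{1-\beta}\big)\sqrt{M(f(w_0)-f_{\inf})/(TB\sum_k 1/L_k)}$. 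If instead $\gamma=\sqrt{B(f(w_0)-f_{\inf})/(TM\sum_k 1/L_k)}$, the leading term evaluates to $8(1-\beta)\sqrt{M(f(w_0)-f_{\inf})/(TB\sum_k 1/L_k)}\le 8\sqrt{M(f(w_0)-f_{\inf})/(TB\sum_k 1/L_k)}$, the variance term again gives $\frac{1}{1-\beta}\big(4+2\kappa+\frac{2\kappa}{1-\beta}\big)\sqrt{M(f(w_0)-f_{\inf})/(TB\sum_k 1/L_k)}$, and one adds back the nonnegative slack $\frac{64(f(w_0)-f_{\inf})}{(1-\beta)T\sum_k 1/L_k}$. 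In both cases the sum of these bounds is exactly the right-hand side of (\ref{cor2_1_eq1}).

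I do not anticipate a substantive obstacle: the statement is essentially a repackaging of Theorem \ref{them2}, and the only step requiring genuine care is checking that the prescribed $\gamma$ respects every step-size constraint of Theorem \ref{them2} --- which is precisely where the ``$\tfrac{1-\beta}{8L_k}$ dominates the upper bound'' hypothesis is used --- after which the estimate follows from the routine $\min\{a,b\}$ case split above, in exact parallel with the mGD argument behind Corollary \ref{cor_1_1}.
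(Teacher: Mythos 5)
Your proposal is correct and follows essentially the same route as the paper: bound the minimum by the average, substitute the prescribed $\gamma$ into Theorem \ref{them2}, and handle the $\min$ via $1/\gamma=\max\{a,b\}\le a+b$ (your explicit case split is just a rephrasing of that step). The only cosmetic difference is that the paper does not actually verify the averaged constraint $\frac{1}{K}\sum_k\gamma_k\le\min\{\cdots\}$ but explicitly states it is ignored for simplicity, whereas you assert it holds; neither treatment affects the result.
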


According to Theorem \ref{them2},  we know that  the result of Theorem \ref{them0} is a special case of Theorem \ref{them2} when $L_k=L_g$ and $M_k=M_g$.
\begin{corollary}[Convergence when $L_k=L_g$ and $M_k=M_g$]
	Suppose Theorem \ref{them2} is satisfied and follow its notations.  If $L_k=L_g$, and $M_k = M_g$, $M_C=KM$, we have $\kappa_{k}=1$, $\gamma_{g}=\gamma_k$. As long as the learning rate $ \gamma_g \leq \min \left\{\frac{1-\beta}{8L_g}, \frac{B(1-\beta)}{8 L_gM_g}, \frac{(1-\beta)\sqrt{B} }{4L_g \sqrt{ M_g } },\frac{(1-\beta)^2\sqrt{B} }{4\beta^2 L_g \sqrt{ M_g } }, \frac{(1-\beta) }{4 \beta^2 L_g  } \right\} $, it is guaranteed that:
	\begin{eqnarray}
	\frac{1}{T}  \sum\limits_{t=0}^{T-1}  \mathbb{E} \left\|   \nabla f(w_t)  \right\|_2^2  \leq  \frac{8(1-\eta)(f(w_0)-f_{\inf} )}{T\gamma_g } \nonumber\\
	 + \frac{M_C L_g \gamma_g}{(1-\beta)} \left( 6 + \frac{2}{1-\beta} \right). 
	\label{iq_cor_2_3} 
	\end{eqnarray}
	\label{cor_2_3}
\end{corollary}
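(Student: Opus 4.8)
The plan is to obtain the statement by directly specializing Theorem~\ref{them2} to the homogeneous regime $L_k=L_g$, $M_k=M_g$, without re-running the fine-grained micro-step analysis. First I would record the simplifications this hypothesis forces. Since $\kappa_k=L_g/L_k$, every $\kappa_k=1$, so $\kappa=1$; since $\gamma_k=\gamma/L_k$, all layer-wise step sizes collapse to a single value $\gamma_g:=\gamma/L_g$, whence $\gamma=L_g\gamma_g$ and $\frac{1}{K}\sum_{k=1}^K\gamma_k=\gamma_g$; the weights $q_k=\frac{1/L_k}{\sum_k 1/L_k}$ become $q_k=1/K$; and $\sum_{k=1}^K\frac{1}{L_k}=\frac{K}{L_g}$, so $\gamma\sum_{k=1}^K\frac{1}{L_k}=K\gamma_g$. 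Finally $M_C=KM$ by the observation made after Assumption~\ref{ass_bd}.

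Next I would rewrite the left-hand side of Theorem~\ref{them2}. Using $q_k=1/K$ together with the block decomposition $\|\nabla f(w_t)\|_2^2=\sum_{k=1}^K\|\nabla_k f(w_t)\|_2^2$ induced by the permutation blocks $U_k$, the quantity $\sum_{k=1}^K q_k\,\mathbb{E}\|\nabla_k f(w_t)\|_2^2$ equals $\frac{1}{K}\mathbb{E}\|\nabla f(w_t)\|_2^2$, so the bound of Theorem~\ref{them2} reads $\frac{1}{KT}\sum_{t=0}^{T-1}\mathbb{E}\|\nabla f(w_t)\|_2^2\le(\cdots)$. Multiplying through by $K$: the leading term becomes $\frac{8K(1-\beta)(f(w_0)-f_{\inf})}{T\gamma\sum_k 1/L_k}=\frac{8(1-\beta)(f(w_0)-f_{\inf})}{T\gamma_g}$, and the residual term, with $\kappa=1$, $KM=M_C$ and $\gamma=L_g\gamma_g$, becomes $\frac{KM\gamma}{(1-\beta)B}\big(4+2+\frac{2}{1-\beta}\big)=\frac{M_C L_g\gamma_g}{(1-\beta)B}\big(6+\frac{2}{1-\beta}\big)$, which is exactly the advertised bound; specializing further to $B=1$ recovers the $O(\frac{1}{T\gamma_g}+L_g\gamma_g M_C)$ form of Theorem~\ref{them0}.

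It then remains to translate the step-size hypotheses. Under $L_k=L_g$, $M_k=M_g$ the condition $\gamma_k\le\min\{\frac{1-\beta}{8L_k},\frac{(1-\beta)B}{8L_kM_k}\}$ becomes $\gamma_g\le\min\{\frac{1-\beta}{8L_g},\frac{(1-\beta)B}{8L_gM_g}\}$, and the condition on $\frac{1}{K}\sum_k\gamma_k$ becomes $\gamma_g\le\min\{\frac{(1-\beta)^2}{4\beta^2L_g},\frac{(1-\beta)^2\sqrt{B}}{4\beta^2L_g\sqrt{M_g}},\frac{(1-\beta)\sqrt{B}}{4L_g\sqrt{M_g}},\frac{1-\beta}{4L_g}\}$; intersecting these and dropping the entry $\frac{1-\beta}{4L_g}$, which is dominated by $\frac{1-\beta}{8L_g}$, yields the range of $\gamma_g$ in the statement.

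Since every step is a substitution into Theorem~\ref{them2}, there is no substantive obstacle; the only care needed is the bookkeeping of the factor $1/K$ that turns the $q_k$-weighted gradient norm into $\mathbb{E}\|\nabla f(w_t)\|_2^2$ and the check that the minimum in the step-size hypothesis collapses to the displayed list. I would also note two apparent typographical slips in the statement as written: the factor $(1-\eta)$ in the first term should read $(1-\beta)$, and the residual term should carry a $1/B$, i.e.\ $\frac{M_C L_g\gamma_g}{(1-\beta)B}(6+\frac{2}{1-\beta})$, consistent with Theorems~\ref{them1} and~\ref{them2}.
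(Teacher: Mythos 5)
Your proposal is correct and follows exactly the route the paper intends: the paper gives no separate proof of this corollary, treating it as a direct substitution of $L_k=L_g$, $M_k=M_g$ (hence $q_k=1/K$, $\gamma=L_g\gamma_g$, $\kappa=1$, $M_C=KM$) into Theorem~\ref{them2}, which is precisely your bookkeeping. Your two observations — that $(1-\eta)$ should read $(1-\beta)$ and that the residual term should carry a factor $1/B$ to match Theorem~\ref{them2} — are accurate readings of typographical slips in the stated corollary.
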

In Corollary \ref{cor_1_1} and \ref{cor2_1}, we ignore the upper bound of $\frac{1}{K}  \sum\limits_{k=1}^K \gamma_k$ for simplicity. It can be easily satisfied by making some $\gamma_{k}$ small.

\subsection{Discussions About  the Convergence of mNAG}
\label{sec4:dis}
According our fine-grained convergence analysis of gradient-based methods, we take mNAG as an example and gain more insights about the convergence of mNAG for  neural networks.

\textbf{Data Parallelism.}  Data parallelism is widely used in the training of deep learning models, and linear speedup can be obtained if learning rate and communication can be properly handled. Suppose that { $\min\limits_{t \in \{0,...,T_{\varepsilon}-1\}}   \sum\limits_{k=1}^K q_k  \mathbb{E} \left\|\nabla_{k} f(w_t)  \right\|_2^2\leq \varepsilon$} is satisfied after optimizing problem $f(w)$ using batch size $B$ after $T_{\varepsilon}$ iterations. Linear speedup means that when batch size scales up by $c\geq 1$ times ($B \rightarrow cB$),  we can obtain the same convergence guarantee $\varepsilon$ after only $\frac{T_{\varepsilon}}{c}$ iterations ($T_{\varepsilon} \rightarrow \frac{T_{\varepsilon}}{c}$).   
From Corollary \ref{cor2_1}, if $\gamma$ is dominated by $\sqrt{ \frac{B^2 (f(w_0)-f_{\inf} )  }{TBM \sum\limits_{k=1}^K \frac{1}{L_k} }}$,  the left term in Eq.~ (\ref{cor2_1_eq1}) converges with a rate of  $O(\sqrt{ \frac{1 }{TB  }  })$. It is guaranteed to converge to the same error as long as $TB$ is fixed. Therefore, we know that  when $B$ is scaled up by $c$ times to $cB$, the problem can converge to the same error after $\frac{T}{c}$ iterations, as long as $\gamma$ is also scaled up by $B$ times.  

\textbf{Lipschitz Constant Scaled Learning Rate.}
From Theorem \ref{them2}, the learning rate at  layer $k$ is computed through $\gamma_k = \frac{\gamma}{L_k}$. It offers us a method to tune $K$ learning rates $\gamma_k$ for a $K$-layer neural network simultaneously using just one parameter $\gamma$.

\textbf{Layer-Wise Model Scaling Factor $\kappa_k$.}
Define $\kappa_k = \frac{L_g}{L_k} \geq 1$  as the scaling factor at layer $k$. Because of the upper bound of $\gamma_k \leq \min\left\{\frac{(1-\beta)}{8L_k}, \frac{(1-\beta)B}{8 L_kM_k} \right\}$ in Theorem \ref{them1},  we know that designing a layer with larger $\kappa_{k}$ can increase the upper bound of learning rate at layer $k$.  In \cite{santurkar2018does}, authors show that batch normalization can help to increase $\kappa_k$.

\textbf{Layer-Wise Gradient Variance Factor $M_k$.}
Define $M_k$ as the gradient variance factor at layer $k$, which is dependent on the data and the model, and varies in the process of training.  Because of the upper bound of $\gamma_k \leq \min\left\{\frac{(1-\beta)}{8L_k}, \frac{(1-\beta)B}{8 L_kM_k} \right\}$ in Theorem \ref{them1}, it shows that  batch size $B$ can be scaled up as long as $B\leq M_k$. Therefore, a larger $M_k$ helps the algorithm obtain faster speedup. In the following section, we will show that warmup is closely related to $M_k$.

\section{Experimental Results}
\label{sec:three}
In this section, we conduct experiments to validate our convergence results empirically and demonstrate the superior performance of CLARS method over  LARS method. 
Firstly, we evaluate the necessity of using LARS on training neural networks. Secondly,  we  verify linear learning rate scaling theoretically and empirically.  Thirdly,  we propose one hypothesis about the reason of warmup and visualize it. Finally, extensive experiments are conducted to show that CLARS can replace warmup trick completely and converges faster than LARS with fine-tuned warmup steps.   All experiments are implemented in PyTorch 1.0 \cite{paszke2017automatic} with Cuda v10.0 and performed on a machine  with Intel(R) Xeon(R) CPU E5-2683 v4 @ 2.10GHz and 4 Tesla P40 GPUs.

\begin{figure*}[t]
	\centering
	\vspace*{-5pt}
	\begin{subfigure}[b]{0.32\textwidth}
		\centering
		\includegraphics[width=2.36in]{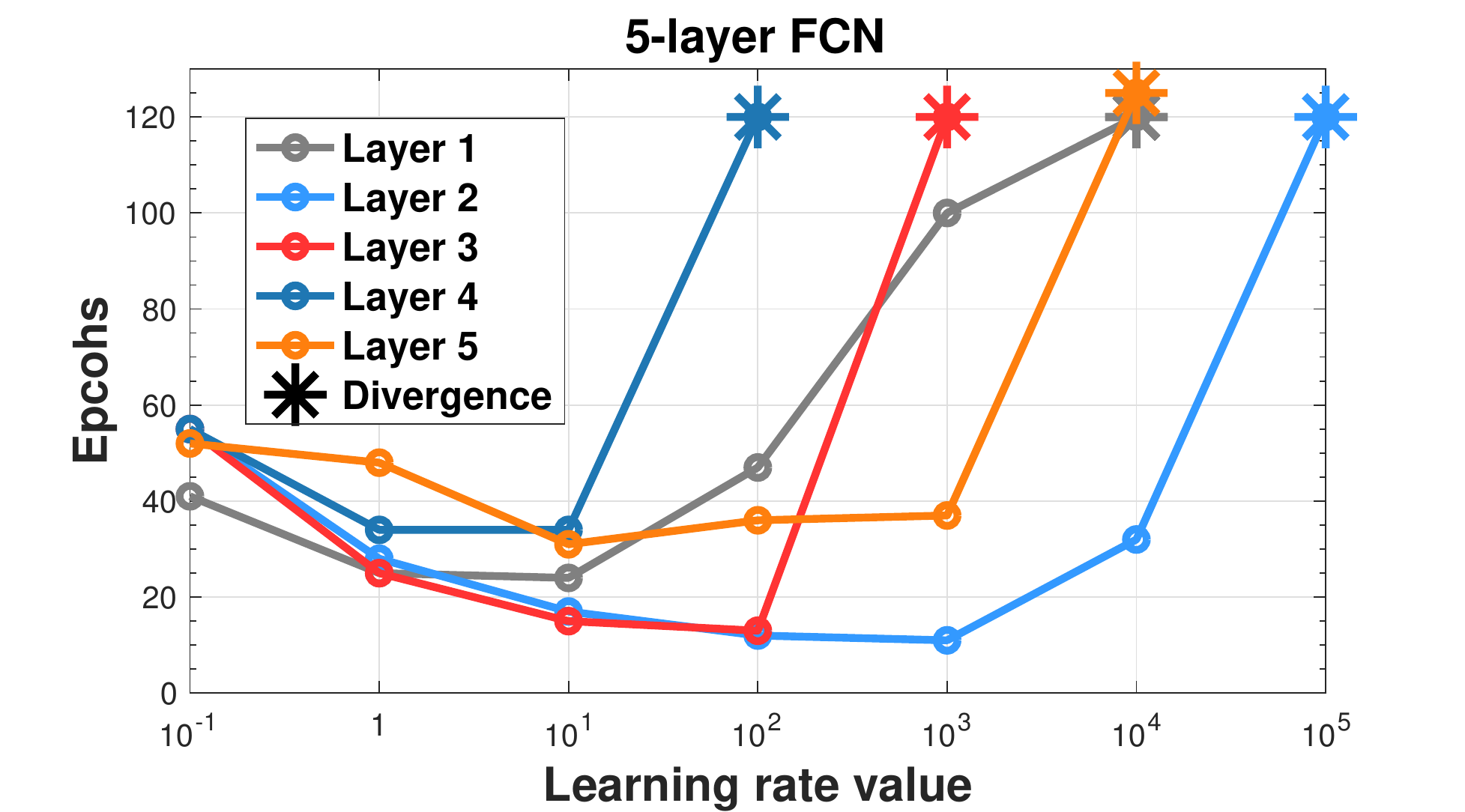}
	\end{subfigure}
	\begin{subfigure}[b]{0.32\textwidth}
		\centering
		\includegraphics[width=2.36in]{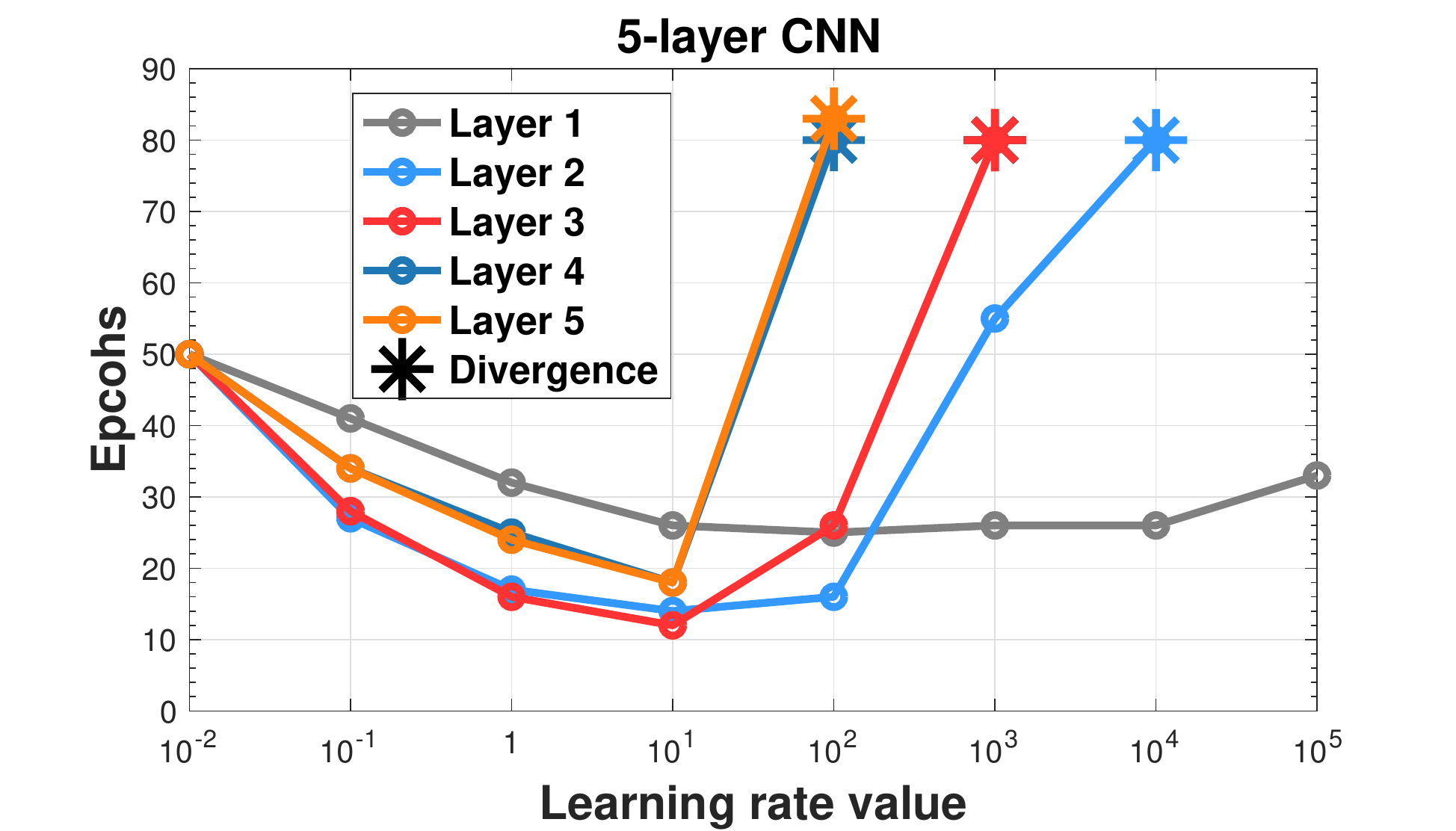}
	\end{subfigure}
	\begin{subfigure}[b]{0.32\textwidth}
		\centering
		\includegraphics[width=2.36in]{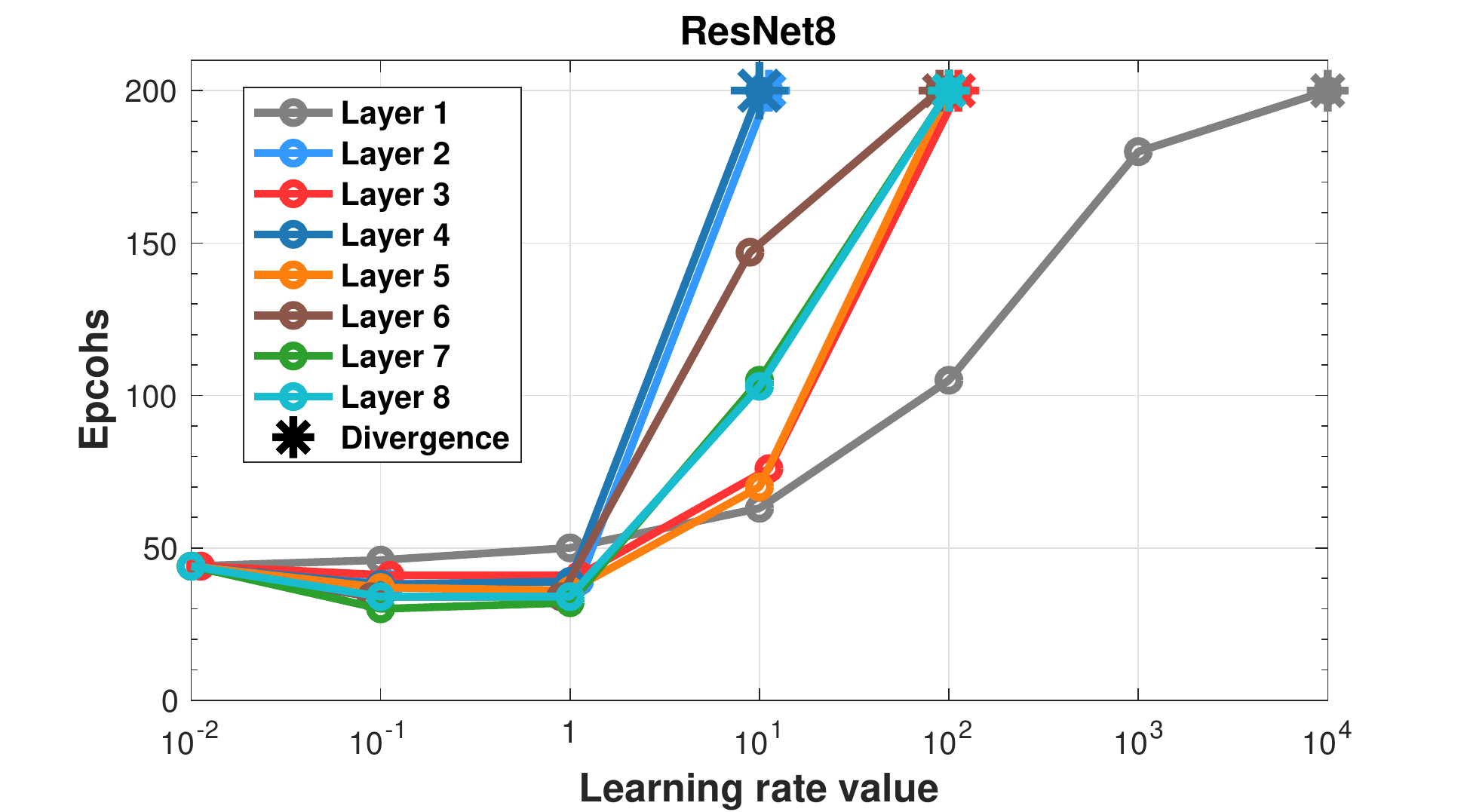}
	\end{subfigure}
	\caption{Learning rate upper bound of each layer. We train  $5$-layer FCN and $5$-layer CNN with sigmoid activation on MNIST and count the epochs required reaching training loss $0.03$ and $0.02$ respectively. We train ResNet8 (no batch normalization layer) on CIFAR-10 and count the epochs required reaching training loss $1.0$. $\textbf{*}$ denotes that loss diverges using the corresponding learning rate. }
	\label{fig::lw}
	\vspace*{-5pt}
\end{figure*}

\begin{figure*}[t]
	\centering
	\begin{subfigure}[b]{0.32\textwidth}
		\centering
		\includegraphics[width=2.36in]{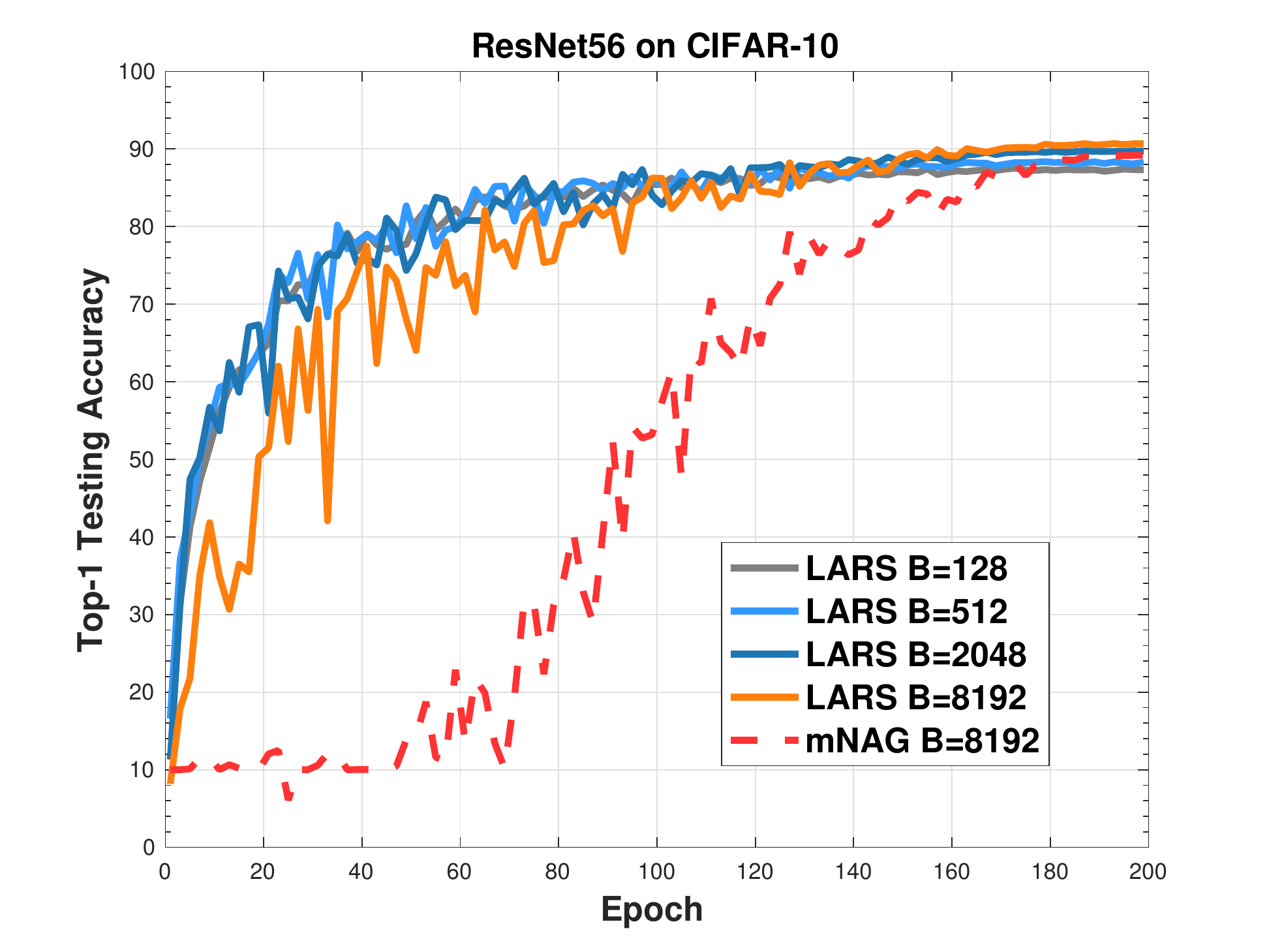}
	\end{subfigure}
	\begin{subfigure}[b]{0.32\textwidth}
		\centering
		\includegraphics[width=2.36in]{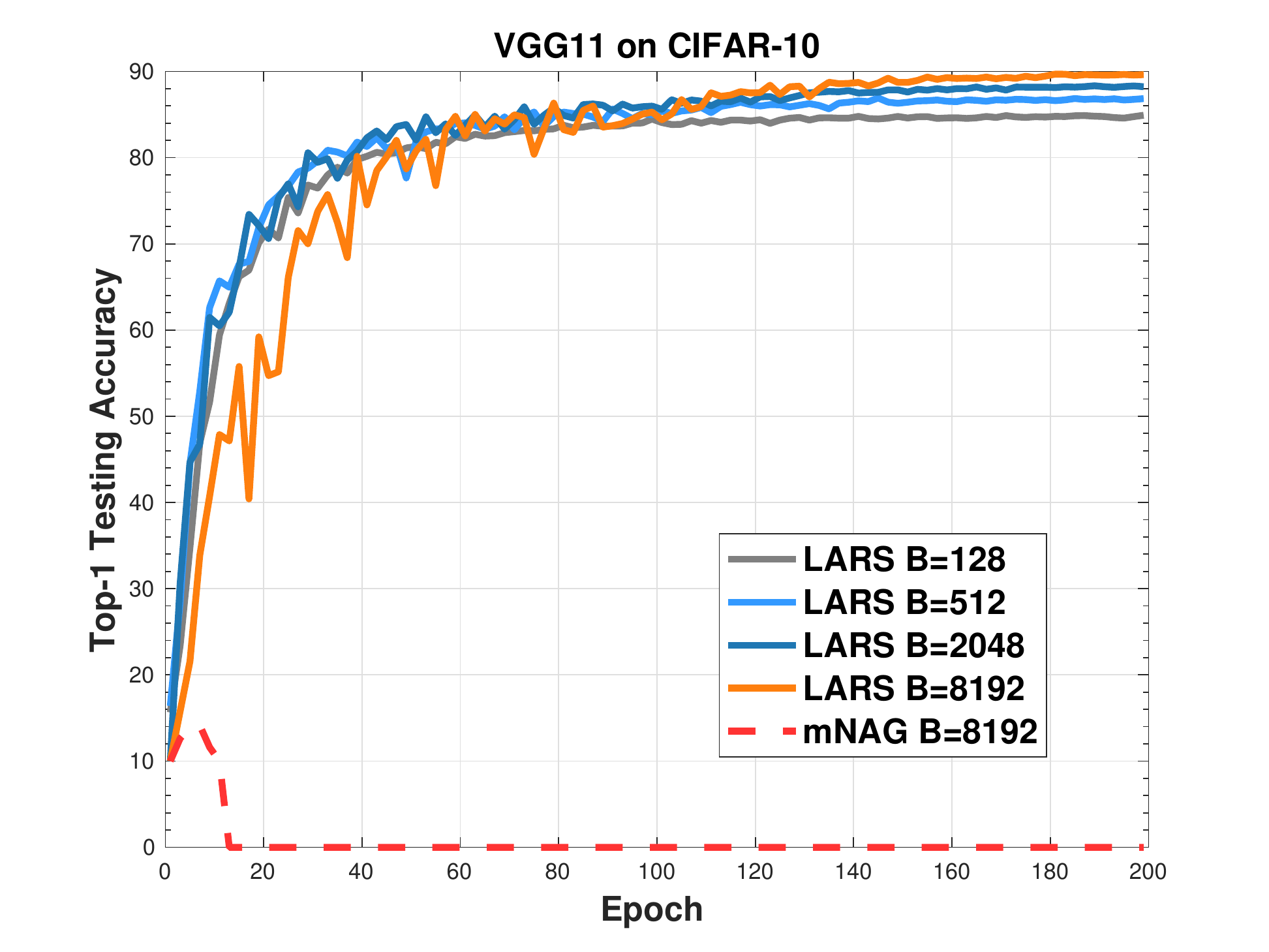}
	\end{subfigure}
	\begin{subfigure}[b]{0.32\textwidth}
		\centering
		\includegraphics[width=2.36in]{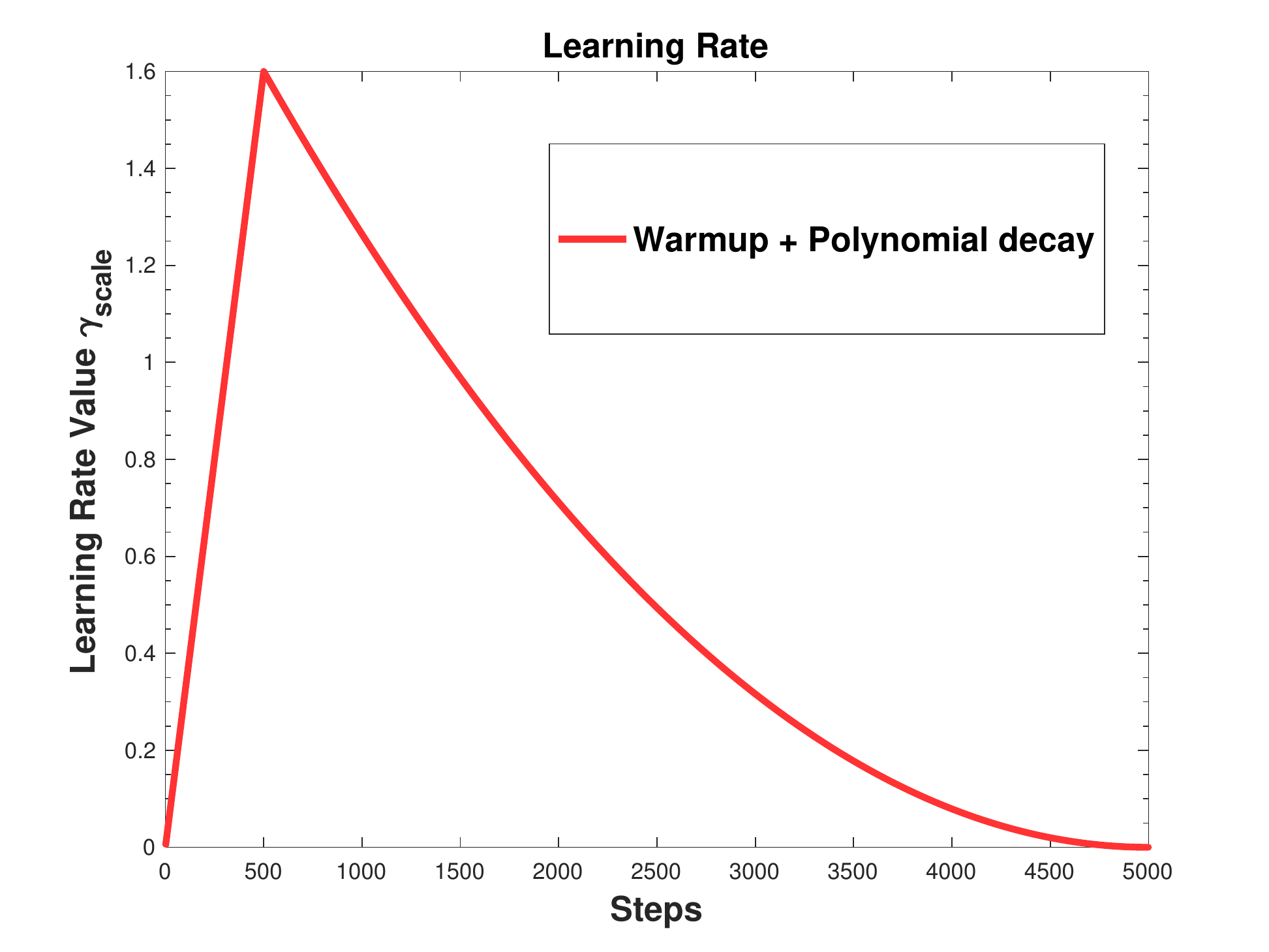}
	\end{subfigure}
	\caption{Training loss and Top-1 testing accuracy of training ResNet56 and VGG11 (with batch normalization layer) on CIFAR-10. Batch size $B$ scales up from $B=128$ to $B=8192$. The right figure presents the variation of $\gamma_{scale}$ in Eq.~(\ref{lars}) when $B=2048$. }
	\label{fig::cifars}
\end{figure*}

\subsection{Why  LARS?}
We test the upper bound of learning rate $\gamma_{k}$ at each layer on three models: $5$-layer FCN, $5$-layer CNN (layer details in the Appendix) and ResNet8 (no batch normalization layer) \cite{he2016deep}.  In the experiments, learning rates  are fixed $\gamma_{k}=0.01$ for all layers except one which is selected from $\{10^{-2},10^{-1},1, 10,10^2, 10^3,10^4,10^5\}$. We optimize models using mNAG with $B=128$ and compare epochs required to achieve the same training loss. Results in Figure \ref{fig::lw} demonstrate that the upper bounds of  learning rates can vary greatly at different layers.  Therefore, it is necessary that each layer has its own learning rate. 

From Theorem \ref{them2}, we know that the upper bound of learning rate $\gamma_{k}$ at each layer is dependent on $\frac{1}{L_k}$.  LARS \cite{you2017scaling} scales the learning rate of each layer adaptively at step $t$ by multiplying $\frac{\|(w_t)_k\|_2}{\|\frac{1}{B} \sum_{i \in I_t}  \nabla_k f_i(w_t) \|_2}$ in Eq.~(\ref{lars}). 
From Assumption \ref{ass_lip}, we can think of  LARS as scaling the learning rate at layer $k$ by multiplying the approximation of $\frac{1}{L_k} \approx  \frac{\|(w_t)_k\|_2}{\left\|  \nabla_k f(w_t)\right\|_2}$, where we make $v_k=0$ and $w_t+U_kv_k=0$.  Therefore, the procedure of LARS is consistent with our theoretical analysis in Theorem \ref{them2} that learning rate of layer $k$ is dependent on the Lipschitz constant at this layer $\gamma_{k} = \frac{\gamma}{L_k}$. We compare LARS with mNAG using a large batch size. Results in Figure \ref{fig::cifars} demonstrate that LARS converges much faster than mNAG  when  $B=8192$. mNAG even diverges in training VGG11 using CIFAR-10. In the experiments, $\gamma_{base}  = 0.1$, $B_{base} = 128$, and $\eta=0.001$ for LARS algorithm.

\subsection{Linear Learning Rate Scaling}
\label{sec:linearscale}
Linear learning rate scaling has been very popular since \cite{goyal2017accurate,krizhevsky2014one,li2017scaling}. However, there is little theoretical understanding of this technique for momentum methods. Based on our analysis in Section \ref{sec:analysis}, we know that the linear learning rate  scaling is from  following  two reasons:

\hspace{0.36cm} {({I})} According to the discussion about Data Parallelism in Section \ref{sec4:dis}, we know that  when $B$ is scaled up by $c$ times to $cB$, the problem can converge to the same error after $\frac{T}{c}$ iterations, as long as $\gamma$ is also scaled up by $B$ times. 

\hspace{0.36cm}  {({II})} According to Theorem \ref{them2}, as long as $\frac{(1-\beta)B}{8L_kM_k}$ dominates the upper bound of the learning rate $\gamma_{k}$ at layer $k$, its upper bound scales linearly with the batch size $B$.

The second case requires that $\frac{B}{M_k}$ to be very small. The layer-wise gradient variance factor $M_k$  is closely related to both model and data. In \cite{shallue2018measuring}, authors find that different models usually have different maximum useful batch size.  The variance factor $M_k$ is highly dependent on the dataset and close to the gradient diversity in \cite{yin2018gradient}. We can draw the same conclusion as \cite{yin2018gradient} that mNAG admits better speedup on problems with higher gradient diversity.

\begin{figure*}[t]
	\centering
	\vspace*{-5pt}
	\begin{subfigure}[b]{0.48\textwidth}
		\centering
		\includegraphics[width=3.5in]{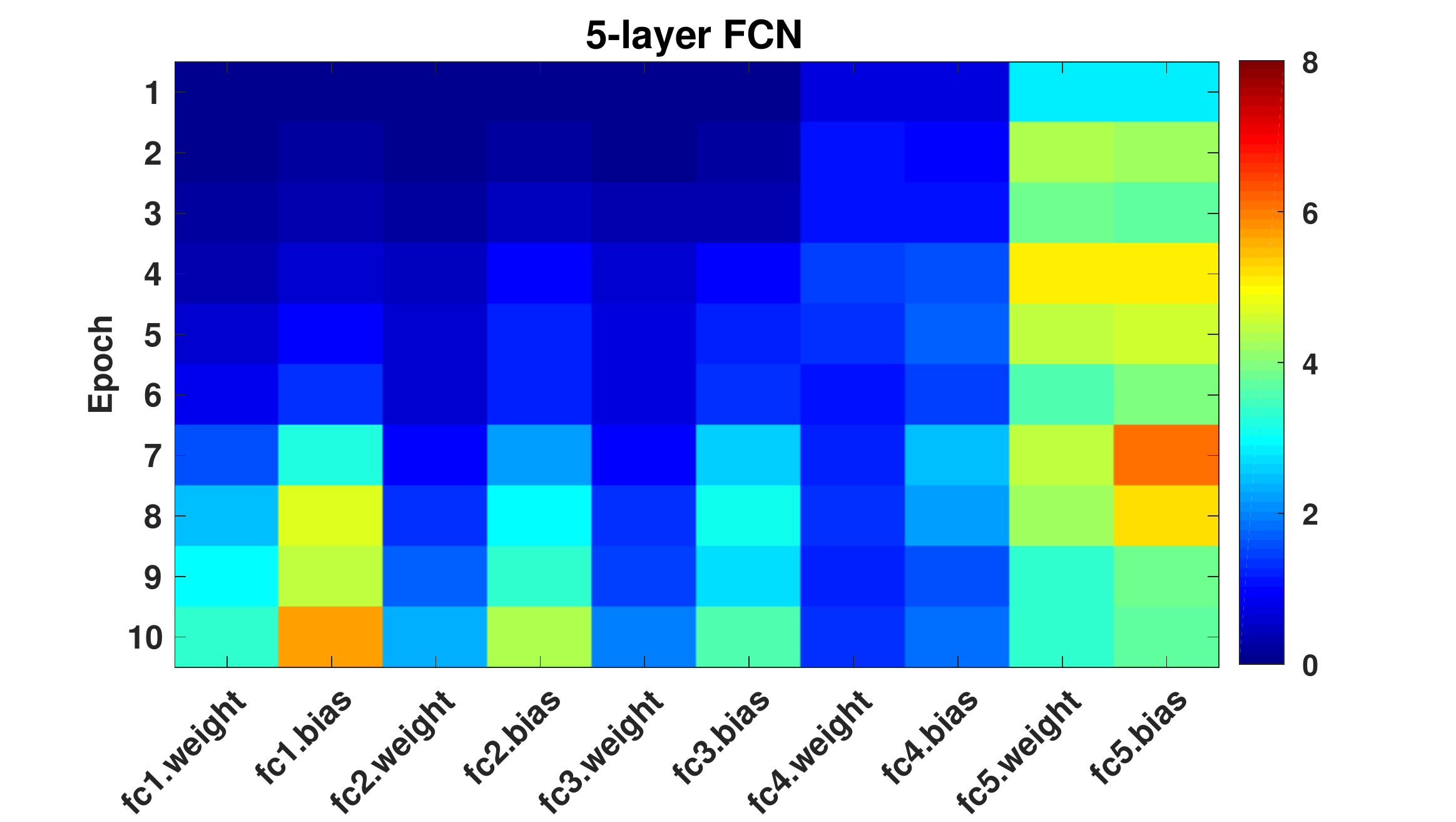}
	\end{subfigure}
	\begin{subfigure}[b]{0.48\textwidth}
		\centering
		\includegraphics[width=3.5in]{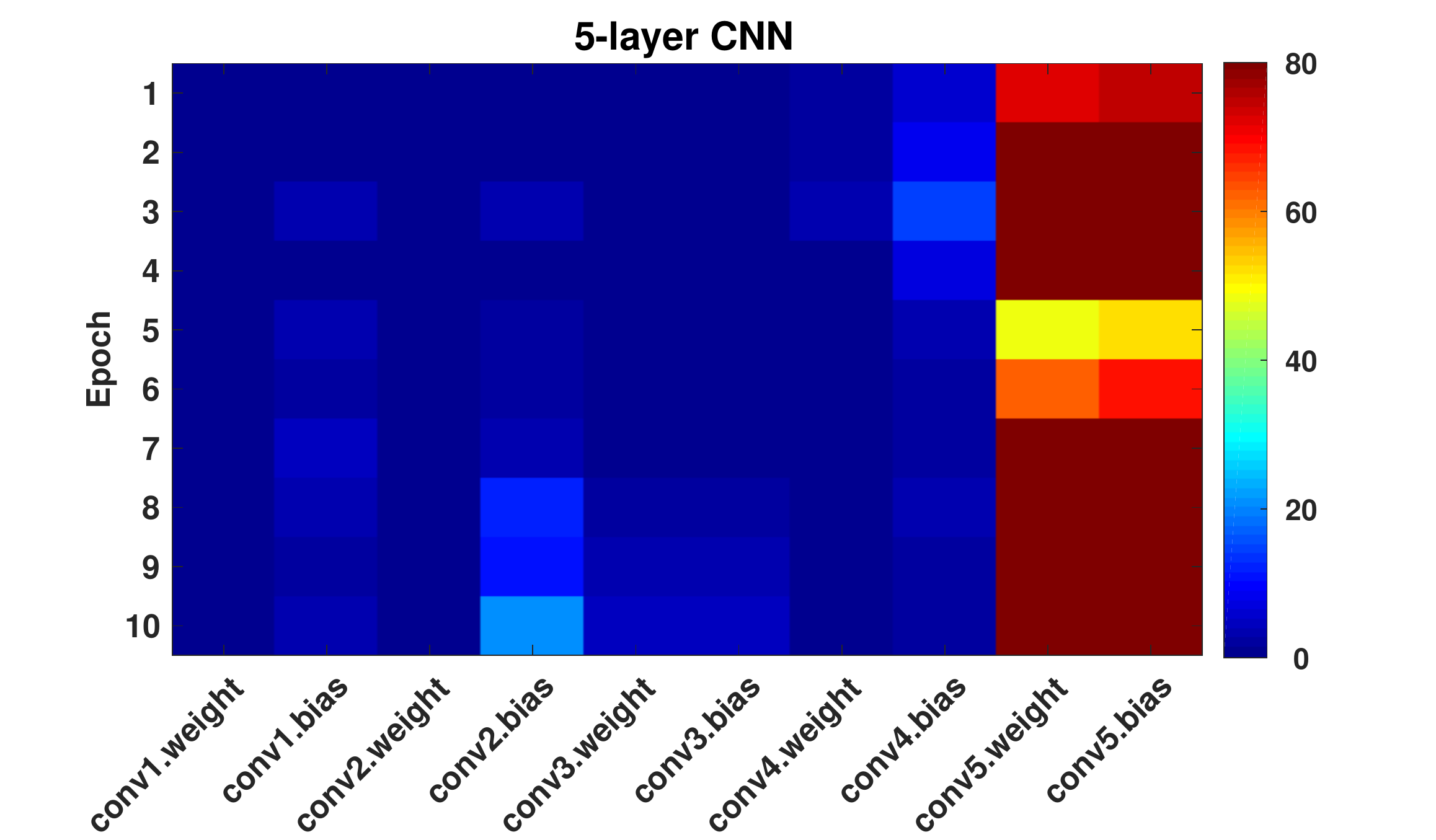}
	\end{subfigure}
	\caption{Variation of $M_k$ for $10$ epochs. We train $5$-layer FCN and $5$-layer CNN with sigmoid activation on  MNIST. }
	\label{fig::wp}
	\vspace*{-10pt}
\end{figure*}

\begin{figure*}[t]
	\centering
	\vspace*{-5pt}
	\begin{subfigure}[b]{0.48\textwidth}
		\centering
		\includegraphics[width=3in]{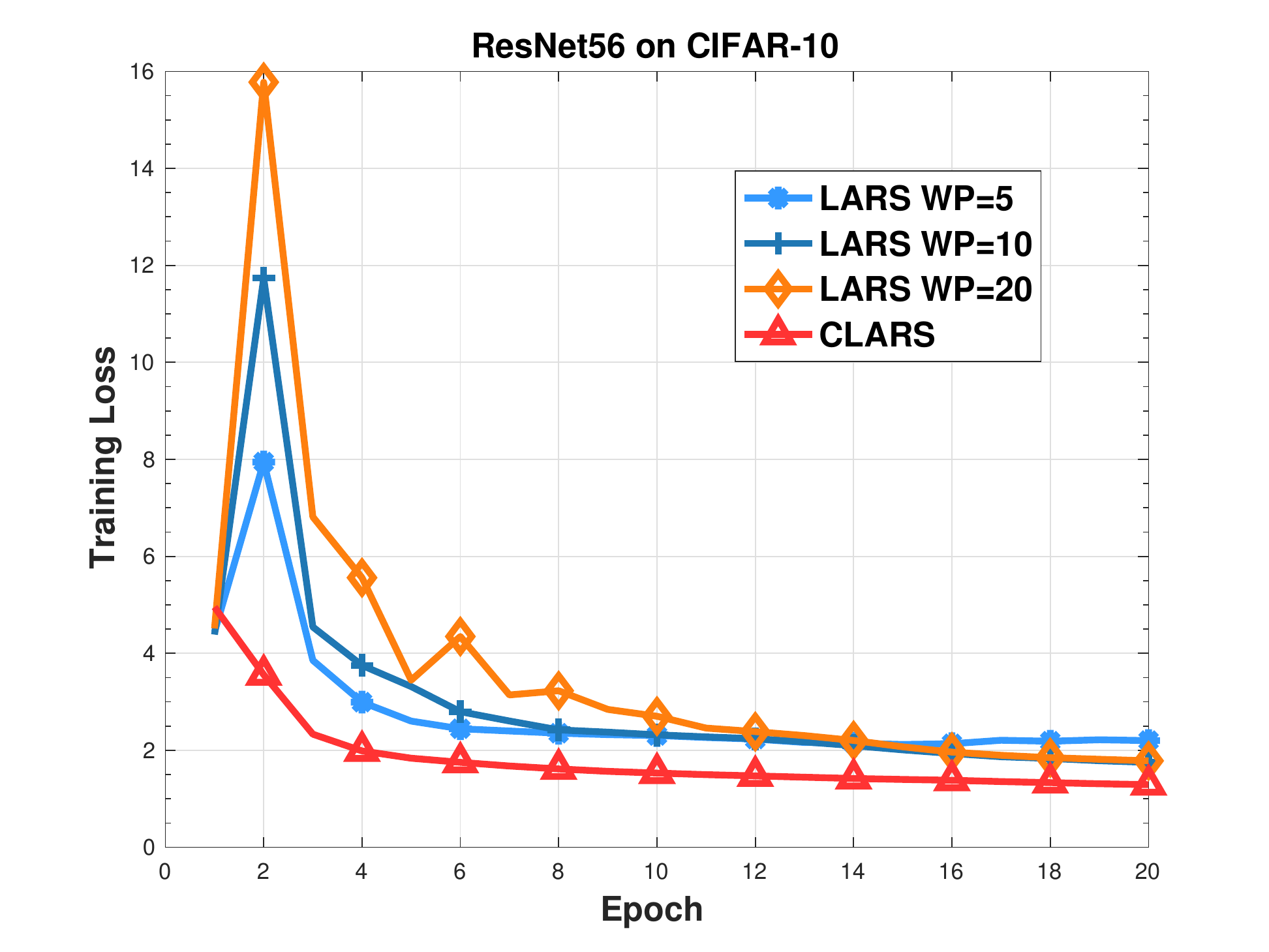}
	\end{subfigure}
	\begin{subfigure}[b]{0.48\textwidth}
		\centering
		\includegraphics[width=3in]{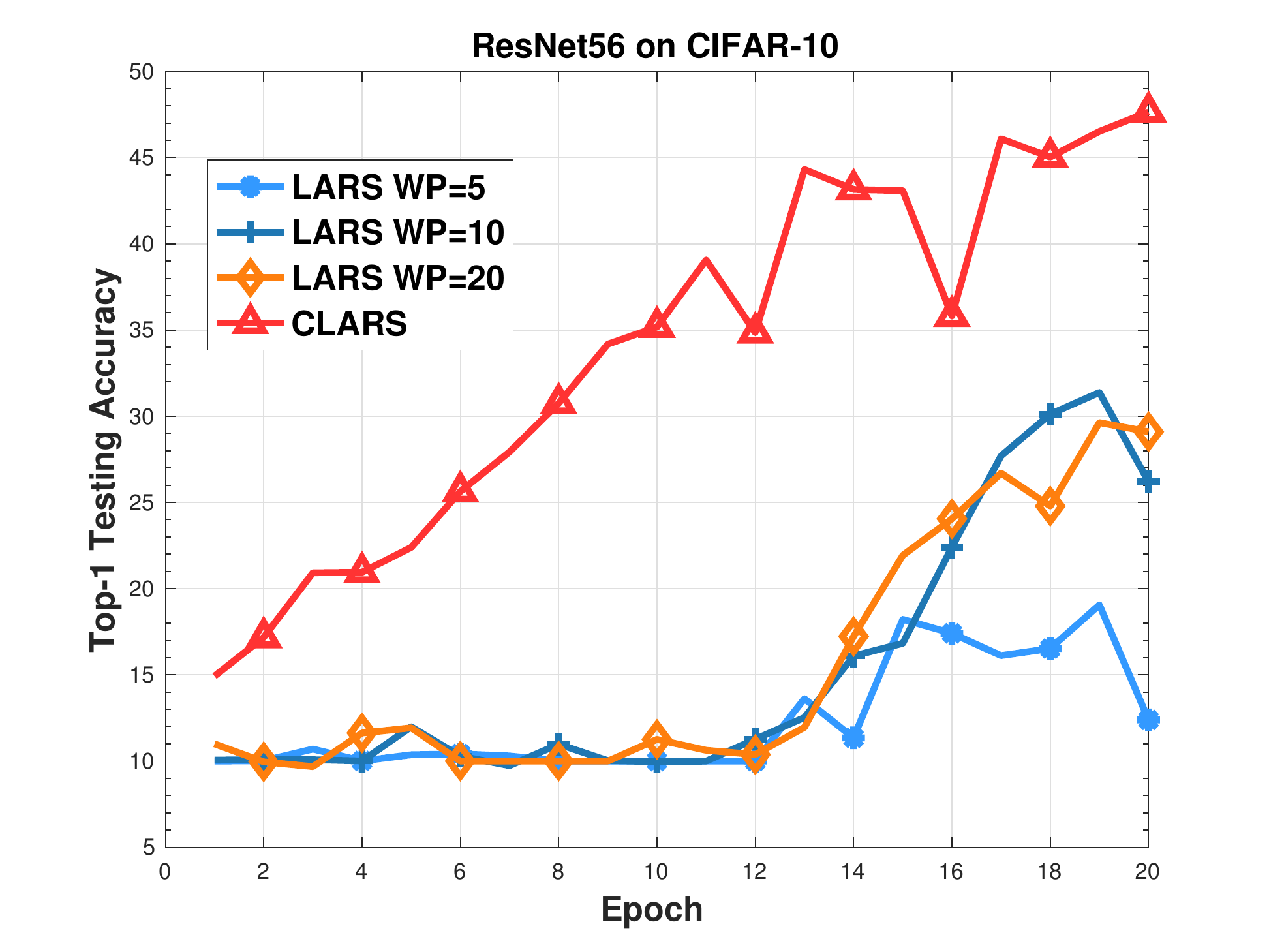}
	\end{subfigure}
	\begin{subfigure}[b]{0.46\textwidth}
		\centering
		\includegraphics[width=3in]{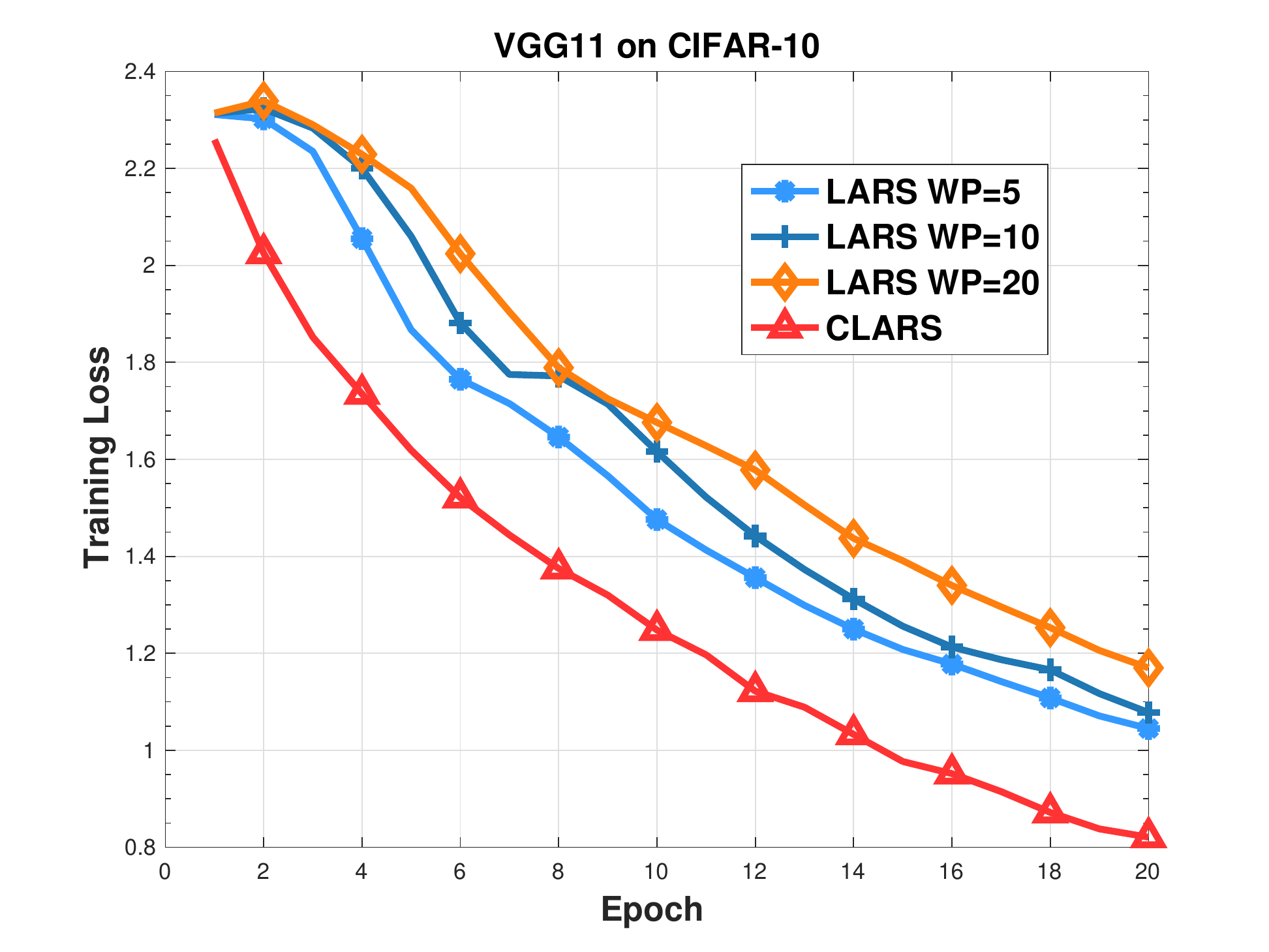}
	\end{subfigure}
	\begin{subfigure}[b]{0.46\textwidth}
		\centering
		\includegraphics[width=3in]{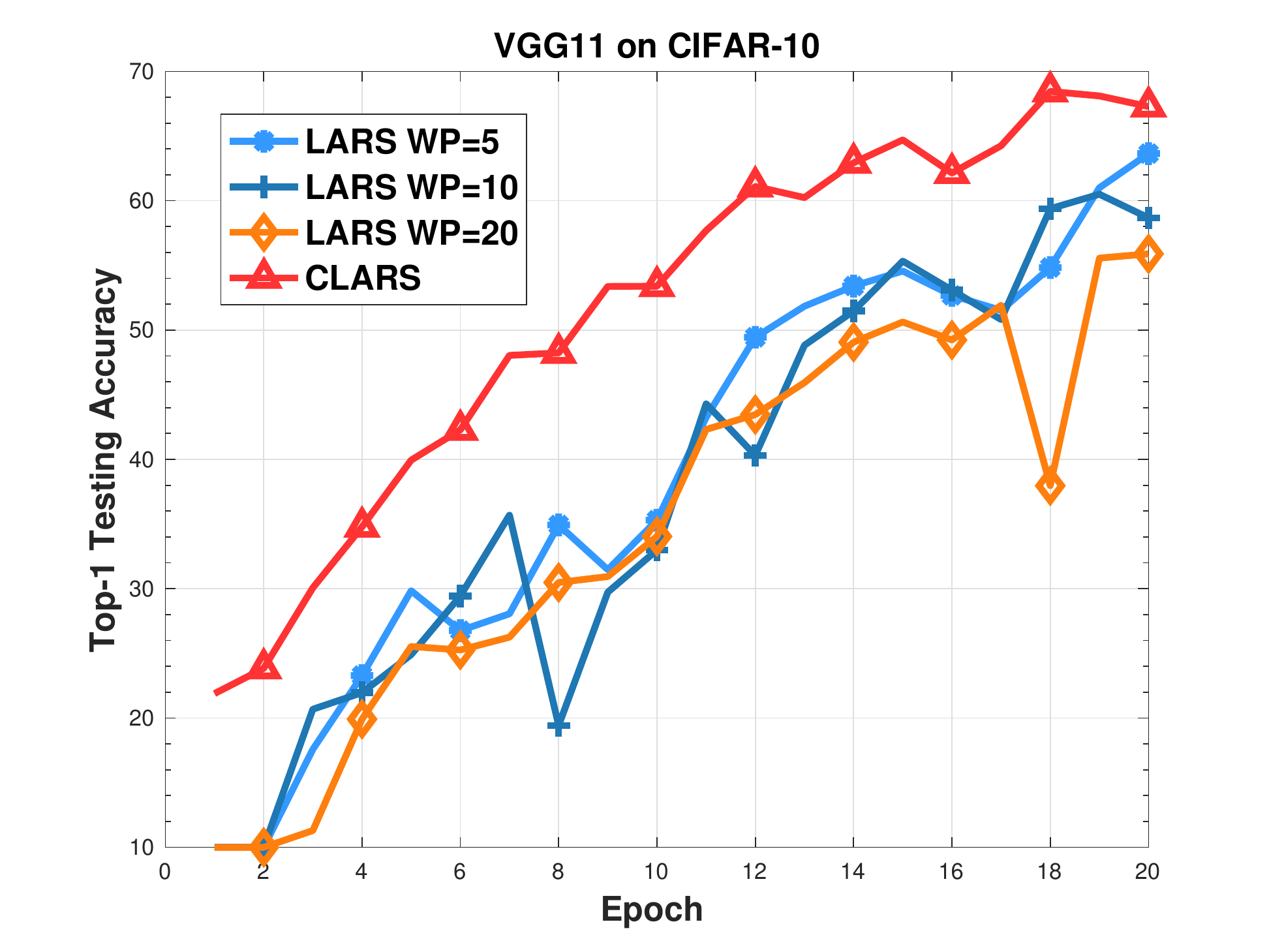}
	\end{subfigure}
	\caption{Comparison between LARS (with gradual warmup) and CLARS algorithm. We optimize ResNet56 and VGG11 on CIFAR-10  with batch size $B=8192$ for $20$ epochs.}
	\label{fig::warmup}
\end{figure*}

In Figure \ref{fig::cifars}, we train ResNet56 \cite{he2016deep} and VGG11 with batch normalization layer \cite{ioffe2015batch,simonyan2014very} on CIFAR-10 \cite{krizhevsky2009learning} for $200$ epochs.  We use LARS optimizer  with gradual warmup ($20$ epochs) and polynomial learning rate decay as \cite{you2017scaling}, which is also visualized in the right side of Figure \ref{fig::cifars}. We scale up the batch size from $128$ to $8192$ and employ the linear learning rate scaling.
Results in Figure \ref{fig::cifars} show that the convergence rates of LARS with batch size from $128$ to $8192$ are similar and the linear speedup is guaranteed when the computations are parallelized on multiple devices. Because the learning rate schedule is tuned for large-batch training, we may observe accuracy improvements when the batch size scales up.

\begin{figure*}[t]
	\begin{subfigure}[b]{0.32\textwidth}
		\centering
		\includegraphics[width=2.36in]{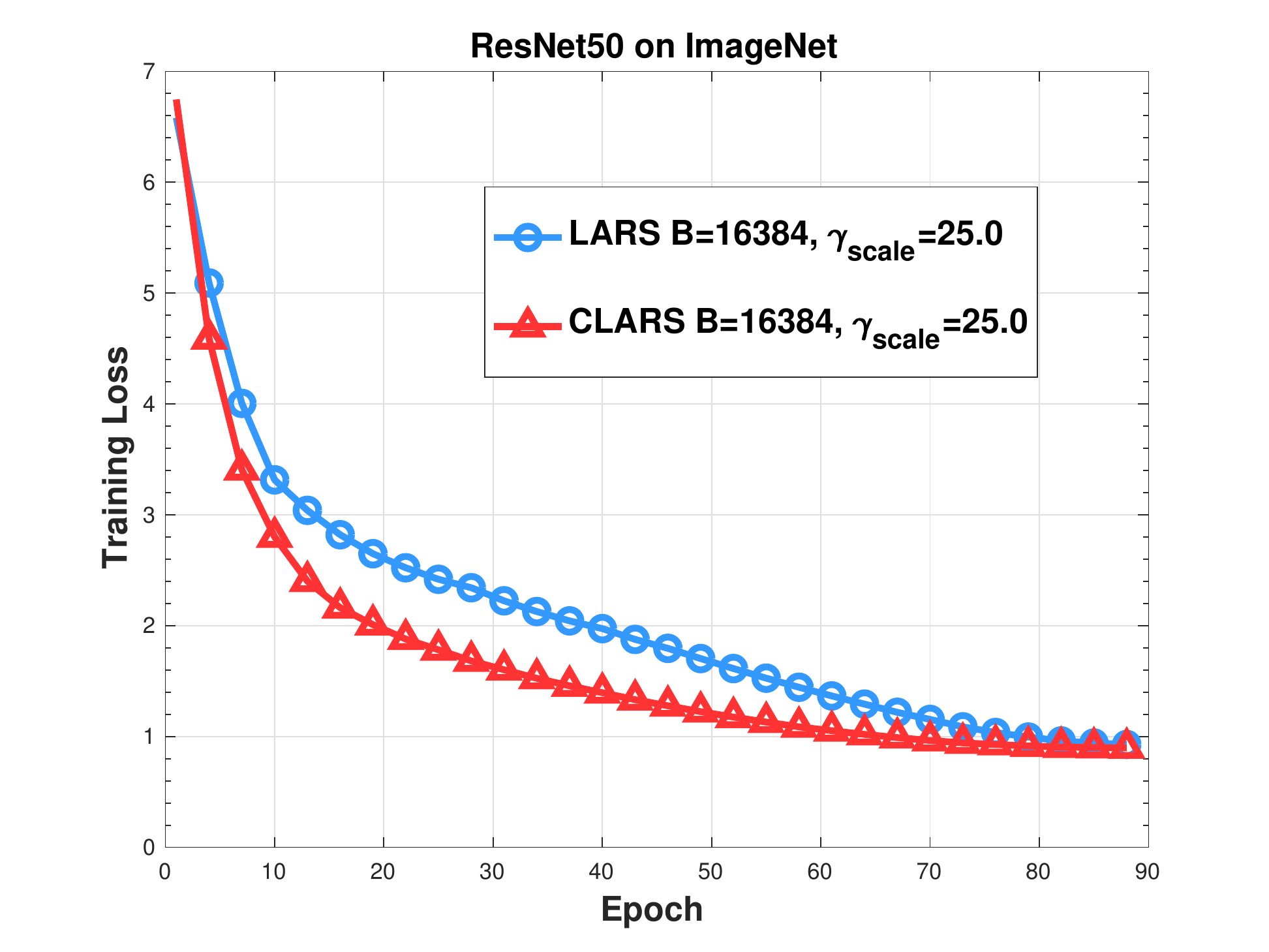}
	\end{subfigure}
	\begin{subfigure}[b]{0.32\textwidth}
		\centering
		\includegraphics[width=2.36in]{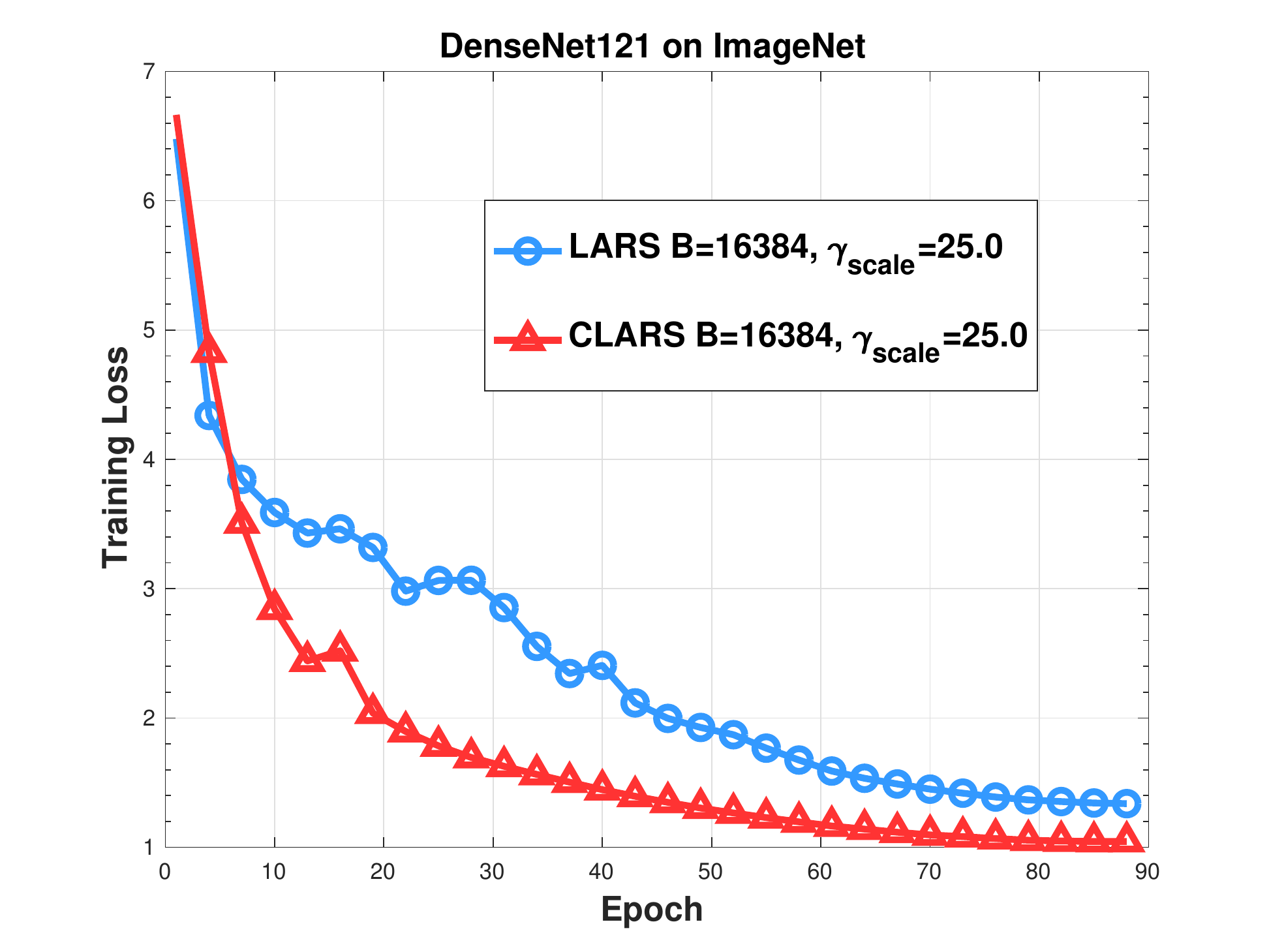}
	\end{subfigure}
	\begin{subfigure}[b]{0.32\textwidth}
		\centering
		\includegraphics[width=2.36in]{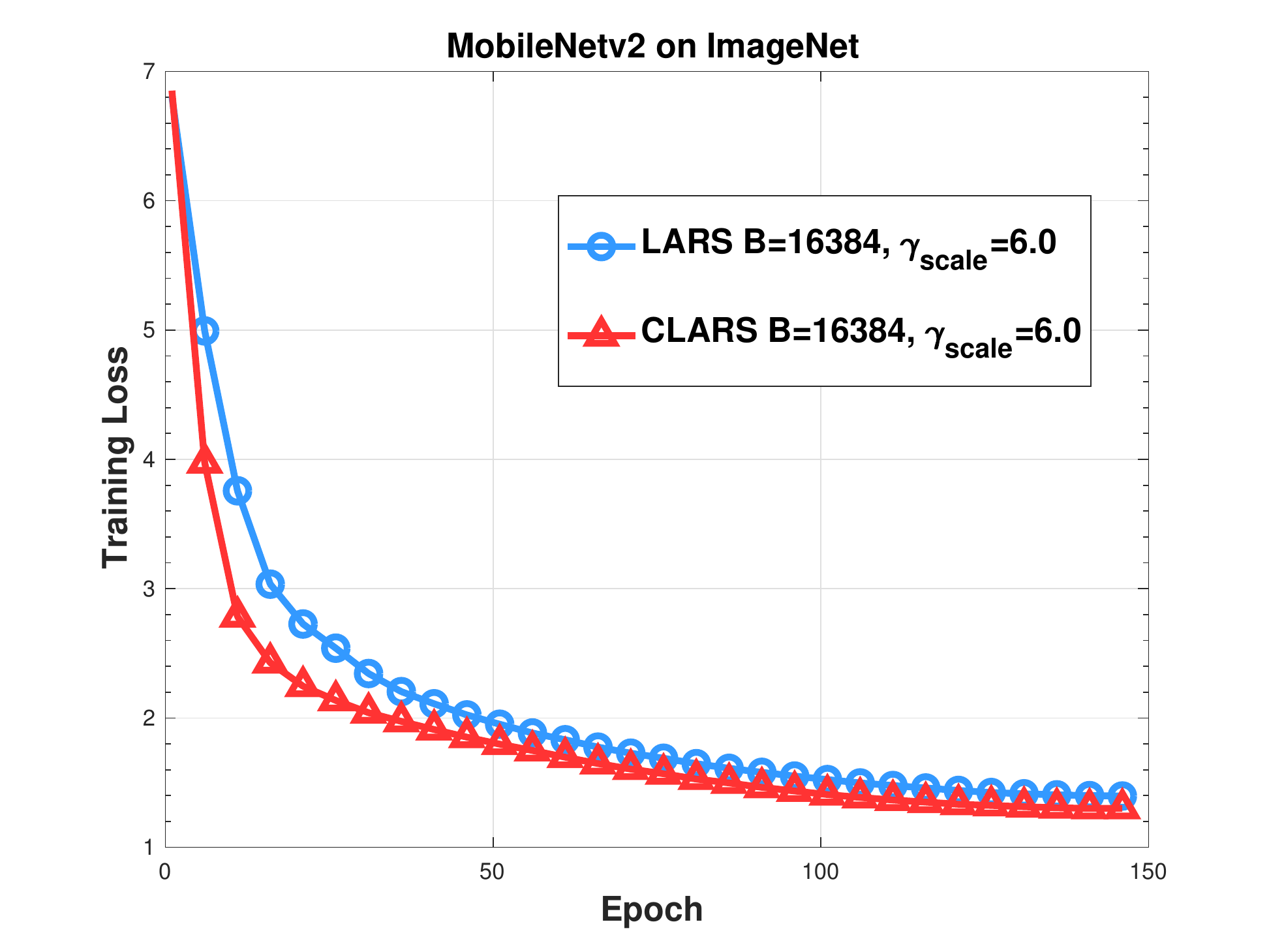}
	\end{subfigure}
	\begin{subfigure}[b]{0.32\textwidth}
		\centering
		\includegraphics[width=2.36in]{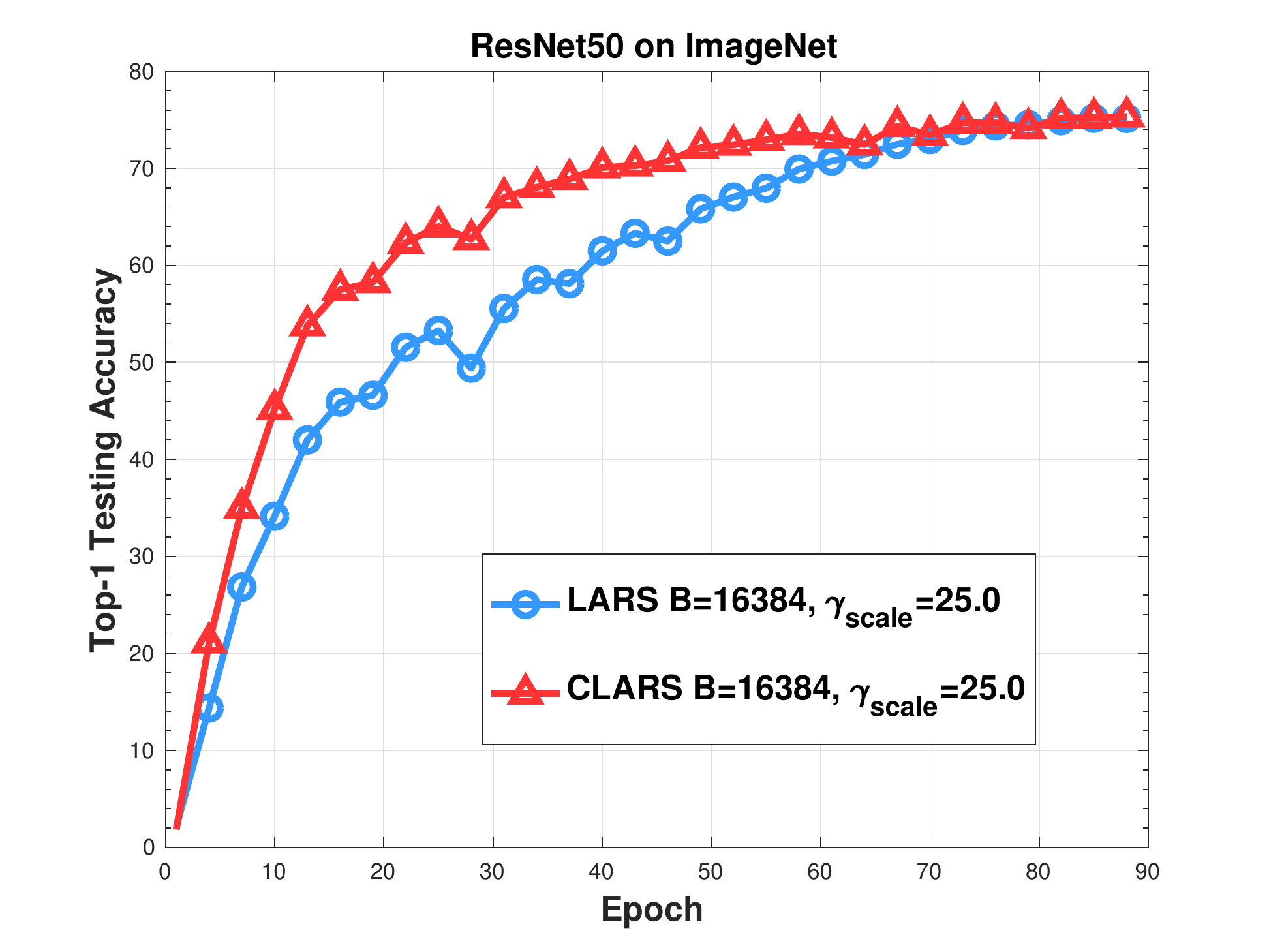}
	\end{subfigure}
	\begin{subfigure}[b]{0.35\textwidth}
		\centering
		\includegraphics[width=2.38in]{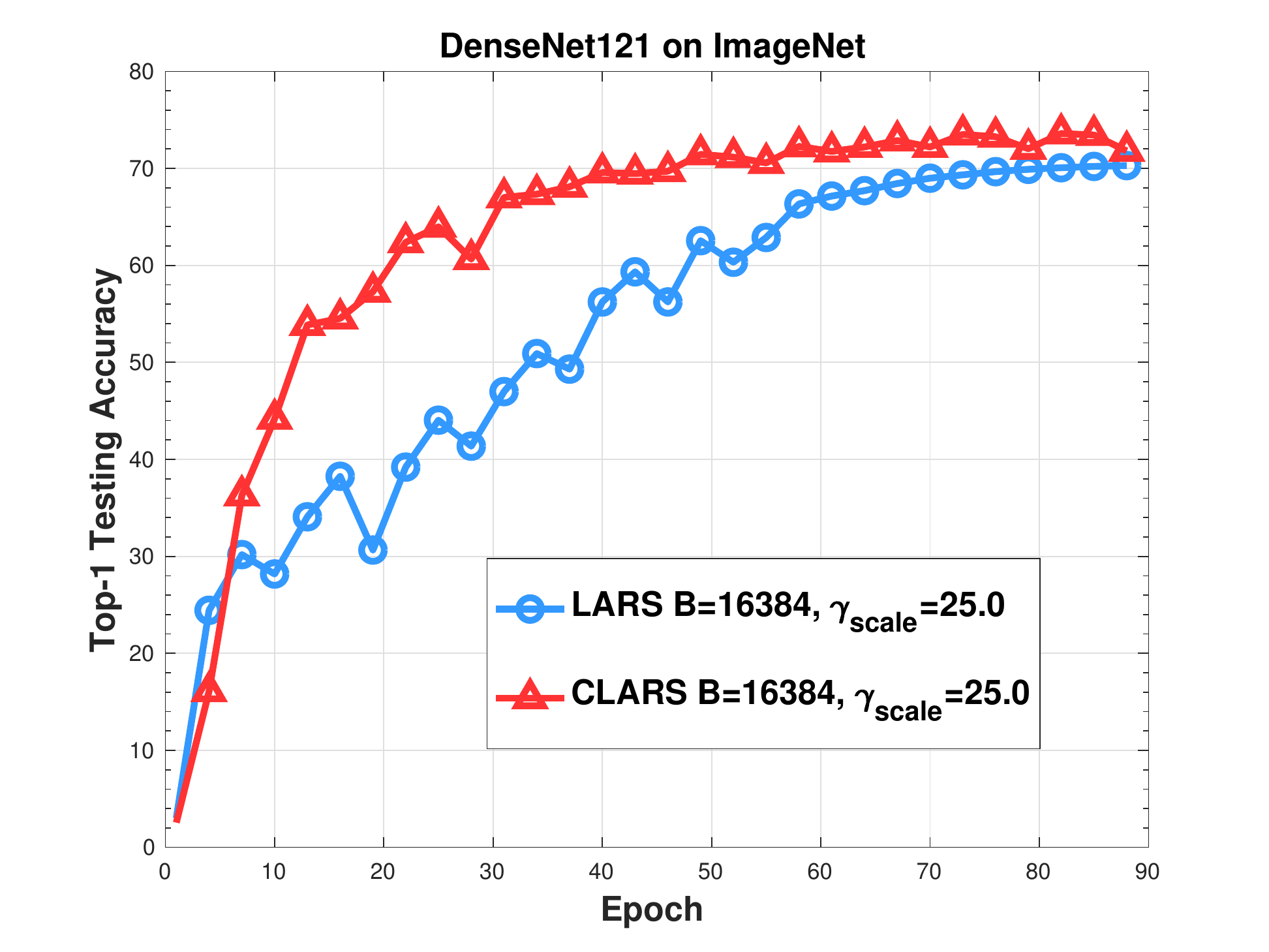}
	\end{subfigure}
	\begin{subfigure}[b]{0.3\textwidth}
		\centering
		\includegraphics[width=2.36in]{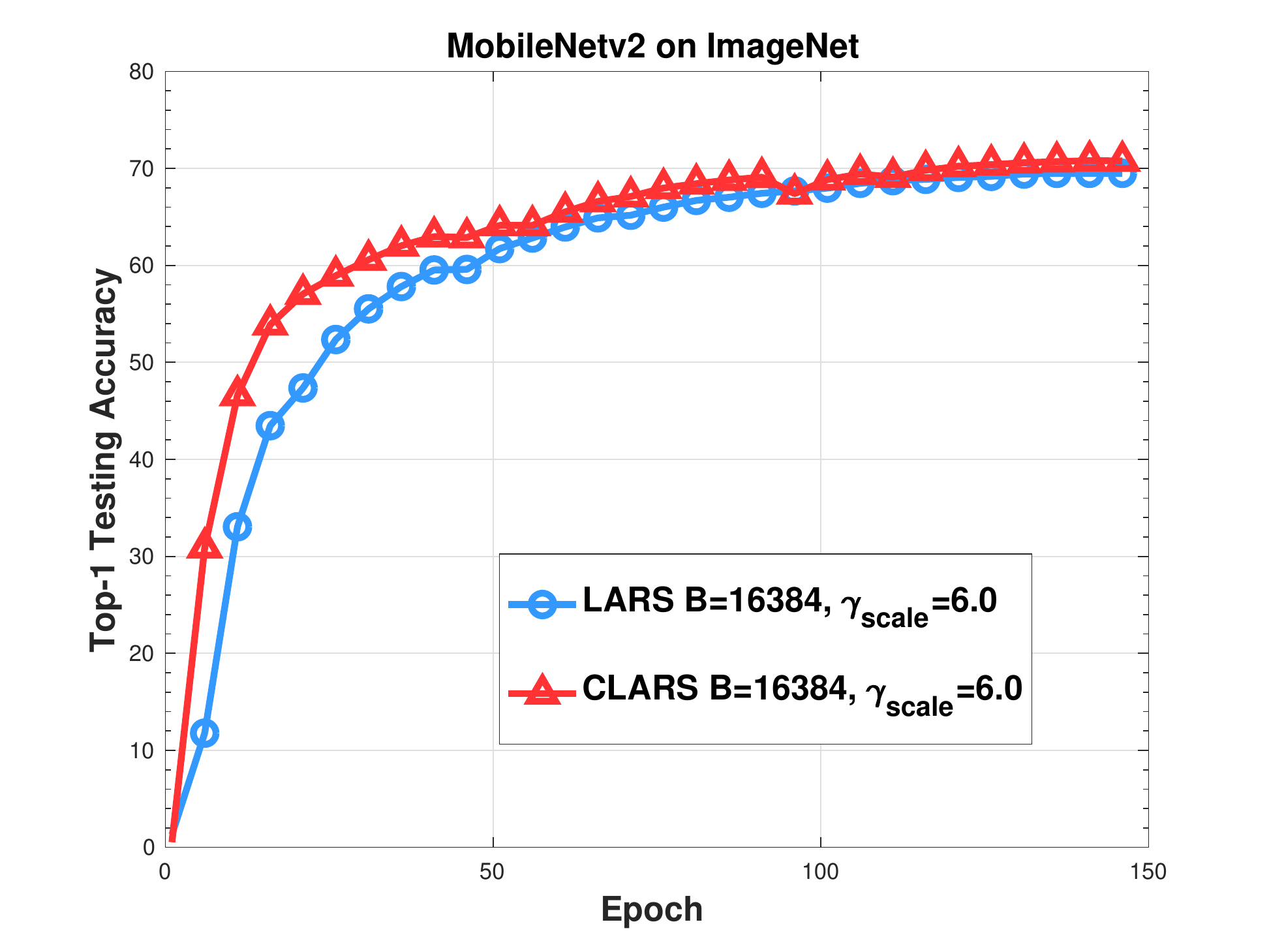}
	\end{subfigure}
	\caption{Comparison between LARS and CLARS 
		on ImageNet.  We train ResNet50, DenseNet121 for  $90$ epochs with batch size $B=16384$ and $\gamma_{scale}=25.0$. MobileNetv2 is trained for $150$ epochs with batch size $B=16384$ and $\gamma_{scale}=6.0$.}
	\label{fig::imagenet}
\end{figure*}

\subsection{One Hypothesis About Warmup}
The gradual warmup was  essential for large-batch deep learning optimization because linearly scaled $\gamma_{scale}$ can be so large that the loss cannot converge in early epochs \cite{goyal2017accurate}.  In the gradual warmup, $\gamma_{scale}$ is replaced with a small value at the beginning and increased back gradually after a few epochs.

According to our analysis in Theorem \ref{them2}, we guess that the gradual warmup is to simulate the function of $\frac{1}{M_k}$ in the upper bound of learning rate. We train $5$-layer FCN, $5$-layer CNN on MNIST \cite{lecun1998gradient} and ResNet8 on CIFAR-10 using mNAG  for $50$ epochs.  Constant learning rate  $0.001$ is used for all layers and  batch size $B=128$. After each epoch, we approximate the gradient variance factor $M_k$ by computing the ratio of  $\frac{1}{n}\sum_{i=1}^n\left\| \nabla_k f_i(w_t) \right\|_2^2$ to $ \|\frac{1}{n}\sum_{i=1}^n \nabla_k f_i(w_t) \|_2^2$ on training data. Figure \ref{fig::wp} presents the variation of $M_k$ at each layers. It is obvious that $M_k$ of top layers are larger than other layers. Thus, smaller learning rates should be used on top layers at early epochs. Our  observation matches the result in \cite{gotmare2018closer} that freezing fully connected layers at early epochs allows for comparable performance with warmup.

\subsection{Warmup is Not Necessary}
\label{sec:exp}
We evaluate the proposed Algorithm \ref{alg_sgd} by conducting extensive experiments.
To reduce the time consumption in computing $M_k$, we approximate it using $M_k \approx \frac{\left\| \frac{1}{B} \sum_{i \in I_t} \nabla_k f_i(w_t) \right\|_2^2}{ \frac{1}{|J_t|} \sum_{j \in J_t} \left\| \nabla_k f_j(w_t) \right\|_2^2}  $, where $|J_t|=512$. The numerator is known after the gradient computation, and the denominator is obtained in a small size.  Since $|J_t|\ll B$, the computational time of approximating  $M_k$ can be ignored when the computation is amortized on multiple devices.  In Figure \ref{fig::warmup}, we make a comparison between LARS (with gradual warmup) and the proposed CLARS algorithm. We train ResNet56 and VGG11 (with batch normalization layer) on CIFAR-10 with batch size $B=8192$ for $20$ epochs.  Standard data preprocessing techniques are used as in \cite{he2016deep}. For LARS with the gradual warmup, we test three warmup epochs $\{5,10,20\}$ and keep $\gamma_{scale}=6.4$ after the warmup. For CLARS, we keep $\gamma_{scale}=6.4$ for $20$ epochs. $\eta$ is tuned from $\{10^{-4},10^{-3},10^{-2}, 10^{-1}\}$ for both methods
Visualization in Figure \ref{fig::warmup} shows that CLARS always outperforms LARS by a large margin. Results demonstrate that warmup is not necessary in large-batch deep learning training and CLARS is a better option for practical implementation.

We also evaluate CLARS algorithm by training ResNet50, DenseNet121, and MobileNetv2 on ImageNet {\cite{deng2009imagenet}}. 
Because there are not enough GPUs to compute $16384$ gradients at one time, we set batch size $B=512$ and accumulate the gradients for $32$ steps before updating the model  as \cite{you2017scaling}. Following the official implementation\footnote[1]{\url{https://github.com/tensorflow/models/blob/master/official/resnet/resnet_run_loop.py}}, we set $\eta=10^{-3}$ for LARS, $\gamma_{scale}=25.0$ for $B=16384$ and adjust the learning rate using $5$-epoch warmup and polynomial decay. For CLARS, there is no warmup and we set $\eta=10^{-2}$ (LARS always diverges with this value). 
Experimental results in Figure \ref{fig::imagenet} present that  CLARS algorithm always converges much faster than the state-of-the-art large-batch optimizer LARS on advanced neural networks. Besides, CLARS  can obtain better test error than LARS. 

\section{Conclusion}
In this paper, we propose a novel Complete Layer-wise Adaptive Rate Scaling (CLARS) algorithm to remove  warmup in the large-batch deep learning training. Then, we introduce fine-grained analysis and prove the convergence of the proposed algorithm for non-convex problems.  Based on our analysis, we bridge the gap between several large-batch deep learning optimization heuristics and theoretical underpins. Extensive experiments demonstrate that the proposed algorithm outperforms  gradual warmup by a large margin and defeats the convergence of the state-of-the-art large-batch optimizer (LARS) in training advanced deep neural networks on ImageNet dataset.

{\small
\bibliographystyle{ieee_fullname}
\bibliography{egbib}
}

\onecolumn

\appendix

\section{Fine-Grained Analysis of Mini-Batch Gradient Descent}
\begin{lemma}
	Under Assumptions \ref{ass_lip} and \ref{ass_bd}, after applying Eq.~(\ref{newnag}) with $\beta=0$ for $K$ micro-steps from $s=0$ to $s=K-1$, we have the upper bound of loss $\mathbb{E}[f(w_{t+1})]$ as follows:
	{\small
		\begin{eqnarray}
		\mathbb{E} \left[  f(w_{t+1}) \right]
		&\leq & \mathbb{E} \left[ f(w_{t}) \right] - \sum\limits_{k=1}^{K} \frac{\gamma_{k}}{2} \left( 1 -  {L_{k} \gamma_{k}}-  \frac{L_{k} \gamma_{k} M_{k} }{B} -  \frac{L_g^2 \gamma_k M_k  \sum\limits_{k=1}^K\gamma_{k}  }{KB} - \frac{ L_g^2 \gamma_k   \sum\limits_{k=1}^K\gamma_{k}  }{K} \right)\mathbb{E}\left\|   \nabla_{k} f(w_t)  \right\|_2^2  \nonumber \\
		&& + \sum\limits_{k=1}^K  \frac{L_{k} \gamma_{k}^2M}{2B}  +\frac{\sum\limits_{k=1}^K  \gamma_{k} L_g^2M}{2KB} \sum\limits_{k=1}^K \gamma_{k}^2.
		\label{iiq_lem1}
		\end{eqnarray}}
	\label{lem_1}
\end{lemma}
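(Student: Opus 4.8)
The plan is to perform a "descent-lemma" computation along the $K$ micro-steps defined in Eq.~(\ref{newnag}) with $\beta = 0$, carefully tracking the staleness error that arises because each micro-step uses the frozen gradient $\nabla_k f_i(w_t)$ rather than $\nabla_k f_i(w_{t:s})$. First I would apply the layer-wise descent inequality implied by Assumption \ref{ass_lip}: for a single micro-step $t{:}s$ updating block $k = k(s)$, since only the $k$-th block of $w$ changes by $-\gamma_k U_k g_{t:s}$ with $g_{t:s} = \frac{1}{B}\sum_{i \in I_t}\nabla_k f_i(w_t)$, we have
\begin{eqnarray}
f(w_{t:s+1}) \leq f(w_{t:s}) - \gamma_k \langle \nabla_k f(w_{t:s}), g_{t:s}\rangle + \frac{L_k \gamma_k^2}{2}\|g_{t:s}\|_2^2. \nonumber
\end{eqnarray}
Summing over $s = 0,\dots,K-1$ telescopes the left side into $f(w_{t+1}) - f(w_t)$.

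The second step is to handle the inner product term $\langle \nabla_k f(w_{t:s}), g_{t:s}\rangle$, where the mismatch between $w_{t:s}$ and $w_t$ matters. I would write $\nabla_k f(w_{t:s}) = \nabla_k f(w_t) + \big(\nabla_k f(w_{t:s}) - \nabla_k f(w_t)\big)$, take expectations so that $\mathbb{E}[g_{t:s} \mid w_t] = \nabla_k f(w_t)$ turns the main part into $-\gamma_k \|\nabla_k f(w_t)\|_2^2$, and bound the cross term using Cauchy–Schwarz plus Young's inequality together with the layer-wise Lipschitz bound $\|\nabla_k f(w_{t:s}) - \nabla_k f(w_t)\|_2 \leq L_g \|w_{t:s} - w_t\|_2$ (using the global constant $L_g \geq L_k$ here, since the displacement $w_{t:s}-w_t$ spans possibly many blocks). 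The displacement $\|w_{t:s} - w_t\|_2$ is itself a sum of at most $K$ prior micro-step increments $\gamma_{k'} g_{t:s'}$, so $\|w_{t:s}-w_t\|_2^2 \leq K \sum_{k'=1}^K \gamma_{k'}^2 \|g_{t:s'}\|_2^2$ by Cauchy–Schwarz. The third step is to replace each $\mathbb{E}\|g_{t:s'}\|_2^2$ by its variance decomposition: Assumption \ref{ass_bd} gives $\mathbb{E}\|g_{t:s'}\|_2^2 \leq (1 + M_{k'}/B)\mathbb{E}\|\nabla_{k'} f(w_t)\|_2^2 + M/B$. Collecting all contributions — the main $-\gamma_k\|\nabla_k f(w_t)\|_2^2$, the $\frac{L_k\gamma_k^2}{2}\mathbb{E}\|g_{t:s}\|_2^2$ term, and the staleness cross terms — and grouping coefficients of $\mathbb{E}\|\nabla_k f(w_t)\|_2^2$ versus the additive $M/B$ noise should reproduce exactly the bracketed factor $\big(1 - L_k\gamma_k - \frac{L_k\gamma_k M_k}{B} - \frac{L_g^2\gamma_k M_k \sum\gamma_k}{KB} - \frac{L_g^2\gamma_k\sum\gamma_k}{K}\big)$ and the two additive terms in Eq.~(\ref{iiq_lem1}).

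I expect the main obstacle to be the bookkeeping in the staleness term: getting the combinatorial factors right when bounding $\|w_{t:s}-w_t\|_2^2$ and then summing the resulting double sum over $s$ and the internal index $s'$ so that the $\frac{1}{K}\sum_k\gamma_k$ and $\frac{1}{K}\sum_k\gamma_k^2$ averages emerge with the stated constants rather than looser ones. One has to be careful that the cross-term Young's inequality splitting is done with the right weights so that the $\|\nabla_k f(w_t)\|_2^2$ coefficient stays inside the bracket (not producing an extra $\frac{1}{2}$ loss) and that the noise floor $M/B$ terms aggregate into precisely $\sum_k \frac{L_k\gamma_k^2 M}{2B} + \frac{\sum_k \gamma_k L_g^2 M}{2KB}\sum_k\gamma_k^2$. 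The rest — telescoping, taking total expectation, and invoking the two assumptions — is routine.
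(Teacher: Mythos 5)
Your overall strategy matches the paper's: a micro-step descent inequality from Assumption \ref{ass_lip}, a variance decomposition of $\mathbb{E}\|g_{t:s}\|_2^2$ from Assumption \ref{ass_bd}, and a staleness term controlled by applying a Lipschitz bound to the displacement $w_{t:s}-w_t$. Your handling of the inner product (splitting off $\nabla_k f(w_{t:s})-\nabla_k f(w_t)$ and applying Young's inequality) is equivalent to the paper's use of the identity $2\langle a,b\rangle=\|a\|_2^2+\|b\|_2^2-\|a-b\|_2^2$ followed by discarding the nonnegative $\|\nabla_k f(w_{t:s})\|_2^2$ term; both produce the $\frac{\gamma_k}{2}$ prefactor.

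The concrete problem is your proposed bound $\|w_{t:s}-w_t\|_2^2\le K\sum_{k'}\gamma_{k'}^2\|g_{t:s'}\|_2^2$ via Cauchy--Schwarz. The micro-step increments $\gamma_{k(j)}U_{k(j)}g_{t:j}$ live in pairwise disjoint coordinate blocks, hence are orthogonal, and $\|w_{t:s}-w_t\|_2^2=\sum_{j=0}^{s-1}\gamma_{k(j)}^2\|g_{t:j}\|_2^2$ holds with equality (this is exactly the second line of the paper's inequality (\ref{iq_03})). Invoking Cauchy--Schwarz here wastes a factor of $K$, so your staleness coefficients would come out $K$ times too large and you would not recover the $\frac{1}{K}\sum_{k}\gamma_k$ normalization inside the bracket of Eq.~(\ref{iiq_lem1}). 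Note also that even with the orthogonality fix you would land at $C_2\le L_g^2\,\mathbb{E}\|w_{t:s}-w_t\|_2^2$, whereas the paper asserts $C_2\le\frac{L_g^2}{K}\,\mathbb{E}\|w_{t:s}-w_t\|_2^2$; that extra $\frac{1}{K}$ is what yields the stated constants but is not obviously supplied by the global Lipschitz assumption alone, so you would either need to justify it or accept a bound whose staleness terms are a factor of $K$ weaker than claimed. The remainder of your plan (variance decomposition of each $\mathbb{E}\|g_{t:j}\|_2^2$, enlarging the partial sum over $j<s$ to all $K$ layers, summing over $s$, and telescoping) coincides with the paper's proof.
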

\begin{proof}
	Suppose $K$ layers are updated sequentially from $s=0$ to $K-1$, and we have $w_{t} = w_{t:0}$ and $w_{t+1}= w_{t:K}$. At micro-step $t$:$s$, we set $k(s)=s \pmod{K}+1$. According to Assumption \ref{ass_lip}, we have:
	\begin{eqnarray}
	\mathbb{E} \left[  f(w_{t:s+1}) \right] 
	&\leq &\mathbb{E} \left[ f(w_{t:s})\right] - \mathbb{E}\left< \nabla_{k(s)} f(w_{t:s}),  {\gamma_{k(s)}}   \nabla_{k(s)} f(w_{t}) \right>  + \frac{L_{k(s)} \gamma_{k(s)}^2}{2} \mathbb{E}\left\|  \frac{1}{B} \sum\limits_{i \in I_t}  \nabla_{k(s)} f_{i}(w_t) \right\|_2^2 \nonumber \\
	&= &\mathbb{E} \left[ f(w_{t:s})\right]+ \frac{L_{k(s)} \gamma_{k(s)}^2 }{2} \underbrace{\mathbb{E}\left\| \frac{1}{B} \sum\limits_{i \in I_t}  \nabla_{k(s)} f_{i}(w_t)  \right\|_2^2}_{C_1} - \frac{\gamma_{k(s)}}{2} \bigg( \mathbb{E}\left\|\nabla_{k(s)} f(w_{t:s}) \right\|_2^2 \nonumber \\
	&&  + \mathbb{E} \left\|\nabla_{k(s)} f(w_{t}) \right\|_2^2 -\underbrace{ \mathbb{E}\left\|\nabla_{k(s)} f(w_{t:s}) - \nabla_{k(s)} f(w_{t}) \right\|_2^2}_{C_2}   \bigg).
	\label{iq_01}
	\end{eqnarray}
	In the following context, we will prove that $C_1$ and $C_2$ are upper bounded. At first, we can get the upper bound of $C_1$ as follows:
	\begin{eqnarray}
	C_1 &=& \mathbb{E}\left\| \frac{1}{B} \sum\limits_{i \in I_t} \left( \nabla_{k(s)} f_{i}(w_t) - \nabla_{k(s)} f(w_t) + \nabla_{k(s)} f(w_t)  \right)\right\|_2^2 \nonumber \\
	&= & \frac{1}{B^2} \mathbb{E}\left\| \sum\limits_{i \in I_t} \left( \nabla_{k(s)} f_{i}(w_t) - \nabla_{k(s)} f(w_t) \right)   \right\|_2^2 + \mathbb{E}\left\|   \nabla_{k(s)} f(w_t)  \right\|_2^2 \nonumber \\
	&\leq  & \frac{1}{B^2} \sum\limits_{i \in I_t}  \mathbb{E}\left\|  \nabla_{k(s)} f_{i}(w_t) - \nabla_{k(s)} f(w_t)  \right\|_2^2 + \mathbb{E}\left\|   \nabla_{k(s)} f(w_t)  \right\|_2^2 \nonumber \\
	&\leq & \left(1+ \frac{M_{k(s)}}{B}\right) \mathbb{E}\left\|   \nabla_{k(s)} f(w_t)  \right\|_2^2 + \frac{M}{B},
	\label{iq_02}
	\end{eqnarray}
	where the second equality follows from $\mathbb{E} \left< \nabla_{k(s)} f_{i}(w_t) - \nabla_{k(s)} f(w_t), \nabla_{k(s)} f(w_t) \right> = 0$ and the first inequality follows from  $ \mathbb{E} \left\| \sum\limits_{i=1}^n \xi_i \right\|_2^2\leq \sum\limits_{i=1}^n  \mathbb{E} \left\|  \xi_i  \right\|_2^2$ if $\mathbb{E}[\xi_i]=0$  and the second inequality follows from Assumption \ref{ass_bd}.
	Following ``global'' Lipschitz continuous in  Assumption \ref{ass_linear}, we can bound  $C_2$ as follows:
	\begin{eqnarray}
	C_2 & \leq & \frac{ L_g^2}{K}  \mathbb{E} \left\|  w_{t:s} - w_t   \right\|_2^2 \nonumber \\
	&= &    \frac{ L_g^2}{K}  \mathbb{E} \left\|\sum\limits_{j =0 }^{s-1}   \frac{\gamma_{k(j)}}{B} \sum\limits_{i \in I_t} \nabla_{k(j)} f_{i}(w_t)  \right\|_2^2 \nonumber \\
	&= & \frac{L_{g}^2 }{KB^2}  \sum\limits_{j=0}^{s-1}   \gamma_{k(j)}^2  \mathbb{E} \left\| \sum\limits_{i \in I_t} \left(   \nabla_{k(j)} f_{i}(w_t)  -   \nabla_{k(j)} f(w_t) \right) + B  \nabla_{k(j)} f(w_t)   \right\|_2^2 \nonumber \\
	&\leq &  \frac{L_{g}^2 }{KB^2}  \sum\limits_{j=0}^{s-1}   \gamma_{k(j)}^2  \sum\limits_{i \in I_t} \mathbb{E} \left\|   \nabla_{k(j)} f_{i}(w_t)  -   \nabla_{k(j)} f(w_t)    \right\|_2^2 +  \frac{L_{g}^2 }{K}  \sum\limits_{j=0}^{s-1}   \gamma_{k(j)}^2  \mathbb{E} \left\|  \nabla_{k(j)} f(w_t)   \right\|_2^2 \nonumber \\
	&\leq &  \frac{L_{g}^2 }{KB}  \sum\limits_{j=0}^{s-1}   \gamma_{k(j)}^2   \left( M_{k(j)} \mathbb{E} \left\|  \nabla_{k(j)} f(w_t)   \right\|_2^2 + M\right) +  \frac{L_{g}^2 }{K}   \sum\limits_{j=0}^{s-1}   \gamma_{k(j)}^2  \mathbb{E} \left\|  \nabla_{k(j)} f(w_t)   \right\|_2^2 \nonumber \\
	&\leq &  \frac{L_{g}^2 }{KB}  \sum\limits_{k=1}^{K}   \gamma_{k}^2   \left( M_{k} \mathbb{E} \left\|  \nabla_{k} f(w_t)   \right\|_2^2 + M\right) + \frac{L_{g}^2 }{K}  \sum\limits_{k=1}^{K}   \gamma_{k}^2  \mathbb{E} \left\|  \nabla_{k} f(w_t)   \right\|_2^2, 
	\label{iq_03}
	\end{eqnarray}
	where the first inequality follows from Assumption \ref{ass_lip}, the second inequality follows from  $ \mathbb{E} \left\| \sum\limits_{i=1}^n \xi_i \right\|_2^2\leq \sum\limits_{i=1}^n  \mathbb{E} \left\|  \xi_i  \right\|_2^2$ if $\mathbb{E}[\xi_i]=0$,
	the third inequality follows from Assumption \ref{ass_bd} and the last inequality is because $s\leq K-1$.
	Combing inequalities (\ref{iq_01}), (\ref{iq_02}) and (\ref{iq_03}), we have:
	\begin{eqnarray}
	\mathbb{E} \left[  f(w_{t:s+1}) \right]  &\leq & \mathbb{E} \left[ f(w_{t:s}) \right]  - \bigg( \frac{\gamma_{k(s)}}{2} -  \frac{L_{k(s)} \gamma_{k(s)}^2}{2} -  \frac{L_{k(s)} \gamma_{k(s)}^2 M_{k(s)} }{2B} \bigg)\mathbb{E}\left\|   \nabla_{k(s)} f(w_t)  \right\|_2^2  \nonumber \\
	&&+ \frac{\gamma_{k(s)} L_g^2}{2KB} \sum\limits_{k=1}^K \gamma_{k}^2 M_k \mathbb{E}\|\nabla_k f(w_t) \|_2^2   +   \frac{\gamma_{k(s)} L_g^2}{2K} \sum\limits_{k=1}^K \gamma_{k}^2\mathbb{E}\|\nabla_k f(w_t) \|_2^2 \nonumber  \\
	&&   + \frac{L_{k(s)} \gamma_{k(s)}^2M}{2B}  + \frac{\gamma_{k(s)} L_g^2M}{2KB} \sum\limits_{k=1}^K \gamma_{k}^2.
	\end{eqnarray}
	By summing from $s=0$ to $K-1$, because  $w_{t} = w_{t:0}$ and $w_{t+1}= w_{t:K}$, we have:
	\begin{eqnarray}
	&&		\mathbb{E} \left[  f(w_{t+1}) \right]  \nonumber \\
	&\leq & \mathbb{E} \left[ f(w_{t}) \right] - \sum\limits_{k=1}^{K} \left( \frac{\gamma_{k}}{2} -  \frac{L_{k} \gamma_{k}^2}{2} -  \frac{L_{k} \gamma_{k}^2 M_{k} }{2B} \right)\mathbb{E}\left\|   \nabla_{k} f(w_t)  \right\|_2^2  \nonumber \\
	&& + \frac{\sum\limits_{k=1}^{K}  \gamma_{k} L_g^2}{2KB} \sum\limits_{k=1}^K \gamma_{k}^2 M_k \mathbb{E}\|\nabla_k f(w_t) \|_2^2  +  \frac{\sum\limits_{k=1}^K  \gamma_{k} L_g^2}{2K} \sum\limits_{k=1}^K \gamma_{k}^2\mathbb{E}\|\nabla_k f(w_t) \|_2^2 \nonumber \\
	&& + \sum\limits_{k=1}^K  \frac{L_{k} \gamma_{k}^2M}{2B}  + \frac{\sum\limits_{k=1}^K  \gamma_{k} L_g^2M}{2KB} \sum\limits_{k=1}^K \gamma_{k}^2 \nonumber \\ 
	&\leq & \mathbb{E} \left[ f(w_{t}) \right] - \sum\limits_{k=1}^{K} \frac{\gamma_{k}}{2} \left( 1 -  {L_{k} \gamma_{k}}-  \frac{L_{k} \gamma_{k} M_{k} }{B} -  \frac{L_g^2 \gamma_k M_k  \sum\limits_{k=1}^K\gamma_{k}  }{KB} - \frac{ L_g^2 \gamma_k   \sum\limits_{k=1}^K\gamma_{k}  }{K} \right)\mathbb{E}\left\|   \nabla_{k} f(w_t)  \right\|_2^2  \nonumber \\
	&& + \sum\limits_{k=1}^K  \frac{L_{k} \gamma_{k}^2M}{2B}  +\frac{\sum\limits_{k=1}^K  \gamma_{k} L_g^2M}{2KB} \sum\limits_{k=1}^K \gamma_{k}^2.
	\end{eqnarray}
	$\hfill\blacksquare$
\end{proof}

\textbf{Proof of Theorem \ref{them1}}
\begin{proof}
	Following Lemma \ref{lem_1} and defining $\kappa_k = \frac{L_g}{L_k} \leq \kappa $, if $\gamma_{k}$ satisfies following inequalities:
	\begin{eqnarray}
	{L_{k} \gamma_{k}} & \leq & \frac{1}{8}, \\
	\frac{L_{k} \gamma_{k}M_k}{B} & \leq & \frac{1}{8}, \\
	\frac{L_g^2 \gamma_k M_k  \sum\limits_{k=1}^K\gamma_{k}  }{KB}  &\leq & \frac{1}{4}, \\
	\frac{L_g^2 \gamma_k   \sum\limits_{k=1}^K\gamma_{k}}{K}  & \leq & \frac{1}{4},
	\end{eqnarray}
	which  are equivalent to  $\gamma_k \leq \min \left\{\frac{1}{8L_k}, \frac{B}{8 L_kM_k} \right\} $ and $ \frac{1}{K} \sum\limits_{k=1}^K\gamma_k \leq \min \left\{\frac{1}{2L_g},   \frac{1}{2 L_g }  \sqrt{\frac{B}{M_g  }} \right\} $.   Therefore, it holds that:
	\begin{eqnarray}
	\mathbb{E} \left[  f(w_{t+1}) \right] &\leq & \mathbb{E} \left[ f(w_{t}) \right]  -   \sum\limits_{k=1}^K  \frac{ \gamma_k }{8}\mathbb{E}\left\|   \nabla_k f(w_t)  \right\|_2^2 + \sum\limits_{k=1}^K  \frac{(2 + \kappa )ML_k \gamma_k^2}{4B} .
	\end{eqnarray}
	Rearranging the above inequality  and summing it from $t=0$ to $T-1$, we have:
	\begin{eqnarray}
	\frac{1}{8} \sum\limits_{t=0}^{T-1}  \sum\limits_{k=1}^K \gamma_k \mathbb{E}\left\|   \nabla_k f(w_t)  \right\|_2^2  &\leq &   f(w_{0})  -   \mathbb{E} \left[  f(w_{T}) \right]  +  \frac{(2 + \kappa )MT}{4B}   \sum\limits_{k=1}^K L_k \gamma_k^2.
	\end{eqnarray}
	Because  $f(w_T) \geq f_{\inf}$, let $\gamma_k = \frac{\gamma}{L_k}$  and dividing both sides by $\frac{T}{8} { \sum\limits_{k=1}^K \gamma_k }$, we have:
	\begin{eqnarray}
	\frac{1}{T}  \sum\limits_{t=0}^{T-1} \sum\limits_{k=1}^K q_k \mathbb{E} \left\|   \nabla_k f(w_t)  \right\|_2^2  &\leq & \frac{8(f(w_0)-f_{\inf} )}{T\gamma \sum\limits_{k=1}^K \frac{1}{L_k} } + \frac{( 4+ 2\kappa )M\gamma}{B},
	\end{eqnarray}
	where $q_k = \frac{\frac{1}{L_k}}{\sum\limits_{k=1}^K \frac{1}{L_k}}$.
	We complete the proof.
	$\hfill\blacksquare$
\end{proof}

\textbf{Proof of Corollary \ref{cor_1_1}}
\begin{proof}
	Because $\min\limits_{t \in \{0,...,T-1\}}  \sum\limits_{k=1}^K q_k\mathbb{E}  \left\|   \nabla_k f(w_t)  \right\|_2^2 \leq \frac{1}{T}  \sum\limits_{t=0}^{T-1}   \sum\limits_{k=1}^K q_k\mathbb{E} \left\|   \nabla_k f(w_t)  \right\|_2^2  $ suppose $\frac{1}{8L_k}$ dominates the upper bound of $\gamma_k$, if:
	\begin{eqnarray}
	\gamma & = & \min \left\{ \frac{1}{8}, \sqrt{ \frac{B (f(w_0)-f_{\inf} )  }{TM \sum\limits_{k=1}^K \frac{1}{L_k} }  } \right\}
	\end{eqnarray}
	we have:
	\begin{eqnarray}
	\frac{1}{T}  \sum\limits_{t=0}^{T-1} \sum\limits_{k=1}^K q_k \mathbb{E} \left\|   \nabla_k f(w_t)  \right\|_2^2  &\leq &  \frac{8(f(w_0)-f_{\inf} )}{T \sum\limits_{k=1}^K \frac{1}{L_k} } \max\left\{{8}, \sqrt{ \frac{TM \sum\limits_{k=1}^K \frac{1}{L_k} }{B (f(w_0)-f_{\inf} )  }  }  \right\}  + \frac{( 4+2 \kappa)M }{B} \sqrt{ \frac{B (f(w_0)-f_{\inf} )  }{TM \sum\limits_{k=1}^K \frac{1}{L_k} }  } \nonumber \\
	&\leq &  \frac{64(f(w_0)-f_{\inf} )}{T \sum\limits_{k=1}^K \frac{1}{L_k} } +  \left( 12 + 2\kappa  \right) \sqrt{ \frac{M (f(w_0)-f_{\inf} )  }{TB \sum\limits_{k=1}^K \frac{1}{L_k}  }  }. 
	\end{eqnarray}
	$\hfill\blacksquare$
\end{proof}

\section{Fine-Grained Analysis of Mini-Batch Nesterov's Accelerated Gradient}

Following \cite{yang2016unified}, we  define $
p_t = \frac{\beta }{1-\beta} \left(w_t - w_{t-1} + g_{t-1} \right)$,
where  $w_{-1} = w_0$, $g_t =\sum_{k=1}^K  \gamma_k  U_k   \left( \frac{1}{B} \sum_{i \in I_t}  \nabla_k f_i(w_t)\right) $ and $g_{-1} = 0$.
Let $z_t = w_t + p_t$, we prove that $\mathbb{E} [f(z_{t+1} )]$ is upper bounded at each step in the following Lemma.
\begin{lemma}
	Under Assumptions \ref{ass_lip} and \ref{ass_bd}, after applying Eq.~(\ref{newnag}) for $K$ micro-steps from $s=0$ to $s=K-1$, we have the upper bound of loss $\mathbb{E}[f(z_{t+1})]$ as follows:
	\begin{eqnarray}
	\mathbb{E} [f(z_{t+1} )]  &\leq & \mathbb{E} [f(z_{t} )]  - \sum\limits_{k=1}^K \frac{\gamma_{k}}{2(1-\beta)}    \bigg( 1- \frac{L_{k} \gamma_{k} }{1-\beta} - \frac{L_{k} \gamma_{k}M_{k} }{(1-\beta)B}  \nonumber \\
	&& - \frac{2L_{g}^2 \gamma_{k}    M_{k}  }{(1-\beta)^2  KB}  \sum\limits_{k=1}^{K}   \gamma_{k}  -   \frac{2L_{g}^2 \gamma_{k} }{(1-\beta)^2K}  \sum\limits_{k=1}^{K}   \gamma_{k}    \bigg) \mathbb{E} \left\|\nabla_{k} f(w_t)  \right\|_2^2  \nonumber \\
	&&  +  \frac{ M L_{g}^2 }{(1-\beta)^3  KB}  \sum\limits_{k=1}^{K}  \gamma_{k}   \sum\limits_{k=1}^{K}   \gamma_{k}^2 +  \sum\limits_{k=1}^{K}  \frac{L_{k} \gamma_{k}^2 M}{2(1-\beta)^2 B} +    \sum\limits_{k=1}^{K}  \frac{ L_{g}^2 \gamma_{k}}{(1-\beta)K}  \mathbb{E} \left\|p_{t}  \right\|_2^2.
	\end{eqnarray}
	\label{lem2_1}
\end{lemma}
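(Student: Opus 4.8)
The plan is to run the fine-grained descent argument of Lemma~\ref{lem_1}, but on the auxiliary sequence $z_t = w_t + p_t$ rather than on $w_t$, so that the momentum coupling collapses into a single clean identity. First I would verify, by substituting the mNAG recursion~(\ref{mnag}) together with the definitions of $p_t$ and $z_t = w_t + p_t$, the telescoping identity
\[
z_{t+1} - z_t = -\frac{1}{1-\beta}\, g_t, \qquad g_t = \sum_{k=1}^K \gamma_k U_k\Big(\tfrac1B\sum_{i\in I_t}\nabla_k f_i(w_t)\Big).
\]
Hence the $z$-iterates move along the stochastic gradient of $f$ evaluated at $w_t$, with effective block-wise step $\gamma_k/(1-\beta)$; the price is only that this gradient is stale (taken at $w_t$, not at the current micro-iterate) and that $z_t$ differs from $w_t$ by $p_t$.

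Next I would evaluate $f$ along the piecewise-linear path from $z_t$ to $z_{t+1}$ one layer-block at a time, introducing the interpolating points $z_{t:0}=z_t,\dots,z_{t:K}=z_{t+1}$ with $z_{t:s} = z_t - \tfrac{1}{1-\beta}\sum_{j<s}\gamma_{k(j)}U_{k(j)}(\tfrac1B\sum_{i\in I_t}\nabla_{k(j)}f_i(w_t))$, so that $z_{t:s+1}-z_{t:s} = -\tfrac{\gamma_{k(s)}}{1-\beta}U_{k(s)}(\tfrac1B\sum_{i\in I_t}\nabla_{k(s)}f_i(w_t))$ --- the exact analogue of the micro-steps in~(\ref{newnag}). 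Applying the layer-wise descent lemma (Assumption~\ref{ass_lip}) at micro-step $t$:$s$ gives
\[
\mathbb{E}\,f(z_{t:s+1}) \le \mathbb{E}\,f(z_{t:s}) - \frac{\gamma_{k(s)}}{1-\beta}\,\mathbb{E}\Big\langle \nabla_{k(s)}f(z_{t:s}),\, \tfrac1B\sum_{i\in I_t}\nabla_{k(s)}f_i(w_t)\Big\rangle + \frac{L_{k(s)}\gamma_{k(s)}^2}{2(1-\beta)^2}\,\mathbb{E}\Big\|\tfrac1B\sum_{i\in I_t}\nabla_{k(s)}f_i(w_t)\Big\|_2^2 .
\]
Taking the conditional expectation over $I_t$ and applying the polarization identity $2\langle a,b\rangle = \|a\|_2^2+\|b\|_2^2-\|a-b\|_2^2$ with $a=\nabla_{k(s)}f(z_{t:s})$ and $b=\nabla_{k(s)}f(w_t)$ produces the desired negative term $-\tfrac{\gamma_{k(s)}}{2(1-\beta)}\mathbb{E}\|\nabla_{k(s)}f(w_t)\|_2^2$, a harmless $-\tfrac{\gamma_{k(s)}}{2(1-\beta)}\mathbb{E}\|\nabla_{k(s)}f(z_{t:s})\|_2^2$ that I discard, and a deviation term $+\tfrac{\gamma_{k(s)}}{2(1-\beta)}\mathbb{E}\|\nabla_{k(s)}f(z_{t:s})-\nabla_{k(s)}f(w_t)\|_2^2$ --- exactly the two quantities $C_1$ and $C_2$ already controlled in the proof of Lemma~\ref{lem_1}.

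The two estimates I would reuse are: (i) $\mathbb{E}\|\tfrac1B\sum_{i\in I_t}\nabla_k f_i(w_t)\|_2^2 \le (1+\tfrac{M_k}{B})\mathbb{E}\|\nabla_k f(w_t)\|_2^2 + \tfrac{M}{B}$, which is inequality~(\ref{iq_02}) and uses only Assumption~\ref{ass_bd}; and (ii) the deviation bound from the global Lipschitz Assumption~\ref{ass_linear}, $\mathbb{E}\|\nabla_k f(z_{t:s})-\nabla_k f(w_t)\|_2^2 \le \tfrac{L_g^2}{K}\mathbb{E}\|z_{t:s}-w_t\|_2^2$, exactly as in~(\ref{iq_03}). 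For the latter I would split $z_{t:s}-w_t = (z_{t:s}-z_t) + p_t$, so $\|z_{t:s}-w_t\|_2^2 \le 2\|z_{t:s}-z_t\|_2^2 + 2\|p_t\|_2^2$; the accumulated displacement $z_{t:s}-z_t$ has blocks with disjoint supports, so its squared norm expands into a sum over $j<s$ that is bounded term-by-term by~(i), just as $C_2$ is handled in~(\ref{iq_03}), now carrying an extra $1/(1-\beta)^2$ from the rescaled step, while the $p_t$ piece is simply retained --- which is why $\mathbb{E}\|p_t\|_2^2$ survives in the statement of Lemma~\ref{lem2_1}. Summing the micro-step inequalities from $s=0$ to $K-1$, using $s\le K-1$ to pass from partial sums $\sum_{j<s}$ to full sums $\sum_{k=1}^K$ and from $\sum_s\gamma_{k(s)}$ to $\sum_{k=1}^K\gamma_k$, and collecting the coefficients of $\mathbb{E}\|\nabla_k f(w_t)\|_2^2$ yields the claimed inequality (with $z_{t:0}=z_t$ and $z_{t:K}=z_{t+1}$).

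I expect the main obstacle to be bookkeeping rather than anything conceptual. Because the same mini-batch $I_t$ is reused at every micro-step, $z_{t:s}$ is itself $I_t$-measurable, so the identity $\mathbb{E}[\tfrac1B\sum_{i\in I_t}\nabla_{k(s)}f_i(w_t)\mid w_t] = \nabla_{k(s)}f(w_t)$ may only be invoked on factors that do not themselves depend on $I_t$; this is precisely why the polarization step --- which trades the stochastic gradient inside the inner product for the full gradient $\nabla_{k(s)}f(w_t)$ and relegates the $I_t$-dependence of $z_{t:s}$ to the $C_2$ deviation term --- is the linchpin of the argument. The second source of care is tracking every power of $(1-\beta)$: one from the $\gamma_k/(1-\beta)$ rescaling in the linear terms, a second in the quadratic terms, and a further one generated by the nested $\sum_k\gamma_k$ coming out of $C_2$, so that the final coefficients $\tfrac{2L_g^2\gamma_k M_k}{(1-\beta)^2 KB}\sum_k\gamma_k$, $\tfrac{ML_g^2}{(1-\beta)^3 KB}\sum_k\gamma_k\sum_k\gamma_k^2$, and $\tfrac{L_g^2\gamma_k}{(1-\beta)K}$ emerge with exactly the stated exponents.
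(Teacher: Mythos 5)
Your proposal follows essentially the same route as the paper's own proof of Lemma~\ref{lem2_1}: it introduces the same auxiliary micro-iterates $z_{t:s}=w_{t:s}+p_{t:s}$ satisfying $z_{t:s+1}=z_{t:s}-\frac{\gamma_{k(s)}}{(1-\beta)B}\sum_{i\in I_t}U_{k(s)}\nabla_{k(s)}f_i(w_t)$, applies the layer-wise descent inequality with the polarization identity, reuses the variance bound~(\ref{iq_02}) for the quadratic term and the global-Lipschitz bound~(\ref{iq_03}) for the deviation term with the split $z_{t:s}-w_t=(z_{t:s}-z_t)+p_t$, and sums over $s$. The measurability subtlety you flag (replacing the mini-batch gradient by $\nabla_{k(s)}f(w_t)$ in an inner product whose other factor depends on $I_t$) is present in the paper's argument in exactly the same form, so your treatment matches the paper's.
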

\begin{proof}
	We define $w_{t:0} = w_t$, and at step $t$:$s$, we have layer index $k(s)=s+1$. Thus, we can rewrite Eq.~(\ref{newnag}) for any $s \in \{0,1,...,K-1\}$ as follows:
	\begin{eqnarray}
	w_{t:s+1} & = &   w_{t:s} - \frac{\gamma_{k(s)}}{B} \sum\limits_{i \in I_t}   U_{k(s)} \nabla_{k(s)} f_{i}(w_t)  + \beta \bigg(w_{t:s} - \frac{\gamma_{k(s)}}{B}  \sum\limits_{i \in I_t} U_{k(s)} \nabla_{k(s)} f_{i}(w_t) \nonumber \\
	&& - w_{t:s-1} + \frac{\gamma_{k(s-1)}}{B}   \sum\limits_{i \in I_t} U_{k(s-1)} \nabla_{k(s-1)} f_{i}(w_t)  \bigg),
	\label{iq_4001}
	\end{eqnarray}
	where we let $w_{t:0} = w_{t:-1}$ and $ \nabla_{k(-1)} f_{i(-1)} (w_{t})= 0$. We also define $p_{t:s}$ as follows:
	\begin{eqnarray}
	p_{t:s} &=&
	\frac{\beta }{1-\beta} \left(w_{t:s} - w_{t:s-1} + \frac{\gamma_{k(s-1)}}{B}  \sum\limits_{i \in I_t} U_{k(s-1)} \nabla_{k(s-1)} f_{i}(w_t) \right).
	\label{iq_4002}
	\end{eqnarray}
	Combining (\ref{iq_4001}) and (\ref{iq_4002}),  we have:
	\begin{eqnarray}
	w_{t:s+1} + p_{t:s+1}  &=&w_{t:s} + p_{t:s} - \frac{\gamma_{k(s)}}{(1-\beta)B}  \sum\limits_{i \in I_t}  U_{k(s)} \nabla_{k(s)} f_{i}(w_t) .
	\end{eqnarray}
	Let $z_{t:s} = w_{t:s} + p_{t:s}$, according to Assumption \ref{ass_lip}, we have:
	\begin{eqnarray}
	\mathbb{E} [f(z_{t:s+1} )]
	&\leq &  \mathbb{E} [f(z_{t:s} )] -  \frac{\gamma_{k(s)}}{(1-\beta)} \mathbb{E}\left<\nabla_{k(s)} f(z_{t:s}),   \nabla_{k(s)} f(w_t)  \right> + \frac{L_{k(s)} \gamma_{k(s)}^2 }{2(1-\beta)^2 }  \mathbb{E} \left\| \frac{1}{B}   \sum\limits_{i \in I_t} \nabla_{k(s)} f_{i}(w_t)   \right\|_2^2 \nonumber \\
	&= &  \mathbb{E} [f(z_{t:s} )] + \frac{L_{k(s)} \gamma_{k(s)}^2 }{2(1-\beta)^2 } \underbrace{ \mathbb{E} \left\|  \frac{1}{B} \sum\limits_{i \in I_t}  \nabla_{k(s)} f_{i}(w_t)   \right\|_2^2}_{C_3}    -\frac{\gamma_{k(s)}}{2(1-\beta)}  \bigg(  \mathbb{E} \left\|\nabla_{k(s)} f(z_{t:s}) \right\|_2^2 +  \mathbb{E} \left\|\nabla_{k(s)} f(w_t)  \right\|_2^2 \nonumber \\
	&& -\underbrace{ \mathbb{E} \left\|\nabla_{k(s)} f(z_{t:s}) - \nabla_{k(s)} f(w_t)  \right\|_2^2}_{C_4}  \bigg).
	\label{iq_5001}
	\end{eqnarray}
	From (\ref{iq_02}), it is easy to know that the upper bound of $C_3$ as follows:
	\begin{eqnarray}
	C_3	&\leq & \left(1+ \frac{M_{k(s)}}{B}\right) \mathbb{E}\left\|   \nabla_{k(s)} f(w_t)  \right\|_2^2 + \frac{M}{B}.
	\label{iq_5002}
	\end{eqnarray}
	We then obtain the upper bound of $C_4$:
	\begin{eqnarray}
	C_4 &\leq  &  \frac{L_{g}^2}{K} \mathbb{E} \left\| z_{t:s} - w_{t:0}  \right\|_2^2 \nonumber \\
	&= & \frac{L_{g}^2}{K} \mathbb{E} \left\|z_{t:s} - z_{t:0} +  z_{t:0} - w_{t:0}  \right\|_2^2 \nonumber \\
	&\leq &  \frac{ 2L_{g}^2 }{(1-\beta)^2KB^2} \mathbb{E} \left\| \sum\limits_{j=0}^{s-1} \gamma_{k(j)}
	\sum\limits_{i \in I_t } \nabla_{k(j)} f_i (w_t)   \right\|_2^2 + \frac{2L_{g}^2}{K}  \mathbb{E} \left\|p_{t:0}  \right\|_2^2\nonumber \\
	&\leq&  \frac{2L_{g}^2 }{(1-\beta)^2 K B}  \sum\limits_{k=1}^{K}   \gamma_{k}^2   \left( M_{k} \mathbb{E} \left\|  \nabla_{k} f(w_t)   \right\|_2^2 + M\right) +  \frac{2L_{g}^2 }{(1-\beta)^2K}  \sum\limits_{k=1}^{K}   \gamma_{k}^2  \mathbb{E} \left\|  \nabla_{k} f(w_t)   \right\|_2^2 + \frac{2L_{g}^2}{K}  \mathbb{E} \left\|p_{t:0}  \right\|_2^2,
	\label{iq_5003}
	\end{eqnarray}
	where the first inequality follows from $\|a+b\|^2_2 \leq 2\|a\|_2^2+ 2\|b\|_2^2 $ and the second inequality follows from inequality (\ref{iq_03}).
	After combining (\ref{iq_5001}), (\ref{iq_5002}) and (\ref{iq_5003}), we have:
	\begin{eqnarray}
	\mathbb{E} [f(z_{t:s+1} )] &\leq & \mathbb{E} [f(z_{t:s} )]  - \bigg( \frac{\gamma_{k(s)}}{2(1-\beta)}  - \frac{L_{k(s)} \gamma_{k(s)}^2 }{2(1-\beta)^2} - \frac{L_{k(s)} \gamma_{k(s)}^2M_{k(s)} }{2(1-\beta)^2B}  \bigg) \mathbb{E} \left\|\nabla_{k(s)} f(w_t)  \right\|_2^2   \nonumber \\
	&& +  \frac{L_{g}^2 \gamma_{k(s)} }{(1-\beta)^3  KB}  \sum\limits_{k=1}^{K}   \gamma_{k}^2    M_{k} \mathbb{E} \left\|  \nabla_{k} f(w_t)   \right\|_2^2 +  \frac{L_{g}^2 \gamma_{k(s)} }{(1-\beta)^3K}  \sum\limits_{k=1}^{K}   \gamma_{k}^2  \mathbb{E} \left\|  \nabla_{k} f(w_t)   \right\|_2^2  \nonumber \\
	&&  +  \frac{ M L_{g}^2 \gamma_{k(s)} }{(1-\beta)^3  KB}  \sum\limits_{k=1}^{K}   \gamma_{k}^2 + \frac{L_{k(s)} \gamma_{k(s)}^2 M}{2(1-\beta)^2 B} + \frac{ L_{g}^2 \gamma_{k(s)}}{(1-\beta)K}  \mathbb{E} \left\|p_{t}  \right\|_2^2.
	\label{iq_5004}
	\end{eqnarray}
	Summing  (\ref{iq_5004}) from $s=0$ to $K-1$, because $z_{t:0} = z_{t}$ and $z_{t:K} = z_{t+1}$, we have:
	\begin{eqnarray}
	\mathbb{E} [f(z_{t+1} )]  &\leq & \mathbb{E} [f(z_{t} )]  - \sum\limits_{k=1}^K \frac{\gamma_{k}}{2(1-\beta)}    \bigg( 1- \frac{L_{k} \gamma_{k} }{1-\beta} - \frac{L_{k} \gamma_{k}M_{k} }{(1-\beta)B}  \nonumber \\
	&& - \frac{2L_{g}^2 \gamma_{k}    M_{k}  }{(1-\beta)^2  KB}  \sum\limits_{k=1}^{K}   \gamma_{k}  -   \frac{2L_{g}^2 \gamma_{k} }{(1-\beta)^2K}  \sum\limits_{k=1}^{K}   \gamma_{k}    \bigg) \mathbb{E} \left\|\nabla_{k} f(w_t)  \right\|_2^2  \nonumber \\
	&&  +  \frac{ M L_{g}^2 }{(1-\beta)^3  KB}  \sum\limits_{k=1}^{K}  \gamma_{k}   \sum\limits_{k=1}^{K}   \gamma_{k}^2 +  \sum\limits_{k=1}^{K}  \frac{L_{k} \gamma_{k}^2 M}{2(1-\beta)^2 B} +    \sum\limits_{k=1}^{K}  \frac{ L_{g}^2 \gamma_{k}}{(1-\beta)K}  \mathbb{E} \left\|p_{t}  \right\|_2^2.
	\label{iq_5005}
	\end{eqnarray}
	$\hfill\blacksquare$
\end{proof}

\begin{lemma}
	Under Assumptions \ref{ass_lip} and \ref{ass_bd}, after applying Eq.~(\ref{nag}) from $t=0$ to $T-1$, the following inequality is satisfied that:
	\begin{eqnarray}
	\sum\limits_{t=0}^{T-1}  \mathbb{E} \left\|p_{t}  \right\|_2^2
	&\leq &  \sum\limits_{k=1}^K \frac{\beta^4\gamma_{k}^2MT}{(1-\beta)^4 B }   +  \sum\limits_{k=1}^K  \left( 1+ \frac{M_k}{B}\right) \frac{\beta^4\gamma_{k}^2}{(1-\beta)^4 } \sum\limits_{t=0}^{T-1}  \mathbb{E} \left\|   \nabla_k f(w_{t}) \right\|_2^2.
	\end{eqnarray}	
	\label{lem2_2}
\end{lemma}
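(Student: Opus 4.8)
The plan is to reduce the auxiliary sequence $p_t$ to a simple first-order recursion driven by the stochastic steps $g_t$, unroll it, bound it by a convexity argument together with geometric-series bookkeeping, and finish with the per-layer variance estimate already established in the proof of Lemma~\ref{lem_1}.

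First I would rewrite $p_t$ in terms of the iterates of Eq.~(\ref{nag}). Writing the aggregated step as $g_t=\sum_{k=1}^K\gamma_k U_k\big(\tfrac1B\sum_{i\in I_t}\nabla_k f_i(w_t)\big)$, the update $v_{t+1}=w_t-g_t$ gives $w_{t-1}-g_{t-1}=v_t$, so that $p_t=\tfrac{\beta}{1-\beta}(w_t-v_t)$ (and $p_0=0$ since $w_{-1}=w_0$, $g_{-1}=0$). Combining this with $w_{t+1}=v_{t+1}+\beta(v_{t+1}-v_t)$, hence $w_{t+1}-v_{t+1}=\beta(v_{t+1}-v_t)$, and with $v_{t+1}-v_t=(w_t-g_t)-v_t=\tfrac{1-\beta}{\beta}p_t-g_t$, yields the recursion
\[
p_{t+1}=\beta p_t-\frac{\beta^2}{1-\beta}\,g_t,\qquad p_0=0,
\]
so that $p_t=-\tfrac{\beta^2}{1-\beta}\sum_{j=0}^{t-1}\beta^{t-1-j}g_j$.

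Next, since $\sum_{j=0}^{t-1}\beta^{t-1-j}\le\tfrac1{1-\beta}$, convexity of $\|\cdot\|_2^2$ (weighted Jensen) gives $\big\|\sum_{j=0}^{t-1}\beta^{t-1-j}g_j\big\|_2^2\le\tfrac1{1-\beta}\sum_{j=0}^{t-1}\beta^{t-1-j}\|g_j\|_2^2$, hence $\|p_t\|_2^2\le\tfrac{\beta^4}{(1-\beta)^3}\sum_{j=0}^{t-1}\beta^{t-1-j}\|g_j\|_2^2$. Summing over $t=0,\dots,T-1$ and exchanging the order of summation — for a fixed $j$ the coefficients $\beta^{t-1-j}$ again sum to at most $\tfrac1{1-\beta}$ — produces $\sum_{t=0}^{T-1}\mathbb{E}\|p_t\|_2^2\le\tfrac{\beta^4}{(1-\beta)^4}\sum_{t=0}^{T-1}\mathbb{E}\|g_t\|_2^2$. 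Finally, because $U_1,\dots,U_K$ have disjoint column supports, $\|g_t\|_2^2=\sum_{k=1}^K\gamma_k^2\big\|\tfrac1B\sum_{i\in I_t}\nabla_k f_i(w_t)\big\|_2^2$, and applying inequality~(\ref{iq_02}) layer by layer gives $\mathbb{E}\|g_t\|_2^2\le\sum_{k=1}^K\gamma_k^2\big[(1+\tfrac{M_k}{B})\mathbb{E}\|\nabla_k f(w_t)\|_2^2+\tfrac{M}{B}\big]$; summing over $t$ and inserting this into the previous display gives the claimed bound exactly.

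The argument is essentially mechanical; the one step that needs genuine care is the first one, where the substitutions must be arranged so the recursion lands precisely on $p_{t+1}=\beta p_t-\tfrac{\beta^2}{1-\beta}g_t$ with $p_0=0$, since an off-by-one in the index shifts or a misplaced factor of $\beta/(1-\beta)$ would corrupt the constants in the final estimate. The double geometric-series manipulation is the other place to be attentive, but it is routine.
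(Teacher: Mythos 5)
Your proposal is correct and follows essentially the same route as the paper's proof: the recursion $p_{t+1}=\beta p_t-\tfrac{\beta^2}{1-\beta}g_t$ with $p_0=0$, unrolling, weighted Jensen with the geometric weights, exchange of the double sum using $\sum_j\beta^j\le\tfrac1{1-\beta}$, and the layer-wise variance bound on $\mathbb{E}\|g_t\|_2^2$ from Assumption~\ref{ass_bd}. The only cosmetic difference is that you derive the recursion through the $v_t$ iterates ($p_t=\tfrac{\beta}{1-\beta}(w_t-v_t)$) rather than directly from the $w_t$ update, and you apply the variance bound after collapsing the double sum rather than before; both orderings give the identical constants.
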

\begin{proof}
	We  define $p_t$ as follows:
	\begin{eqnarray}
	p_t &=& \frac{\beta }{1-\beta} \left(w_t - w_{t-1} + g_{t-1} \right),
	\end{eqnarray}
	where we let $w_{-1} = w_0$, $g_t = \sum\limits_{k=1}^K  \frac{\gamma_k}{B} \sum\limits_{i \in I_t}  \nabla_k f_i(w_t) $ and $g_{-1} = 0$.  According to the update of mini-batch NAG in Eq.~(\ref{nag}),
	it holds that:
	\begin{eqnarray}
	w_{t+1} & = &   w_{t} - g_t  + \beta \left(w_t 	- g_t - w_{t-1} + g_{t-1}  \right).
	\end{eqnarray}
	According to the definition of $p_t$, we have:
	\begin{eqnarray}
	p_{t+1} &=& \beta p_t - \frac{\beta^2}{1-\beta} g_t.
	\label{iq_p_func}
	\end{eqnarray}
	According to Eq.~(\ref{iq_p_func}) and $p_{0} = 0$, we know that:
	\begin{eqnarray}
	p_{t}  &= & \beta p_{t-1} - \frac{\beta^2}{1-\beta} g_{t-1} \nonumber \\
	& = & - \frac{\beta^2}{1-\beta} \sum\limits_{j=0}^{t-1}  \beta^{t-1-j} g_j \nonumber \\
	&= & - \frac{\beta^2}{1-\beta} \sum\limits_{j=0}^{t-1}  \beta^{j} g_{t-1-j}.
	\end{eqnarray}
	Let $\Gamma_{t-1} = \sum\limits_{j=0}^{t-1} \beta^j$, we have:
	\begin{eqnarray}
	\mathbb{E} \| p_t \|_2^2 &= & \frac{\beta^4}{(1-\beta)^2}  	\mathbb{E} \left\| \sum\limits_{j=0}^{t-1}  \beta^{j} g_{t-1-j}  \right\|_2^2 \nonumber \\
	& =  &  \frac{\beta^4\Gamma_{t-1}^2}{(1-\beta)^2 }  	\mathbb{E} \left\| \sum\limits_{j=0}^{t-1}  \frac{\beta^{j}}{\Gamma_{t-1}} g_{t-1-j}  \right\|_2^2 \nonumber \\
	&\leq & \frac{\beta^4\Gamma_{t-1}^2}{(1-\beta)^2 }   \sum\limits_{j=0}^{t-1}  \frac{\beta^{j}}{\Gamma_{t-1}} 	\mathbb{E} \left\|g_{t-1-j}   \right\|_2^2 \nonumber \\
	&= & \frac{\beta^4\Gamma_{t-1}}{(1-\beta)^2 }   \sum\limits_{j=0}^{t-1} {\beta^{j}}	\mathbb{E} \left\|g_{t-1-j}  \right\|_2^2,
	\label{iq_3001}
	\end{eqnarray}
	where the inequality is from the convexity of $\|\|_2^2$.  We can get the upper bound of $\mathbb{E}  \left\|g_{t}  \right\|_2^2$ as follows:
	\begin{eqnarray}
	\mathbb{E} \left\| g_{t}  \right\|_2^2 & = & \mathbb{E} \left\| \sum\limits_{k=1}^K  \frac{\gamma_k }{B} \sum\limits_{i \in I_t}  \nabla_k f_i(w_t) \right\|_2^2 \nonumber \\
	& = &  \sum\limits_{k=1}^K \gamma_k^2 \mathbb{E} \left\|    \frac{1}{B} \sum\limits_{i \in I_t}  \nabla_k f_i(w_t) - \nabla_k f(w_t ) + \nabla_k f(w_t) \right\|_2^2 \nonumber \\
	&= &  \sum\limits_{k=1}^K \gamma_k^2 \mathbb{E} \left\|    \frac{1}{B} \sum\limits_{i \in I_t}  \nabla_k f_i(w_t) - \nabla_k f(w_t )  \right\|_2^2 +   \sum\limits_{k=1}^K \gamma_k^2 \mathbb{E} \left\|   \nabla_k f(w_t) \right\|_2^2 \nonumber \\
	&\leq &   \sum\limits_{k=1}^K  \frac{M\gamma_k^2 }{B}   +  \sum\limits_{k=1}^K   \left( \gamma_k^2 +  \frac{M_k\gamma_k^2 }{B} \right)   \mathbb{E} \left\|   \nabla_k f(w_t) \right\|_2^2,
	\label{iq_3002}
	\end{eqnarray}
	where the third equality follows from $\mathbb{E}\left<  \frac{1}{B} \sum\limits_{i \in |I_t|}  \nabla_k f_i(w_t) - \nabla_k f(w_t ), \nabla_k f(w_t)  \right>=0$ and the last inequality follows from Assumption \ref{ass_bd}. Combining inequalities (\ref{iq_3001}) and (\ref{iq_3002}), we have the upper bound of $ \mathbb{E} \left\|(p_{t})_k  \right\|_2^2$ as follows:
	\begin{eqnarray}
	\mathbb{E} \left\|p_{t} \right\|_2^2 &\leq & \sum\limits_{k=1}^K  \frac{\beta^4 \gamma_{k}^2 \Gamma_{t-1}}{(1-\beta)^2 }  \left(  \frac{M}{B}  \sum\limits_{j=0}^{t-1} {\beta^{j}}  + \left( 1+ \frac{M_k}{B}\right) \sum\limits_{j=0}^{t-1} {\beta^{j}}  \mathbb{E} \left\|   \nabla_k f(w_{t-1-j}) \right\|_2^2 \right)    \nonumber \\
	& \leq  &  \sum\limits_{k=1}^K   \frac{\beta^4\gamma_{k}^2 M}{(1-\beta)^4 B } +    \sum\limits_{k=1}^K\left( 1+ \frac{M_k}{B}\right) \frac{\beta^4 \gamma_{k}^2 }{(1-\beta)^3 }  \sum\limits_{j=0}^{t-1} {\beta^{j}}   \mathbb{E} \left\|   \nabla_k f(w_{t-1-j}) \right\|_2^2,
	\label{iq_3003}
	\end{eqnarray}
	where the last inequality follows from $\Gamma_{t-1}=\sum\limits_{j=0}^{t-1} {\beta^{j}}  = \frac{1- \beta^t}{1-\beta} \leq \frac{1}{1-\beta} $. Summing inequality (\ref{iq_3003}) from $t=0$ to $T-1$, we have:
	\begin{eqnarray}
	\sum\limits_{t=0}^{T-1}  \mathbb{E} \left\|p_{t}  \right\|_2^2& \leq  &    \sum\limits_{k=1}^K \frac{\beta^4\gamma_{k}^2MT}{(1-\beta)^4 B }  +  \sum\limits_{k=1}^K  \left( 1+ \frac{M_k}{B}\right) \frac{\beta^4\gamma_{k}^2}{(1-\beta)^3 }  \sum\limits_{t=0}^{T-1} \sum\limits_{j=0}^{t-1} {\beta^{j}}   \mathbb{E} \left\|   \nabla_k f(w_{t-1-j}) \right\|_2^2 \nonumber \\
	& =  &   \sum\limits_{k=1}^K \frac{\beta^4\gamma_{k}^2MT}{(1-\beta)^4 B }   + \sum\limits_{k=1}^K \left( 1+ \frac{M_k}{B}\right)   \frac{\beta^4\gamma_{k}^2}{(1-\beta)^3 }  \sum\limits_{t=0}^{T-1} \mathbb{E} \left\|   \nabla_k f(w_{t}) \right\|_2^2 \sum\limits_{j=t}^{T-1}  {\beta^{T-1-j}}  \nonumber \\
	&\leq &  \sum\limits_{k=1}^K \frac{\beta^4\gamma_{k}^2MT}{(1-\beta)^4 B }   +  \sum\limits_{k=1}^K  \left( 1+ \frac{M_k}{B}\right) \frac{\beta^4\gamma_{k}^2}{(1-\beta)^4 } \sum\limits_{t=0}^{T-1}  \mathbb{E} \left\|   \nabla_k f(w_{t}) \right\|_2^2,
	\label{iq_6002}
	\end{eqnarray}
	where the last inequality follows from $\sum\limits_{j=t}^{T-1}  {\beta^{T-1-j}} \leq \frac{1}{1-\beta}$ for any $t \in \{0, 1, ...,T-1\}$.
	
	$\hfill\blacksquare$
\end{proof}

\textbf{Proof of Theorem \ref{them2}}
\begin{proof}
	Following Lemma \ref{lem2_1} and summing  inequality (\ref{iq_5005}) from $t=0$ to $T-1$, we have:
	\begin{eqnarray} 
	f_{\inf}  &\leq & f(w_0) - \sum\limits_{k=1}^K \frac{\gamma_{k}}{2(1-\beta)}    \bigg( 1- \frac{L_{k} \gamma_{k} }{1-\beta} - \frac{L_{k} \gamma_{k}M_{k} }{(1-\beta)B}  - \frac{2L_{g}^2 \gamma_{k}    M_{k}  }{(1-\beta)^2  KB}  \sum\limits_{k=1}^{K}   \gamma_{k}  -   \frac{2L_{g}^2 \gamma_{k} }{(1-\beta)^2K}  \sum\limits_{k=1}^{K}   \gamma_{k}    \bigg) \sum\limits_{t=0}^{T-1} \mathbb{E} \left\|\nabla_{k} f(w_t)  \right\|_2^2  \nonumber \\
	&&  +  \frac{ M L_{g}^2 T }{(1-\beta)^3  KB}  \sum\limits_{k=1}^{K}  \gamma_{k}   \sum\limits_{k=1}^{K}   \gamma_{k}^2 +  \sum\limits_{k=1}^{K}  \frac{L_{k} \gamma_{k}^2 M T}{2(1-\beta)^2 B} +    \sum\limits_{k=1}^{K}  \frac{ L_{g}^2 \gamma_{k}}{(1-\beta)K}  \sum\limits_{t=0}^{T-1}  \mathbb{E} \left\|p_{t}  \right\|_2^2.
	\label{iq_6001}
	\end{eqnarray}
	where we have  $z_t = w_0$ and $f(z_T) \geq f_{\inf}$.
	According to Lemma \ref{lem2_2} and inputting (\ref{iq_6002}) in inequality (\ref{iq_6001}), the following inequality is satisfied that:
	\begin{eqnarray}
	f_{\inf}  &\leq & f(w_0) - \sum\limits_{k=1}^K \frac{\gamma_{k}}{2(1-\beta)}    \bigg( 1- \frac{L_{k} \gamma_{k} }{1-\beta} - \frac{L_{k} \gamma_{k}M_{k} }{(1-\beta)B} - \frac{2 L_g^2\beta^4 \gamma_{k} \sum\limits_{k=1}^K\gamma_{k}}{(1-\beta)^4K } - \frac{2L_g^2\beta^4 \gamma_{k} \sum\limits_{k=1}^K\gamma_{k} M_k}{(1-\beta)^4KB }  \nonumber \\
	&& - \frac{2L_{g}^2 \gamma_{k}    M_{k}  }{(1-\beta)^2  KB}  \sum\limits_{k=1}^{K}   \gamma_{k}  -   \frac{2L_{g}^2 \gamma_{k} }{(1-\beta)^2K}  \sum\limits_{k=1}^{K}   \gamma_{k}    \bigg) \sum\limits_{t=0}^{T-1} \mathbb{E} \left\|\nabla_{k} f(w_t)  \right\|_2^2  \nonumber \\
	&&  +  \frac{ M L_{g}^2 T }{(1-\beta)^3  KB}  \sum\limits_{k=1}^{K}  \gamma_{k}   \sum\limits_{k=1}^{K}   \gamma_{k}^2 +  \sum\limits_{k=1}^{K}  \frac{L_{k} \gamma_{k}^2 M T}{2(1-\beta)^2 B} +    \sum\limits_{k=1}^{K}  \frac{ L_{g}^2 \gamma_{k} \beta^4 MT}{(1-\beta)^5KB}  \sum\limits_{k=1}^K {\gamma_{k}^2}.
	\label{iq_6003}
	\end{eqnarray}
	Defining $\kappa_k = \frac{L_g}{L_k} \leq \kappa $, if $\gamma_k$ satisfies following inequalities:
	\begin{eqnarray}
	\frac{L_k \gamma_{k}}{1-\beta } & \leq & \frac{1}{8}, \\
	\frac{L_k \gamma_{k}M_k}{(1-\beta) B} & \leq & \frac{1}{8}, \\
	\frac{2L_g^2 \beta^4  \gamma_{k} \sum\limits_{k=1}^K \gamma_{k}}{(1-\beta)^4K} &\leq & \frac{1}{8}, \\
	\frac{2L_g^2 \beta^4 \gamma_{k} \sum\limits_{k=1}^K M_k\gamma_{k}}{(1-\beta)^4KB} &\leq & \frac{1}{8}, \\
	\frac{2L_{g}^2 \gamma_{k}    M_{k}  }{(1-\beta)^2  KB}  \sum\limits_{k=1}^{K}   \gamma_{k}  &\leq & \frac{1}{8}, \\
	\frac{2L_{g}^2 \gamma_{k} }{(1-\beta)^2K}  \sum\limits_{k=1}^{K}   \gamma_{k}  &\leq & \frac{1}{8},
	\end{eqnarray}
	which is equivalent to $\gamma_k \leq \min\left\{\frac{(1-\beta)}{8L_k}, \frac{(1-\beta)B}{8 L_kM_k} \right\}$ and : 
	$
	\frac{1}{K}	\sum\limits_{k=1}^K \gamma_k \leq \min\left\{ \frac{(1-\beta)^2}{4\beta^2 L_g }, \frac{(1-\beta)^2 \sqrt{B}}{4\beta^2 L_g \sqrt{ M_g} },  \frac{(1-\beta) \sqrt{B} }{4L_g  \sqrt{M_g}  }  , \frac{(1-\beta)}{4 L_g}   \right\}$.
	It holds that:
	\begin{eqnarray}
	\sum\limits_{t=0}^{T-1} 	\sum\limits_{k=1}^K    \frac{\gamma_{k}}{8(1-\beta)}  \mathbb{E} \left\|\nabla_{k} f(w_t)  \right\|_2^2 &\leq  & f(w_0) - f_{\inf}  + \frac{MT}{(1-\beta)^2B} \bigg(\frac{1}{2} \sum\limits_{k=1}^K L_k \gamma_k^2      \nonumber \\
	&&  +  \frac{L_g^2}{(1-\beta)K} \sum\limits_{k=1}^K \gamma_k \sum\limits_{k=1}^K \gamma_k^2  +  \frac{\beta^4 L_g^2}{ (1-\beta)^3K } \sum\limits_{k=1}^K \gamma_k \sum\limits_{k=1}^K \gamma_k^2   \bigg) \nonumber \\
	&\leq & f(w_0) - f_{\inf}  \nonumber \\
	&&+ \frac{MT}{(1-\beta)^2B} \bigg(\frac{1}{2} \sum\limits_{k=1}^K L_k \gamma_k^2     +  \frac{1}{4}  \sum\limits_{k=1}^K L_g \gamma_k^2   +  \frac{1}{4(1-\beta)}  \sum\limits_{k=1}^KL_g \gamma_k^2   \bigg). 
	\end{eqnarray}
	Let $\gamma_{k} = \frac{\gamma}{L_k}$ and dividing both sides by $\sum\limits_{k=1}^K\frac{T}{8(1-\beta)}\gamma_{k}$, it holds that:
	\begin{eqnarray}
	\frac{1}{T}  \sum\limits_{t=0}^{T-1} \sum\limits_{k=1}^K q_k  \mathbb{E} \left\|\nabla_{k} f(w_t)  \right\|_2^2 &\leq&  \frac{8(1-\beta)(f(w_0) - f_{\inf} ) }{T \gamma\sum\limits_{k=1}^K \frac{1}{L_k}} \nonumber \\
	&&   + \frac{M \gamma}{(1-\beta)B} \left(4 +   2\kappa +  \frac{2 \kappa }{ (1-\beta) }  \right).
	\end{eqnarray}
	where $q_k = \frac{\frac{1}{L_k}}{\sum\limits_{k=1}^K \frac{1}{L_k} }$.
	
	$\hfill\blacksquare$
\end{proof}

\textbf{Proof of Corollary \ref{cor2_1}}
\begin{proof}
	suppose $\frac{1-\beta}{8L_k}$ dominates the upper bound of $\gamma_k$, if:
	\begin{eqnarray}
	\gamma & = & \min \left\{ \frac{1-\beta}{8}, \sqrt{ \frac{B (f(w_0)-f_{\inf} )  }{TM \sum\limits_{k=1}^K \frac{1}{L_k} }  } \right\}
	\end{eqnarray}
	we have:
	\begin{eqnarray}
	\min\limits_{t \in \{0,...,T-1\}} \sum\limits_{k=1}^K q_k \mathbb{E} \left\|   \nabla_k f(w_t)  \right\|_2^2  &\leq &  \frac{8(f(w_0)-f_{\inf} )}{T \sum\limits_{k=1}^K \frac{1}{L_k} } \max\left\{\frac{8}{1-\beta}, \sqrt{ \frac{TM \sum\limits_{k=1}^K \frac{1}{L_k} }{B (f(w_0)-f_{\inf} )  }  }  \right\}  \nonumber \\
	&& + \frac{M }{(1-\beta)B} \left(4 +   2\kappa +  \frac{2 \kappa }{ (1-\beta) }  \right) \sqrt{ \frac{B (f(w_0)-f_{\inf} )  }{TM \sum\limits_{k=1}^K \frac{1}{L_k} }  } \nonumber \\
	&\leq &  \frac{64(f(w_0)-f_{\inf} )}{(1-\beta)T \sum\limits_{k=1}^K \frac{1}{L_k} }  \nonumber\\
	&& + \left( 8 + \frac{ 1}{(1-\beta)} \left(4 +   2\kappa +  \frac{2 \kappa }{ (1-\beta) }  \right) \right) \sqrt{ \frac{M (f(w_0)-f_{\inf} )  }{TB \sum\limits_{k=1}^K \frac{1}{L_k}  }  }. 
	\end{eqnarray}
	where the left side follows from $	\min\limits_{t \in \{0,...,T-1\}} \sum\limits_{k=1}^K q_k  \mathbb{E} \left\|\nabla_{k} f(w_t)  \right\|_2^2 \leq  		\frac{1}{T}  \sum\limits_{t=0}^{T-1} \sum\limits_{k=1}^K q_k  \mathbb{E} \left\|\nabla_{k} f(w_t)  \right\|_2^2 $.
	we complete the proof.
	
	$\hfill\blacksquare$
\end{proof}

\section{Neural Network Architectures}

\begin{figure*}[h]
	\centering
	\begin{subfigure}[b]{0.45\textwidth}
		\centering
		\includegraphics[width=1.9in]{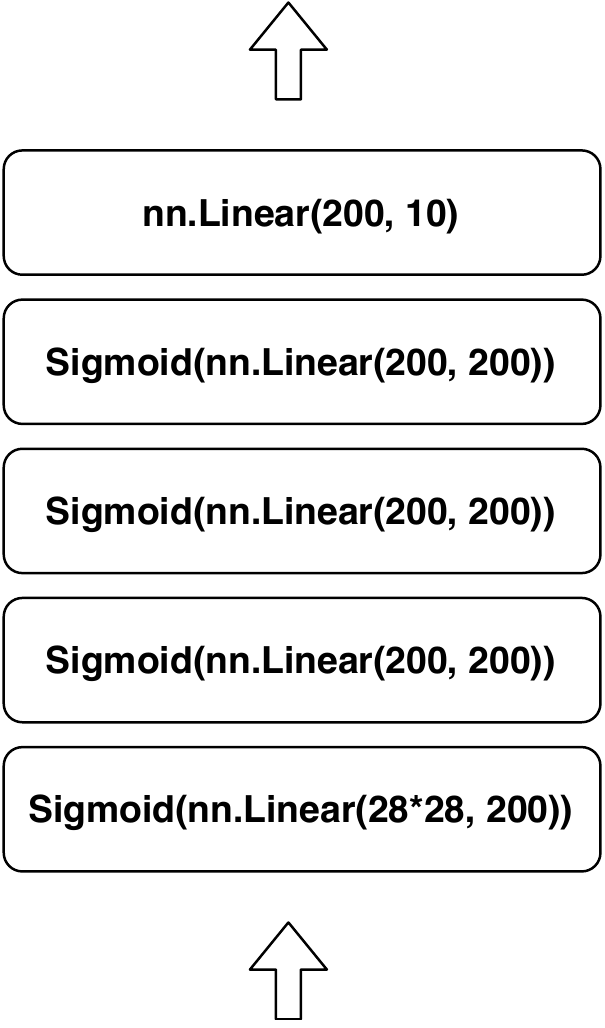}
		\caption{5-layer FCN}
	\end{subfigure}
	\begin{subfigure}[b]{0.45\textwidth}
		\centering
		\includegraphics[width=1.9in]{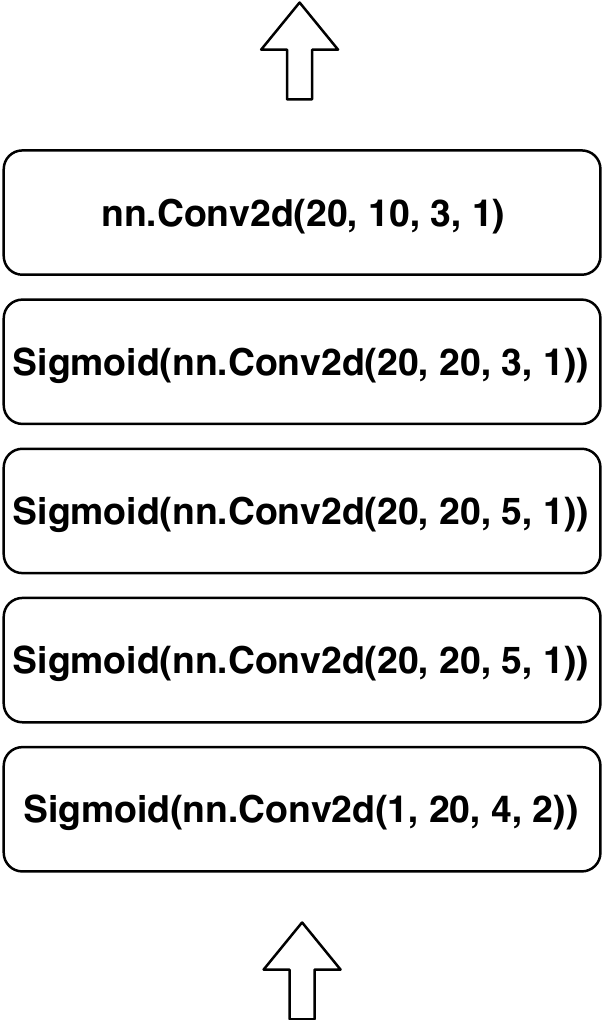}
		\caption{5-layer CNN}
	\end{subfigure}
	\label{fig::archs}
\end{figure*}

\end{document}